\newcommand{\multilines}[1]{%
	\begin{tabularx}{\dimexpr\linewidth-\ALG@thistlm}[t]{@{}X@{}}
		#1
	\end{tabularx}
}
\DeclareMathOperator*{\argmin}{arg\,min}
\DeclarePairedDelimiterX{\norm}[1]{\lVert}{\rVert}{#1}
\DeclarePairedDelimiterX{\abs}[1]{\lvert}{\rvert}{#1}
\DeclarePairedDelimiterX{\innProd}[1]{\langle}{\rangle}{#1}
\newcommand{\Ex}[1]{\ensuremath{\mathbb{E} \left[ #1 \right]}}
\newcommand{\Avg}[2]{\ensuremath{\mathbb{E}_{#1} \left[ #2 \right]}}
\newcommand{\SumNoLim}[2]{\ensuremath{\sum\nolimits_{#1}^{#2}}}
\newcommand{\SumLim}[2]{\ensuremath{\sum\limits_{#1}^{#2}}}
\newcommand{\defeq}{\vcentcolon=}
\newcommand{\eqdef}{=\vcentcolon}
\newcommand*\mean[1]{\bar{#1}}
\newcommand{\nbigP}[1]{\ensuremath{(#1)}}
\newcommand{\nbigS}[1]{\ensuremath{[#1]}}
\newcommand{\bigP}[1]{\ensuremath{\bigl(#1\bigr)}}
\newcommand{\bigS}[1]{\ensuremath{\bigl[#1\bigr]}}
\newcommand{\biggP}[1]{\ensuremath{\biggl(#1\biggr)}}
\newcommand{\biggS}[1]{\ensuremath{\biggl[#1\biggr]}}
\newcommand{\BigP}[1]{\ensuremath{\Bigl(#1\Bigr)}}
\newcommand{\BigS}[1]{\ensuremath{\Bigl[#1\Bigr]}}
\newcommand{\BigC}[1]{\ensuremath{\Bigl\{#1\Bigr\}}}
\newcommand{\tbf}[1]{\textbf{#1}}% text bold
\theoremstyle{plain}
\newtheorem{theorem}{Theorem}
\newtheorem{assumption}{Assumption}
\newtheorem{corollary}{Corollary}
\newtheorem{lemma}{Lemma}
\newtheorem{proposition}{Proposition}
\theoremstyle{definition}
\newtheorem{remark}{Remark}
\newcommand{\OurAlg}{\ensuremath{\texttt{pFedMe}}\xspace}
\title{Personalized Federated Learning with Moreau Envelopes}
\author[1]{\textbf{Canh T. Dinh}}
\author[1]{\textbf{Nguyen H. Tran}}
\author[1,2]{\textbf{Tuan Dung Nguyen}}
\affil[1]{The University of Sydney, Australia \authorcr
  \ \texttt{tdin6081@uni.sydney.edu.au, nguyen.tran@sydney.edu.au}}
\affil[2]{The University of Melbourne, Australia \authorcr
  \ \texttt{tuandungn@unimelb.edu.au}}
\begin{document}

\maketitle

\begin{abstract} 
  %Federated Learning (FL) is a distributed machine learning technique that exploits the local data and computation of end users in a network of devices, effectively building a strong global model while at the same time maintaining the privacy of users' data. However, FL comes with a number of challenges, the most prominent of which include statistical heterogeneity among users, which can make the global model, which is generally the average of local models, not capable of delivering good performance on each user's task, even compared to local models. This contradicts the initial purpose of FL which presupposes that the global model, trained on many users' data, will be better at generalizing than that trained on less data. Therefore, the need for creating personalized models from the global model has emerged from FL. \textcolor{blue}{We propose a personalized FL algorithm using the Moreau envelope function on the local level to go further from traditional FL and enhance the performance of local models when used in the end users' context.}
  
  Federated learning (FL) is a decentralized and privacy-preserving machine learning technique in which a group of clients collaborate with a server to learn a  global model without sharing clients' data. One challenge associated with FL is statistical diversity among clients, which restricts the global model from delivering good performance on each client's task. To address this, we propose an algorithm for personalized FL  (\OurAlg) using Moreau envelopes  as clients' regularized loss functions, which help decouple personalized model optimization from the global model learning in a  bi-level problem stylized for personalized FL.  Theoretically, we show that  \OurAlg's convergence rate is state-of-the-art: achieving quadratic speedup for strongly convex and sublinear speedup of order 2/3 for smooth nonconvex objectives. Experimentally, we verify that  \OurAlg excels at empirical performance compared with the vanilla FedAvg and Per-FedAvg, a meta-learning based personalized FL algorithm.

  % Therefore, the need for creating personalized models from the global model has emerged from FL. We propose a personalized FL algorithm that addresses this challenge, going further from traditional FL to better the performance of local models, when used in the end users' context.
\end{abstract}

\section{Introduction}

The abundance of data generated in a massive number of hand-held devices these days has stimulated the development of  Federated learning (FL) \cite{mcmahan_communication-efficient_2017}. The setting of FL is a network of clients  connected to a server, and its goal is to build a global model from clients' data in a privacy-preserving and communication-efficient way. The current techniques that attempt to fulfill this goal mostly follow three steps: (i) at each communication iteration, the server sends the current global model to clients; (ii) the clients update their local models using their local data; (iii) the server  collects the latest local models  from a subset of sampled  clients in order to update a new global model, repeated until convergence \cite{mcmahan_communication-efficient_2017, mohri_agnostic_2019, karimireddy_scaffold_2020, pillutla_robust_2019}.

Despite its  advantages of data privacy and communication reduction, FL faces a main challenge that affects its performance and convergence rate: statistical diversity, which means that data distributions among clients are distinct (i.e., non-i.i.d.). Thus, the global model, which is trained using these non-i.i.d. data, is hardly well-generalized on each client's data.  This particular behaviour has been reported in \cite{li_fedmd_2019, deng_adaptive_2020}, which showed that when the statistical diversity increases, generalization errors of the global model on clients' local data also increase significantly. On the other hand,  individual learning without FL (i.e., no client collaboration) will also have large generalization error due to insufficient data. These raise the question: \emph{How can we leverage the global model in FL to find a ``personalized model'' that is stylized for each client's data? } %It has been realized that  personalized FL models play a significant role in several business applications of healthcare, finance, and AI services \cite{li_fedmd_2019}. 

Motivated by critical roles of  personalized models  in several business applications of healthcare, finance, and AI services \cite{li_fedmd_2019}, we address this question by proposing a new FL scheme for personalization, which minimizes the Moreau envelopes \cite{moreau_proprietes_1963} of clients' loss functions. With this scheme, clients not only contribute to building the ``reference'' global model as in the standard FL, but also leverage the reference model to optimize their personalized models w.r.t. local data. Geometrically, the global model in this scheme can be considered as a ``central point'' where all clients agree to meet, and personalized models are the points in different directions that clients follow according to their heterogeneous data distributions. 

% The current papers that propose personalized FL algorithms include \cite{jiang_improving_2019, fallah_personalized_2020, arivazhagan_federated_2019, deng_adaptive_2020}. However, the optimization phase in MAML involves calculating the Hessian matrix of the loss function during back-propagation, which is costly and hence often approximated to reduce computation \cite{finn_model-agnostic_2017, fallah_convergence_2020}. In addition, integrating MAML into FL requires all local optimizers to be the same (as MAML is a gradient-based method), and can lead the personalized models to vanish from their meta-models over time.

\textbf{Our key contributions} in this work are summarized as follows. First, we formulate a new bi-level optimization problem designed for personalized FL (\OurAlg) by using the Moreau envelope as a regularized loss function. The bi-level structure of \OurAlg has a key advantage: decoupling the process of optimizing personalized models from learning the global model. Thus, \OurAlg  updates the global model similarly to the standard FL algorithm such as FedAvg \cite{mcmahan_communication-efficient_2017}, yet parallelly optimizes the personalized models with low complexity. 

Second, we exploit the convexity-preserving and smoothness-enabled properties of the Moreau envelopes to facilitate the convergence analysis of \OurAlg, which characterizes both client-sampling and client-drift errors:  two notorious issues in FL \cite{karimireddy_scaffold_2020}. With carefully tuned hyperparameters, \OurAlg can obtain the state-of-the-art quadratic speedup  (resp. sublinear speedup of order $2/3$), compared with the existing works with linear speedup  (resp. sublinear speedup of order $1/2$), for strongly convex  (resp. smooth nonconvex) objective.

Finally, we empirically evaluate the performance of \OurAlg using both real and synthetic datasets that capture the statistical diversity of clients' data.  We show that \OurAlg outperforms the vanilla FedAvg and a meta-learning based personalized FL algorithm Per-FedAvg \cite{fallah_personalized_2020}  in terms of convergence rate and local accuracy.

%\begin{itemize}
%	\item By using the Moreau envelope \cite{lin_catalyst_2018} as a regularized loss function for the minimization problem with respect to a personalized model for each client, we decouple the personalized model from the global model to formulate a new bi-level optimization problem designed for personalized FL. The benefits of this approach are: (i) we can update the global model similarly to FedAvg, and find the personalized models in parallel with low complexity; and (ii) the Moreau envelope function is shown to have useful properties for algorithm design and convergence analysis such as maintaining the strong convexity of the original loss function, and being smooth even when the original loss function is not.
%	\item We provide the convergence analysis of \OurAlg that can address both client sampling and client drift errors. With carefully chosen parameters, we can obtain quadratic speed-up and sub-linear speed-up of order $2/3$, while the state-of-the-art obtains linear speed-up and sub-linear speed-up of order $1/2$, in strongly convex and nonconvex cases, respectively. 
%	\item We empirically evaluate the performance of \OurAlg using both real and synthetic data sets to capture the statistical heterogeneity in users' data. We also highlight that \OurAlg can improve from a relevant current personalized FL algorithm, Per-FedAvg \cite{fallah_personalized_2020} in terms of convergence rate and local accuracy.
%\end{itemize}

\section{Related Work}

\textbf{FL and challenges.} One of the first FL algorithms is FedAvg \cite{mcmahan_communication-efficient_2017}, which uses local SGD updates and builds a global model from a subset of clients with non-i.i.d. data. Subsequently, one-shot FL \cite{guha_one-shot_2019} allows the global model to learn in one single round of communication.  To address the limitations on communications in a FL network, \cite{reisizadeh_fedpaq_2020, dai_hyper-sphere_2019} introduced quantization methods, while \cite{wang_cooperative_2019, lin_dont_2020, stich_local_2019} proposed performing multiple local optimization rounds before sending the local models to the server. In addition, the problem of statistical diversity has been addressed in \cite{li_federated_2019, zhao_federated_2018, haddadpour_convergence_2019, li_convergence_2020, khaled_tighter_2020, smith_federated_2018}. Preserving privacy in FL has been studied in \cite{duchi_privacy_2014, mcmahan_learning_2018, zhu_federated_2020, agarwal_cpsgd_nodate, li_preserving_2020}.

% Our bi-level optimization for the global and personalized models has similar structure to bi-level meta-learning, where meta-parameters and task-specific parameters are learned in the outer and inner loops, respectively, to find the initial parameters optimized for fast adaption with few examples \cite{finn_model-agnostic_2017}.

% This algorithm theoretically involves computing the Hessian of the loss functions, which is computationally costly: the methods for handling this term include ignoring it in computation

% This method benefits from data from (nearly) all users while giving emphasis on the local model to prevent statistical heterogeneity from degrading the performance of the global model on a local task.

% Specifically, each selected user in a round maintains three models - a copy of the global model, its local model, and the mixed model. It will train the global model's copy and its local model using stochastic gradient descent, and assign the new mixed model as a linear combination of these two models
		
\textbf{Personalized FL: mixing models, contextualization, meta-learning, and multi-task learning.} Multiple approaches have been proposed to achieve personalization in FL. One such approach is \emph{mixing} the global and local models. \cite{hanzely_federated_2020} combined the optimization of the local and global models in its L2GD algorithm. \cite{mansour_three_2020} introduced three personalization approaches to: user clustering, data interpolation, and model interpolation. While the first two approaches need meta-features from all clients that make them not feasible in FL due to privacy concern, the last approach was used in \cite{deng_adaptive_2020} to create an adaptive personalized federated learning (APFL) algorithm, which attempted to mix a user's local model with the global model. One personalization method used in neural networks is FedPer \cite{arivazhagan_federated_2019}, in which a network is divided into base and personalized layers, and while the base layers are trained by the server, both types of layers will be trained by users to create a personalized model. Regarding using a model in different \emph{contexts}, in the next-character prediction task in \cite{hard_federated_2019}, the requirement to predict differently among devices raises a need to inspect more features about the context of client devices during training, which was studied in \cite{wang_federated_2019}. \cite{vanhaesebrouck_decentralized_2017} achieves personalization on each user in a fully decentralized network using asynchronous gossip algorithms with assumptions on network topology and similarity between users. The concept of personalization can also be linked to \emph{meta-learning}. Per-FedAvg \cite{fallah_personalized_2020}, influenced by Model-Agnostic Meta-Learning (MAML) \cite{finn_model-agnostic_2017}, built an initial meta-model that can be updated effectively after one more gradient descent step. During meta-optimization, however, MAML theoretically requires computing the Hessian term, which is computationally prohibitive; therefore, several works including \cite{finn_model-agnostic_2017, nichol_first-order_2018, fallah_convergence_2020} attempted to approximate the Hessian matrix. \cite{khodak_adaptive_2019} based its framework, ARUBA, on online convex optimization and meta-learning, which can be integrated into FL to improve personalization. \cite{jiang_improving_2019} discovered that FedAvg can be interpreted as meta-learning and proposed combining FedAvg with Reptile \cite{nichol_first-order_2018} for FL personalization. The application of federated meta-learning in recommender systems was studied in \cite{chen_federated_2018}. Finally, \emph{multi-task learning} can be used for personalization: \cite{smith_federated_2018} introduced a federated multi-task framework called MOCHA, addressing both systems and statistical heterogeneity. 
For more details about FL, its challenges, and personalization approaches, we refer the readers to comprehensive surveys in \cite{li_federated_2019-1, kairouz_advances_2019}.

\section{Personalized Federated Learning with Moreau Envelopes (\OurAlg)}
\subsection{\OurAlg: Problem Formulation}
In  conventional FL, there are $N$ clients communicating with a server to solve the following problem:
\begin{align}
\min_{w \in \mathbb{R}^d}  \BigC{f(w) \defeq \frac{1}{N} \sum\nolimits_{i=1}^{N} f_i(w)} \label{E:FedAvg}
\end{align}
to find a \emph{global model} $w$. The function $f_i:\mathbb{R}^d \rightarrow \mathbb{R}$, $i = 1, \dots, N$, denotes the expected loss over the data distribution of the client $i$:
\begin{align*}
f_i (w) =  \mathbb{E}_{\xi_i} \bigS{ \tilde{f}_i (w; {\xi}_i)}, 
\end{align*}
where $\xi_i$ is a random data sample drawn according to the distribution of client $i$ and  $\tilde{f}_i (w; {\xi}_i)$ is a loss function corresponding to this sample and  $w$. In FL, since clients' data possibly come from different environments, contexts, and applications, clients can have non-i.i.d. data distributions, i.e., the distributions of $\xi_i$ and $\xi_j, i \neq j$, are distinct. 

Instead of solving the traditional FL problem \eqref{E:FedAvg}, we take a different approach by using a regularized loss function with $l_2$-norm for each client as follows
\begin{align}
f_i(\theta_i) + \frac{\lambda}{2} \norm{\theta_i - w}^2, 
\end{align}
where $\theta_i$ denotes the \emph{personalized model} of client $i$  and  $\lambda$ is a regularization parameter that controls the strength of $w$ to the personalized  model. While large $\lambda$ can benefit clients with unreliable data from the abundant data aggregation, small $\lambda$  helps clients with sufficient useful data prioritize personalization. Note that $\lambda \in (0, \infty)$ to avoid extreme cases of $\lambda = 0$, i.e., no FL,  or $\lambda = \infty$, i.e., no personalized FL. Overall, the idea is allowing clients to pursue their own models with different directions, but not to stay far away from the ``reference point'' $w$, to which every client contributes. Based on this, the personalized FL can be formulated as a bi-level problem: 
\begin{align*}
%\min_{w \in \mathbb{R}^d} \biggC{F(w) &\defeq \frac{1}{N} \sum_{i=1}^{N} \BigS{F_i(w) \defeq \min_{\theta_i \in \mathbb{R}^d} \BigC{f_i(\theta_i) + \frac{\lambda}{2} \norm{\theta_i - w}^2 }}}  \label{E:prob} 
\OurAlg: \; \min_{w \in \mathbb{R}^d} \BigC{F(w) &\defeq \frac{1}{N} \sum\nolimits_{i=1}^{N} F_i(w) }, \text{ where } F_i(w) = \min_{\theta_i \in \mathbb{R}^d} \BigC{f_i(\theta_i) + \frac{\lambda}{2} \norm{\theta_i - w}^2 }.   %\label{E:prob}
\end{align*}
In \OurAlg, while $w$ is found by exploiting the data aggregation from multiple clients at the outer level, $\theta_i$ is optimized with respect to (w.r.t) client $i$'s data distribution and is maintained a bounded distance from $w$ at the inner level. The definition of $F_i(w)$ is the well-known Moreau envelope, which facilitates several learning algorithm designs \cite{lin_catalyst_2018, zhou_efficient_2019}. The optimal personalized model, which is the unique solution to the inner problem of \OurAlg and also known as the proximal operator in the literature, is defined as follows:
\begin{align}
\hat{\theta}_i (w) \defeq \text{prox}_{f_i/\lambda} (w) = \argmin_{\theta_i \in \mathbb{R}^d} \BigC{f_i(\theta_i) + \frac{\lambda}{2} \norm{\theta_i - w}^2 }.  \label{E:prox}
\end{align}
For comparison, we consider Per-FedAvg \cite{fallah_personalized_2020}, which arguably has the closest formulation to \OurAlg:
\begin{align}
\min_{w \in \mathbb{R}^d} \BigC{F(w) \defeq \frac{1}{N} \sum\nolimits_{i=1}^{N}  f_i \bigP{\theta_i(w)}}, \text{ where } \theta_i(w) = w - \alpha \nabla f_i (w). 
\end{align}
Based on the MAML framework \cite{finn_model-agnostic_2017}, Per-FedAvg aims  to find a global model $w$ which client $i$ can use as an \emph{initialization} to perform one more step of gradient update (with step size $\alpha$) w.r.t its own loss function to obtain its personalized model $\theta_i(w)$. 

Compared to Per-FedAvg, our problem has a similar meaning of $w$ as a ``meta-model'', but instead of using $w$ as the initialization, we parallelly pursue both the personalized and global models by solving a bi-level problem, which has several benefits.  First,  while Per-FedAvg is optimized for one-step gradient update for its personalized model,  \OurAlg is agnostic to the inner optimizer, which means \eqref{E:prox} can be solved using any iterative approach with multi-step updates. Second, by re-writing the personalized model update of Per-FedAvg as 
\begin{align}
\theta_i(w) &= w - \alpha \nabla f_i (w) = \argmin_{\theta_i \in \mathbb{R}^d} \BigC{\innProd{\nabla f_i(w), {\theta_i - w}} + \frac{1}{2 \alpha} \norm{\theta_i - w}^2 },
\end{align}
where we use $\innProd{x,y}$ for the inner product of two vectors $x$ and $y$, we can see that apart from the similar regularization term, Per-FedAvg only optimizes the first-order approximation of $f_i$,  whereas \OurAlg directly minimizes $f_i$ in \eqref{E:prox}. Third, Per-FedAvg (or generally several MAML-based methods) requires computing or estimating Hessian matrix, whereas \OurAlg only needs gradient calculation using first-order approach, as will be shown in the next section.  
\begin{assumption}[Strong convexity and smoothness] \label{Asm:0}
	$f_i$ is  either (a) $\mu$-strongly convex or (b) nonconvex and $L$-smooth (i.e., $L$-Lipschitz gradient), respectively, as follows  when  $\forall w, w'$:
	\begin{align*}
	(a)\; &f_i(w)  \geq f_i(w')  + \bigl\langle \nabla f_i (w'), w - w' \bigr\rangle  + \frac{\mu }{2} \norm{w - w'}^2, \\
	(b) \; &\norm{\nabla f_i(w)  - \nabla f_i (w')}  \leq L \norm{w - w'}.
	%\norm{\nabla f_i(w) - \nabla f_i(w')} &\leq  L \norm{w - w'}
	\end{align*}
\end{assumption}
\begin{assumption}[Bounded variance] The variance of stochastic gradients in each client is bounded \label{Asm:1}
	\begin{align*}
	\mathbb{E}_{\xi_i} \bigS{\norm{\nabla \tilde{f}_i(w; \xi_i) - \nabla f_i (w)}^2} &\leq  \gamma_f^2, \quad \forall w. 
	\end{align*}
\end{assumption}
\begin{assumption}[Bounded diversity] The variance of local gradients to global gradient is bounded \label{Asm:2}
	\begin{align*}
	%\frac{1}{N} \SumNoLim{i=1}{N} \norm{\nabla {f}_i(w) - \nabla f(w)}^2 &\leq  \sigma_f^2. 
	 \norm{\nabla {f}_i(w) - \nabla f(w)} &\leq  \sigma_f, \quad \forall i, w. 
	\end{align*}
\end{assumption}
While Assumption~\ref{Asm:0}  is standard for convergence analysis, Assumptions~\ref{Asm:1} and \ref{Asm:2} are widely used in FL context  in which $\gamma_f^2$  and $\sigma_f^2$ quantify the sampling noise  and the diversity of client's data distribution, respectively \cite{li_communication-efficient_2020, fallah_personalized_2020, karimireddy_scaffold_2020, yu_linear_2019}. Note that we avoid using the uniformly bounded gradient assumption, i.e., $\norm{\nabla f_i (w) } \leq G, \forall i,$ which was used in several related works \cite{fallah_personalized_2020, deng_adaptive_2020}. It was shown that this assumption is not satisfied  in the unconstrained strongly convex minimization \cite{nguyen_new_2019, khaled_first_2020}. 

Finally, we review several useful properties of the Moreau envelope such as smoothing and preserving convexity as follows (see the review and proof for the convex case in \cite{lin_catalyst_2018,lemarechal_practical_1997, planiden_strongly_2016} and for nonconvex smooth case in \cite{hoheisel_regularization_2020}, respectively):
\begin{proposition} \label{pro:1}
	If $f_i$ is  convex or nonconvex with $L$-Lipschitz $\nabla f_i$,  then $\nabla F_i$ is $L_F$-smooth with $L_F = \lambda$ (with the condition that  $\lambda > 2L$ for nonconvex $L$-smooth $f_i$), and 
	\begin{align}
	\nabla F_i (w) &= \lambda ( w - \hat{\theta}_i (w) ). \label{E:F_grad}
	\end{align}
	Furthermore, if $f_i$ is $\mu$-strongly convex, then $F_i$ is $\mu_F$-strongly convex with $\mu_F = \frac{\lambda \mu}{\lambda + \mu}$. 
%	\begin{enumerate}[a)]
%		\item (i) If $f_i$ is $\mu$-strongly convex with , then $F_i$ is $\mu_F$-strongly convex with $\mu_F = \frac{\lambda \mu}{\lambda + \mu}$, and 
%		\item (ii) $F_i$ is $L_F$-smooth with $L_F = \lambda$, and its gradient has the following simple form 
%		\begin{align}
%		\nabla F_i (w) &= \lambda ( w - \hat{\theta}_i (w) ). \label{E:F_grad}
%		\end{align}
%	\end{enumerate}
\end{proposition}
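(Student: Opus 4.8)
The plan is to reduce the whole statement to (weak, resp. strong) monotonicity of $\nabla f_i$ composed with the proximal map $\hat\theta_i(\cdot) = \mathrm{prox}_{f_i/\lambda}(\cdot)$ defined in \eqref{E:prox}, and to recover the \emph{sharp} constants $L_F=\lambda$ and $\mu_F = \lambda\mu/(\lambda+\mu)$ by manipulating an exact vector identity instead of splitting terms with the triangle inequality. First I would record that the inner objective $g_w(\theta)\defeq f_i(\theta)+\frac{\lambda}{2}\norm{\theta-w}^2$ is strongly convex: with modulus $\lambda$ when $f_i$ is convex, and with modulus $\lambda-L$ when $f_i$ is nonconvex $L$-smooth (since $L$-smoothness makes $f_i+\frac{L}{2}\norm{\cdot}^2$ convex, i.e.\ $f_i$ is $L$-weakly convex), using $\lambda>L$ here. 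Hence $\hat\theta_i(w)$ is the unique minimizer and is characterized by the stationarity condition $\nabla f_i(\hat\theta_i(w))+\lambda(\hat\theta_i(w)-w)=0$, which in particular already gives $\nabla f_i(\hat\theta_i(w)) = \lambda(w-\hat\theta_i(w))$. To obtain \eqref{E:F_grad} I would sandwich the increment $F_i(w+h)-F_i(w)$: the upper bound uses the feasible (sub-optimal) point $\hat\theta_i(w)$ for the parameter $w+h$, the lower bound uses $g_{w}(\theta)\ge F_i(w)$ for every $\theta$; expanding $\norm{\,\cdot-(w+h)}^2$ and letting $h\to0$ with $\hat\theta_i$ continuous squeezes the difference quotient to $\innProd{\lambda(w-\hat\theta_i(w)),h}$. (This is just Danskin's theorem; for the nonconvex case one may instead cite \cite{hoheisel_regularization_2020}.)

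For Lipschitzness, write $\theta=\hat\theta_i(w)$, $\theta'=\hat\theta_i(w')$ and subtract the stationarity conditions at $w,w'$ to get the key identity $\lambda(w-w') = \bigl(\nabla f_i(\theta)-\nabla f_i(\theta')\bigr)+\lambda(\theta-\theta')$. Pairing it with $\theta-\theta'$ and using monotonicity of $\nabla f_i$ ($\innProd{\nabla f_i(\theta)-\nabla f_i(\theta'),\theta-\theta'}\ge0$ in the convex case, $\ge -L\norm{\theta-\theta'}^2$ in the $L$-smooth case) together with Cauchy--Schwarz yields $\norm{\theta-\theta'}\le\norm{w-w'}$ (convex), resp.\ $\norm{\theta-\theta'}\le\frac{\lambda}{\lambda-L}\norm{w-w'}$ (nonconvex). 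Next, since \eqref{E:F_grad} and stationarity give $\nabla F_i(w)-\nabla F_i(w') = \nabla f_i(\theta)-\nabla f_i(\theta')$, squaring the key identity gives
\[
\lambda^2\norm{w-w'}^2 = \norm{\nabla F_i(w)-\nabla F_i(w')}^2 + 2\lambda\,\innProd{\nabla f_i(\theta)-\nabla f_i(\theta'),\theta-\theta'} + \lambda^2\norm{\theta-\theta'}^2 .
\]
In the convex case the cross term is $\ge0$; in the $L$-smooth case it is $\ge -L\norm{\theta-\theta'}^2$, and $\lambda^2\norm{\theta-\theta'}^2$ absorbs $2\lambda L\norm{\theta-\theta'}^2$ exactly when $\lambda>2L$. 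In both cases $\norm{\nabla F_i(w)-\nabla F_i(w')}^2\le\lambda^2\norm{w-w'}^2$, i.e.\ $\nabla F_i$ is $\lambda$-smooth.

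For the strong-convexity claim, assume $f_i$ is $\mu$-strongly convex; then $F_i$ is convex and differentiable, so it suffices to verify $\innProd{\nabla F_i(w)-\nabla F_i(w'),w-w'}\ge\mu_F\norm{w-w'}^2$. Dividing the key identity by $\lambda$ gives $w-w' = (\theta-\theta')+\tfrac1\lambda\bigl(\nabla f_i(\theta)-\nabla f_i(\theta')\bigr)$, and combined with $\nabla F_i(w)-\nabla F_i(w')=\nabla f_i(\theta)-\nabla f_i(\theta')$ the left-hand side equals $\innProd{\nabla f_i(\theta)-\nabla f_i(\theta'),\theta-\theta'}+\tfrac1\lambda\norm{\nabla f_i(\theta)-\nabla f_i(\theta')}^2$, which by $\mu$-strong monotonicity of $\nabla f_i$ (implied by Assumption~\ref{Asm:0}(a), and note no smoothness of $f_i$ is needed) is $\ge \mu\norm{\theta-\theta'}^2+\tfrac1\lambda\norm{\nabla f_i(\theta)-\nabla f_i(\theta')}^2$. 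Bounding the cross term in $\norm{w-w'}^2$ by Cauchy--Schwarz reduces the claim to the scalar inequality $\mu a^2+\tfrac1\lambda b^2\ge\mu_F\bigl(a+\tfrac b\lambda\bigr)^2$ with $a=\norm{\theta-\theta'}$, $b=\norm{\nabla f_i(\theta)-\nabla f_i(\theta')}$; with $\mu_F=\tfrac{\lambda\mu}{\lambda+\mu}$ the difference of the two sides collapses to $\tfrac1{\lambda+\mu}(\mu a-b)^2\ge0$, which closes the argument.

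The only delicate point — the main obstacle — is precisely obtaining these sharp constants. A naïve triangle-inequality bound on $\lambda(w-w')-\lambda(\hat\theta_i(w)-\hat\theta_i(w'))$ would yield $L_F=2\lambda$ (and $L_F=\tfrac{2\lambda-L}{\lambda-L}\lambda$ in the nonconvex case), and a similarly crude estimate would give the wrong strong-convexity modulus. The remedy, applied uniformly above, is to keep the exact identity linking $w-w'$, $\hat\theta_i(w)-\hat\theta_i(w')$, and $\nabla f_i(\hat\theta_i(w))-\nabla f_i(\hat\theta_i(w'))$, square it, and let (weak or strong) monotonicity of $\nabla f_i$ annihilate the cross term — after which the algebra reduces to a perfect square and the stated constants drop out, together with the role of the threshold $\lambda>2L$ in the nonconvex case.
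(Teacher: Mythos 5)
Your proof is correct. Note, however, that the paper does not prove Proposition~\ref{pro:1} at all: it treats the smoothing, the gradient formula \eqref{E:F_grad}, and the convexity-preservation of the Moreau envelope as known facts and cites \cite{lin_catalyst_2018,lemarechal_practical_1997,planiden_strongly_2016} for the convex case and \cite{hoheisel_regularization_2020} for the nonconvex smooth case. So there is no in-paper argument to compare against; what you have produced is a self-contained replacement for those citations. Your route --- subtract the two stationarity conditions to get the exact identity $\lambda(w-w')=\bigl(\nabla f_i(\hat{\theta}_i(w))-\nabla f_i(\hat{\theta}_i(w'))\bigr)+\lambda\bigl(\hat{\theta}_i(w)-\hat{\theta}_i(w')\bigr)$, identify $\nabla F_i(w)-\nabla F_i(w')$ with the first summand, square, and let (weak or strong) monotonicity of $\nabla f_i$ control the cross term --- is the standard proximal-map calculation and it does recover the sharp constants: the expansion gives $L_F=\lambda$ rather than the $2\lambda$ one would get from the triangle inequality, it makes transparent why $\lambda>2L$ is exactly the threshold at which $\lambda^2\norm{\hat{\theta}_i(w)-\hat{\theta}_i(w')}^2$ absorbs the negative cross term in the nonconvex case, and the perfect-square residual $\tfrac{1}{\lambda+\mu}(\mu a-b)^2$ certifies that $\mu_F=\tfrac{\lambda\mu}{\lambda+\mu}$ is tight. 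The Danskin-style sandwich for \eqref{E:F_grad} is also sound in both regimes once you have continuity of $\hat{\theta}_i$, which your Lipschitz bounds on the proximal map supply. Two cosmetic remarks: the strong-monotonicity inequality $\innProd{\nabla F_i(w)-\nabla F_i(w'),w-w'}\geq\mu_F\norm{w-w'}^2$ by itself already implies $\mu_F$-strong convexity of $F_i$ (by integrating along the segment), so the preliminary appeal to convexity of $F_i$ is dispensable; and the bound $\norm{\hat{\theta}_i(w)-\hat{\theta}_i(w')}\leq\tfrac{\lambda}{\lambda-L}\norm{w-w'}$ is needed only for the continuity step, not for the Lipschitz constant of $\nabla F_i$, which your squared identity handles directly.
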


\subsection{\OurAlg: Algorithm} \label{Sec:algorithm}

In this section, we propose an  algorithm, presented in Alg.~\ref{Alg:1},  to solve \OurAlg.  Similar to conventional FL algorithms such as FedAvg \cite{mcmahan_communication-efficient_2017}, at each communication round $t$, the server broadcasts the latest global model $w_{t}$ to all clients. Then, after all clients perform $R$ local updates, the server will receive the latest local models from a uniformly sampled subset $\mathcal{S}^t$ of clients to perform the model averaging. Note that we use an additional parameter $\beta$ for global model update, which includes FedAvg's model averaging when $\beta=1$. Though a similar parameter at the server side was also used in \cite{karimireddy_scaffold_2020, reddi_adaptive_2020}, it will be shown that \OurAlg can obtain better  speedup convergence rates.

Specifically,  our algorithm, which aims to solve the bi-level problem \OurAlg, has two key differences compared with FedAvg, which aims to solve \eqref{E:FedAvg}. First, at the inner level, each client $i$ solves \eqref{E:prox} to obtain its personalized model  $\hat{\theta}_i (w_{i, r}^t)$
where $w_{i, r}^{t}$ denotes the \emph{local model} of the client $i$ at the global round $t$ and local round $r$. Similar to FedAvg, the purpose of local models is to contribute to building global model with reduced communication rounds between clients and server. Second, at the outer level, the local update of  client $i$ using  gradient descent is with respect to $F_i$ (instead of $f_i$) as the following
\begin{align*}
w_{i, r+1}^{t} = w_{i, r}^{t} - \eta  {\nabla} F_{i}\left(w_{i, r}^{t}\right),
\end{align*}
where $\eta$ is the learning rate and ${\nabla} F_{i}\left(w_{i, r}^{t}\right)$ is calculated according to \eqref{E:F_grad} using the current personalized model  $\hat{\theta}_i (w_{i, r}^t)$. 

For the practical algorithm,   we use a $\delta$-approximation of $\hat{\theta}_i (w_{i, r}^t)$, denoted by $\tilde{\theta}_i (w_{i, r}^t)$ satisfying $\mathbb{E} \bigS{\norm{ \tilde{\theta}_i (w_{i, r}^t) - \hat{\theta}_i (w_{i, r}^t)}}\leq \delta$, and correspondingly use  $\lambda (w_{i, r}^t -  \tilde{\theta}_i (w_{i, r}^t) )$ to approximate ${\nabla} F_{i} (w_{i, r}^{t})$ (c.f. line \ref{E:local_model}). The reason of using the $\delta$-approximate $\tilde{\theta}_i (w_{i, r}^t)$ is two-fold. First, obtaining $\hat{\theta}_i (w_{i, r}^t)$ according to \eqref{E:prox} usually needs the gradient $\nabla f_i (\theta_i)$, which, however,  requires the distribution of $\xi_i$. In practice, we use the following unbiased estimate of $\nabla f_i (\theta_i)$  by  sampling a mini-batch of data $\mathcal{D}_{i}$
\begin{align*}
\nabla \tilde{f}_{i}\left(\theta_i, \mathcal{D}_{i}\right) \defeq \frac{1}{ \abs{\mathcal{D}_{i}} } \sum\nolimits_{\xi_i \in \mathcal{D}_{i}} \nabla \tilde{f}_i(\theta_i ; \xi_i)
\end{align*}
such that $\mathbb{E} \nbigS{\nabla \tilde{f}_{i}\left(\theta_i, \mathcal{D}_{i}\right)} = \nabla f_i (\theta_i)$. Second, in general, it is not straightforward to obtain $\hat{\theta}_i (w_{i, r}^t)$  in closed-form. Instead we usually use iterative first-order approach to obtain an approximate $\tilde{\theta}_i (w_{i, r}^t)$ with high accuracy. Defining 
\begin{align}
\tilde{h}_i(\theta_i; w_{i, r}^t, \mathcal{D}_i) \defeq \tilde{f}_i(\theta_i; \mathcal{D}_i) + \frac{\lambda}{2} \norm{\theta_i - w_{i, r}^t}^2, \label{E:h_i}
\end{align}
 suppose we choose  $\lambda$ such that $\tilde{h}_i(\theta_i; w_{i, r}^t, \mathcal{D}_i)$ is strongly convex with a condition number $\kappa$  (which quantifies how hard to optimize \eqref{E:h_i}), then we can apply gradient descent (resp. Nesterov's accelerated gradient descent) to obtain  $\tilde{\theta}_i (w_{i, r}^t)$ such that
\begin{align}
	\norm{\nabla \tilde{h}_i(\tilde{ \theta}_i; w_{i, r}^t, \mathcal{D}_i)}^2 \leq \nu, \label{E:h_grad}
\end{align} 
with  the  number of $\nabla \tilde{h}_i$ computations  $K \defeq \mathcal{O}\bigP{\kappa \log \bigP{\frac{d}{\nu}}} \bigP{\text{resp. } \mathcal{O}\bigP{\sqrt{\kappa }\log \bigP{\frac{d}{\nu}}}}$ \cite{bubeck_convex_2015}, where $d$ is the diameter of the search space, $\nu$ is an accuracy level, and $\mathcal{O}(\cdot)$ hides constants. The computation complexity of each client in \OurAlg is $K$ times that in FedAvg. In the following lemma, we show how $\delta$ can be adjusted by controlling the (i)  sampling noise using mini-batch size $\abs{\mathcal{D}}$ and (ii) accuracy level $\nu$. %, and (iii)  regularization strength $\lambda$. 

\begin{lemma} \label{lem:1}
	Let $\tilde{ \theta}_i(w_{i, r}^t)$ be a solution to \eqref{E:h_grad}, we have
	\begin{align*}
		\Ex{\norm{\tilde{\theta}_i (w_{i, r}^t) - \hat{\theta}_i (w_{i, r}^t)}^2} \!&\leq  \delta^2 \defeq \!
		\begin{cases}
				\frac{2}{(\lambda + \mu)^2} \BigP{\frac{\gamma_f^2}{\abs{\mathcal{D}}} +  \nu}, &\text{if Assumption~\ref{Asm:0}(a)  holds; }\\
				\frac{2}{(\lambda - L)^2} \BigP{\frac{\gamma_f^2}{\abs{\mathcal{D}}} +  \nu},  &\text{if Assumption~\ref{Asm:0}(b)  holds, and } \lambda > L. 
		\end{cases}
	\end{align*}
\end{lemma}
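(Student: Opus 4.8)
The plan is to route through the \emph{exact} minimizer of the sampled proximal subproblem. Write $h_i(\theta) \defeq f_i(\theta) + \frac{\lambda}{2}\norm{\theta - w_{i,r}^t}^2$ for the population objective, so that $\hat\theta_i(w_{i,r}^t) = \argmin_\theta h_i(\theta)$, and let $\tilde\theta_i^\star \defeq \argmin_\theta \tilde h_i(\theta; w_{i,r}^t,\mathcal{D}_i)$ be the exact minimizer of the mini-batch objective $\tilde h_i$ from \eqref{E:h_i}. Using $\norm{a-c}^2 \le 2\norm{a-b}^2 + 2\norm{b-c}^2$,
\[
\norm{\tilde\theta_i(w_{i,r}^t) - \hat\theta_i(w_{i,r}^t)}^2 \le 2\,\norm{\tilde\theta_i(w_{i,r}^t) - \tilde\theta_i^\star}^2 + 2\,\norm{\tilde\theta_i^\star - \hat\theta_i(w_{i,r}^t)}^2,
\]
so it suffices to bound the first term by $\nu/m^2$ and the second term, in expectation, by $\gamma_f^2/(m^2\abs{\mathcal{D}})$, where $m$ is the strong-convexity modulus of $\tilde h_i$: $m=\lambda+\mu$ under Assumption~\ref{Asm:0}(a) and $m=\lambda-L$ under Assumption~\ref{Asm:0}(b) with $\lambda>L$.

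First I would record that both $h_i$ and $\tilde h_i(\cdot;w_{i,r}^t,\mathcal{D}_i)$ are $m$-strongly convex: in case (a), assuming the per-sample losses $\tilde f_i(\cdot;\xi_i)$ inherit $\mu$-strong convexity, the $\frac{\lambda}{2}\norm{\cdot}^2$ term adds the extra $\lambda$; in case (b) the $L$-smooth per-sample losses satisfy $\nabla^2 \succeq -LI$, so adding $\frac{\lambda}{2}\norm{\cdot}^2$ gives modulus $\lambda-L>0$. Then I would invoke the elementary fact that for any $m$-strongly convex differentiable $g$ with minimizer $\theta^\star$ one has $\norm{\theta-\theta^\star} \le \frac1m\norm{\nabla g(\theta)}$ for every $\theta$ (strong convexity plus Cauchy--Schwarz, using $\nabla g(\theta^\star)=0$).

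Applying this to $g=\tilde h_i(\cdot;w_{i,r}^t,\mathcal{D}_i)$ at $\theta=\tilde\theta_i(w_{i,r}^t)$ and using the stopping criterion \eqref{E:h_grad} gives $\norm{\tilde\theta_i(w_{i,r}^t)-\tilde\theta_i^\star}^2 \le \nu/m^2$. For the second term, apply the same bound at $\theta=\hat\theta_i(w_{i,r}^t)$: since $\nabla h_i(\hat\theta_i(w_{i,r}^t))=0$, we get $\nabla\tilde h_i(\hat\theta_i(w_{i,r}^t);w_{i,r}^t,\mathcal{D}_i) = \nabla\tilde f_i(\hat\theta_i(w_{i,r}^t),\mathcal{D}_i) - \nabla f_i(\hat\theta_i(w_{i,r}^t))$, which is exactly the mini-batch gradient noise evaluated at the deterministic point $\hat\theta_i(w_{i,r}^t)$. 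Because $\hat\theta_i(w_{i,r}^t)=\text{prox}_{f_i/\lambda}(w_{i,r}^t)$ depends only on $w_{i,r}^t$ and hence is independent of the fresh mini-batch $\mathcal{D}_i$, taking the conditional expectation and using Assumption~\ref{Asm:1} with the i.i.d.\ averaging bound $\Ex{\norm{\nabla\tilde f_i(\theta,\mathcal{D}_i)-\nabla f_i(\theta)}^2}\le \gamma_f^2/\abs{\mathcal{D}}$ yields $\Ex{\norm{\tilde\theta_i^\star-\hat\theta_i(w_{i,r}^t)}^2} \le \gamma_f^2/(m^2\abs{\mathcal{D}})$. Substituting both estimates into the decomposition gives the stated $\delta^2$.

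The calculations are routine; the two points that need care are the bookkeeping of the strong-convexity modulus in the two regimes --- in particular needing the per-sample losses $\tilde f_i(\cdot;\xi_i)$, not merely $f_i$, to be $\mu$-strongly convex (case (a)) or $L$-smooth (case (b)) so that $\tilde h_i$ is strongly convex with modulus $\lambda+\mu$ resp.\ $\lambda-L$ --- and the measurability observation that $\hat\theta_i(w_{i,r}^t)$ is independent of the mini-batch used in the inner solve, which is what licenses replacing $\Ex{\norm{\nabla\tilde f_i(\hat\theta_i(w_{i,r}^t),\mathcal{D}_i)-\nabla f_i(\hat\theta_i(w_{i,r}^t))}^2}$ by $\gamma_f^2/\abs{\mathcal{D}}$.
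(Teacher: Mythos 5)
Your proof is correct and reaches the same bound, but via a genuinely different decomposition from the paper's. The paper never introduces the exact minimizer $\tilde\theta_i^\star$ of the sampled objective: it applies the strong-convexity bound to the \emph{population} objective $h_i$ at the approximate point, getting $\norm{\tilde\theta_i - \hat\theta_i}^2 \le \frac{1}{m^2}\norm{\nabla h_i(\tilde\theta_i)}^2$, and then splits the \emph{gradient} $\nabla h_i(\tilde\theta_i)$ into the sampling noise $\nabla f_i(\tilde\theta_i) - \nabla\tilde f_i(\tilde\theta_i;\mathcal{D}_i)$ plus $\nabla\tilde h_i(\tilde\theta_i)$, the latter controlled by the stopping criterion. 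You instead split the \emph{distance} through $\tilde\theta_i^\star$ (the classical optimization-error-plus-statistical-error decomposition) and apply strong convexity of the \emph{sampled} objective twice. The trade-offs are instructive. Your route evaluates the mini-batch noise at the deterministic point $\hat\theta_i(w_{i,r}^t)$, which is independent of the fresh mini-batch, so the step $\Ex{\norm{\nabla\tilde f_i(\cdot,\mathcal{D}_i)-\nabla f_i(\cdot)}^2}\le \gamma_f^2/\abs{\mathcal{D}}$ is fully licensed; the paper evaluates the noise at $\tilde\theta_i$, which is itself a function of $\mathcal{D}_i$, and its expansion of the squared sum into a sum of per-sample variances quietly relies on cross terms vanishing, which is not rigorous at a $\mathcal{D}_i$-dependent point --- so on this score your argument is cleaner. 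The price you pay is needing the strong-convexity modulus ($\lambda+\mu$, resp.\ $\lambda-L$) for the \emph{sampled} objective $\tilde h_i$, i.e.\ per-sample strong convexity resp.\ per-sample smoothness of $\tilde f_i(\cdot;\xi_i)$, whereas the paper only needs these properties of the population loss $f_i$ as literally stated in Assumption~\ref{Asm:0}; you correctly flag this. Since the algorithm description already presumes $\tilde h_i$ is strongly convex, this extra requirement is consistent with the paper's setting, and your proof stands.
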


%	\begin{algorithm} [t!] 
%	\caption{\OurAlg: Personalized Federated Learning using Moreau Envelope Algorithm} \label{Alg:1}
%	\begin{algorithmic}[1]
%		\State \tbf{input:} $T$, $R$, $S$, $\lambda$, $\eta$, $\beta$, $w^0$
%		\For {$t =  0$ to  $T-1$} \Comment{Global communication rounds}		
%		\State \multilines{Server uniformly  samples  a subset of clients $\mathcal{S}^t$ with size $S$, and sends $w_t$ to these clients} 			
%		\For{\text{all $i \in \mathcal{S}^t$}}			
%		\State $w_{i,0}^{t} = w_{t}$				
%		\For {$r =  0 \text{\,to\,}  R-1$} \Comment{Local update rounds}
%		\State 	\multilines{Sample a fresh mini-batch $\mathcal{D}_i$ with size $\abs{\mathcal{D}}$ and minimize $\tilde{h}_i(\theta_i; w_{i, r}^t, \mathcal{D}_i)$, defined in \eqref{E:h_i}, up to an accuracy level according to \eqref{E:h_grad} to find a $\delta$-approximate $\tilde{\theta}_i (w_{i, r}^t)$ }			
%		\State $w_{i, r+1}^{t} = w_{i, r}^{t} - \eta  \lambda (w_{i, r}^t -  \tilde{\theta}_i (w_{i, r}^t) )$			\label{E:local_model}
%		\EndFor	
%		\EndFor
%		\State  Each client  sends the local model $w_{i, R}^{t}, \forall i \in \mathcal{S}^t,$ to the server		
%		\State Server updates the global model: $w_{t+1} =  (1 - \beta)w_t +  \beta   \sum_{i \in \mathcal{S}^t} \frac{w_{i, R}^{t}}{S}$
%		\EndFor
%		%	\State Sample $\hat{t} \in \{0, \ldots, T\}$ according to a distribution $p_{\hat{t}} = {\alpha_{t}}/{\sum_{t=1}^{T} \alpha_{t}}$
%		%	\State \tbf{output:} \multilines{$w_{\hat{t}}$,  and  $\tilde{ \theta}_{i}^{\hat{t}}, \forall i$. }
%	\end{algorithmic}
%\end{algorithm}

	\begin{algorithm} [t!] 
	\caption{\OurAlg: Personalized Federated Learning using Moreau Envelope Algorithm} \label{Alg:1}
	\begin{algorithmic}[1]
		\State \tbf{input:} $T$, $R$, $S$, $\lambda$, $\eta$, $\beta$, $w^0$
		\For {$t =  0$ to  $T-1$} \Comment{Global communication rounds}		
		\State Server sends $w_t$ to all clients 			
		\For{\text{all $i =1$ to $N$}}			
		\State $w_{i,0}^{t} = w_{t}$				
		\For {$r =  0 \text{\,to\,}  R-1$} \Comment{Local update rounds}
		\State 	\multilines{Sample a fresh mini-batch $\mathcal{D}_i$ with size $\abs{\mathcal{D}}$ and minimize $\tilde{h}_i(\theta_i; w_{i, r}^t, \mathcal{D}_i)$, defined in \eqref{E:h_i}, up to an accuracy level according to \eqref{E:h_grad} to find a $\delta$-approximate $\tilde{\theta}_i (w_{i, r}^t)$ }			
		\State $w_{i, r+1}^{t} = w_{i, r}^{t} - \eta  \lambda (w_{i, r}^t -  \tilde{\theta}_i (w_{i, r}^t) )$			\label{E:local_model}
		\EndFor	
		\EndFor
		\State \multilines{Server uniformly  samples  a subset of clients $\mathcal{S}^t$ with size $S$, and each of the sampled client  sends the local model $w_{i, R}^{t}, \forall i \in \mathcal{S}^t,$ to the server}		
		\State Server updates the global model: $w_{t+1} =  (1 - \beta)w_t +  \beta   \sum_{i \in \mathcal{S}^t} \frac{w_{i, R}^{t}}{S}$
		\EndFor
		%	\State Sample $\hat{t} \in \{0, \ldots, T\}$ according to a distribution $p_{\hat{t}} = {\alpha_{t}}/{\sum_{t=1}^{T} \alpha_{t}}$
		%	\State \tbf{output:} \multilines{$w_{\hat{t}}$,  and  $\tilde{ \theta}_{i}^{\hat{t}}, \forall i$. }
	\end{algorithmic}
\end{algorithm}

%\begin{algorithm}
%	\caption{Personalized Federated Learning} \label{FedAvg1}
%	\begin{algorithmic}[1]
%		\State \tbf{Input:} $K$, $R$, $S$, $\lambda$, $\eta$, $w^0$	
%		\For {$t =  0 \text{\,to\,}  T-1$} \Comment{Global rounds}
%		
%		\State Server sends $w_t$ to all clients 			
%		\For{\text{all $i \in \mathcal{N}$}}
%		
%		\State $w_{i,0}^{t} = w_t$
%		
%		\For {$r =  1 \text{\,to\,}  R$} \Comment{Local rounds}
%		
%		\State $\theta_{i, r, 0}^t =  w_{i, 0}^t$
%		
%		\For{$k =  0 \text{\,to\,}  K-1$}
%		
%		\State $\theta_{i, r, k+1}^t = \theta_{i, r, k}^t - \eta \nabla h_i$
%		
%		\EndFor
%		
%		\State $\tilde{\theta}_{i, r}^t = \theta_{i, r, K}^t$
%		
%		\State $w_{i, r+1}^{t} = w_{i, r}^{t} - \eta \lambda (w_{i, r}^t - \tilde{\theta}_{i, r}^t)$
%		
%		\EndFor
%		\EndFor
%			\State Server uniformly randomly selects  a subset of clients $\mathcal{S}^t$ with size $S$, and each of them sends the local model $w_{i, R}^{t}, \forall i \in \mathcal{S}^t,$ to the server		
%		\State Server updates the meta-model: $w_{t+1} =  \frac{1}{S}  \sum_{i \in \mathcal{S}^t} w_{i, R}^{t}$
%		
%		\EndFor
%	\end{algorithmic}
%\end{algorithm}

\section{\OurAlg: Convergence Analysis}
In this section, we present the convergence of \OurAlg. We first prove an intermediate result. 

\begin{lemma} \label{lem:hete}
	\begin{enumerate}[(a)] Recall the definition of the Moreau envelope $F_i$ in \OurAlg.
		\item Let  Assumption~\ref{Asm:0}(a)  hold,   then we have
		\begin{align*}
		\frac{1}{N} \SumNoLim{i=1}{N} \norm{\nabla F_i(w) - \nabla F (w)}^2 &\leq  4 L_F (F(w) - F(w^*)) + 2 \underbrace{\frac{1}{N} \SumNoLim{i=1}{N} \norm{\nabla F_i(w^*)}^2}_{\eqdef \sigma_{F,1}^2}. 
		\end{align*}
		\vspace{-1.0em}
		\item If Assumption~\ref{Asm:0}(b)  holds and additionally  $\lambda > 2\sqrt{2} L$, we have
		%\vspace{-0.1em}
		\begin{align*}
		\frac{1}{N} \SumNoLim{i=1}{N} \norm{\nabla F_i(w) - \nabla F (w)}^2 &\leq   \frac{8 L^2}{\lambda^2 - 8 L^2} \norm{\nabla F(w)}^2 + 2 \underbrace{\frac{ \lambda^2}{\lambda^2 - 8 L^2} \sigma_f^2}_{\eqdef \sigma_{F,2}^2}. 
		\end{align*}
	\end{enumerate}
\end{lemma}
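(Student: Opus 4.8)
Both parts fit the same template: bound the population gradient-diversity $\frac1N\sum_i\|\nabla F_i(w)-\nabla F(w)\|^2$ of the Moreau envelopes using only the structural facts from Proposition~\ref{pro:1} (the identity $\nabla F_i(w)=\lambda(w-\hat\theta_i(w))$, $L_F$-smoothness with $L_F=\lambda$, and convexity/strong convexity) together with Assumptions~\ref{Asm:0} and \ref{Asm:2}. A tool I will use repeatedly is the variance identity $\frac1N\sum_i\|\nabla F_i(w)\|^2=\frac1N\sum_i\|\nabla F_i(w)-\nabla F(w)\|^2+\|\nabla F(w)\|^2$ (which holds since $\frac1N\sum_i\nabla F_i(w)=\nabla F(w)$), turning any upper bound on $\frac1N\sum_i\|\nabla F_i(w)\|^2$ into a self-referential inequality for the quantity of interest.

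For part (a), I would first drop the nonnegative term $\|\nabla F(w)\|^2$ to get $\frac1N\sum_i\|\nabla F_i(w)-\nabla F(w)\|^2\le\frac1N\sum_i\|\nabla F_i(w)\|^2$. Since $f_i$ is $\mu$-strongly convex, Proposition~\ref{pro:1} makes each $F_i$ convex and $L_F$-smooth and $F$ $\mu_F$-strongly convex, so its minimizer $w^*$ satisfies $\nabla F(w^*)=0$. Splitting $\nabla F_i(w)=(\nabla F_i(w)-\nabla F_i(w^*))+\nabla F_i(w^*)$ with Young's inequality, applying the standard convex-smooth bound $\|\nabla F_i(w)-\nabla F_i(w^*)\|^2\le 2L_F\bigl(F_i(w)-F_i(w^*)-\langle\nabla F_i(w^*),w-w^*\rangle\bigr)$, averaging over $i$, and using $\nabla F(w^*)=0$ to annihilate the linear term, yields exactly $4L_F(F(w)-F(w^*))+2\sigma_{F,1}^2$.

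For part (b), $F_i$ need not be convex, so I would route the argument through $f_i$. The crucial estimate is $\|\nabla F_i(w)-\nabla f_i(w)\|\le\frac{L}{\lambda}\|\nabla F_i(w)\|$: the optimality condition for the proximal point $\hat\theta_i(w)$ gives $\nabla f_i(\hat\theta_i(w))=\lambda(w-\hat\theta_i(w))=\nabla F_i(w)$, hence $\|\nabla F_i(w)-\nabla f_i(w)\|=\|\nabla f_i(\hat\theta_i(w))-\nabla f_i(w)\|\le L\|\hat\theta_i(w)-w\|=\frac{L}{\lambda}\|\nabla F_i(w)\|$ by $L$-smoothness. Then I would write $\nabla F_i(w)-\nabla F(w)=\bigl[(\nabla F_i(w)-\nabla f_i(w))-\tfrac1N\sum_j(\nabla F_j(w)-\nabla f_j(w))\bigr]+(\nabla f_i(w)-\nabla f(w))$, bound the bracketed term via the estimate above (averaging, with Jensen on the mean term) and the last term via $\sigma_f^2$ (Assumption~\ref{Asm:2}), obtaining $\frac1N\sum_i\|\nabla F_i(w)-\nabla F(w)\|^2\le\frac{cL^2}{\lambda^2}\cdot\frac1N\sum_i\|\nabla F_i(w)\|^2+2\sigma_f^2$ with an explicit constant $c$. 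Substituting the variance identity and solving for the left-hand side gives the claimed bound, the hypothesis $\lambda>2\sqrt2\,L$ being precisely what keeps the coefficient $cL^2/\lambda^2$ of the self-referential term strictly below $1$ (and $\lambda>2L$, which it implies, keeps the prox and $\nabla F_i$ well-defined via Proposition~\ref{pro:1}).

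The main obstacle will be the bookkeeping in part (b): the Young/Jensen splittings must be ordered so that the coefficient multiplying $\frac1N\sum_i\|\nabla F_i(w)-\nabla F(w)\|^2$ is below $1$ under exactly $\lambda>2\sqrt2\,L$ (not an unnecessarily stronger condition) and so that the final constants collapse to $\frac{8L^2}{\lambda^2-8L^2}$ and $\sigma_{F,2}^2=\frac{\lambda^2}{\lambda^2-8L^2}\sigma_f^2$; part (a) is essentially routine once Proposition~\ref{pro:1} is invoked.
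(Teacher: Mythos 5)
Your proposal is correct. Part (a) is essentially identical to the paper's argument: drop $\|\nabla F(w)\|^2$ via the variance identity, split $\nabla F_i(w)$ around $\nabla F_i(w^*)$ with Young/Jensen, apply the smooth-convex cocoercivity bound from Proposition~\ref{prop:existing1}, and average using $\nabla F(w^*)=0$. Part (b), however, takes a genuinely different decomposition. The paper writes $\nabla F_i(w)-\nabla F(w)=\nabla f_i(\hat\theta_i(w))-\frac{1}{N}\sum_j\nabla f_j(\hat\theta_j(w))$ and splits it into (i) the diversity of the $\nabla f_j$ all evaluated at the single point $\hat\theta_i(w)$ and (ii) cross terms $\|\nabla f_j(\hat\theta_i(w))-\nabla f_j(\hat\theta_j(w))\|$ controlled by $L$-smoothness and $\|\hat\theta_i(w)-w\|=\|\nabla F_i(w)\|/\lambda$; you instead anchor everything at $w$ itself via the estimate $\|\nabla F_i(w)-\nabla f_i(w)\|\le\frac{L}{\lambda}\|\nabla F_i(w)\|$ and apply Assumption~\ref{Asm:2} to $\nabla f_i(w)-\nabla f(w)$. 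Both routes close the loop with the same self-referential variance identity and a rearrangement. Your version buys two things: the constants come out tighter (you get $\frac{2L^2}{\lambda^2-2L^2}\|\nabla F(w)\|^2+\frac{2\lambda^2}{\lambda^2-2L^2}\sigma_f^2$, which implies the stated bound whenever $\lambda^2>8L^2$, and the rearrangement itself only needs $\lambda>\sqrt{2}L$ — so "precisely" in your last paragraph is too strong, the stated hypothesis is more than you need); and you invoke Assumption~\ref{Asm:2} exactly as written, at the single common point $w$, whereas the paper applies it at the client-dependent points $\hat\theta_i(w)$, which strictly speaking requires the diversity bound to hold uniformly per client rather than only on average at a fixed point. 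The paper's route keeps the argument phrased entirely in terms of distances between prox points, at the cost of these looser constants.
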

%This lemma provides key properties of the Moreau envelope $F_i$ corresponding to the existing properties of $f_i$.
This lemma  provides the bounded diversity of $F_i$, characterized by the variances $\sigma_{F,1}^2$ and $\sigma_{F,2}^2$, for strongly convex and nonconvex smooth $f_i$, respectively.  While $\sigma_{F,2}^2$ is related to   $\sigma_f^2$ that needs to be bounded in Assumption~\ref{Asm:2}, $\sigma_{F,1}^2$ is  measured only at the unique solution $w^*$ to \OurAlg (for strongly convex $F_i$, $w^*$ always exists), and thus $\sigma_{F,1}^2$ is finite. These bounds are tight in the sense  that $\sigma_{F,1}^2=\sigma_{F,2}^2=0$  when data distribution of clients are i.i.d. 

\begin{theorem}[Strongly convex \OurAlg's convergence] \label{Th:1}
	Let Assumptions~\ref{Asm:0}(a) and~\ref{Asm:1} hold.  If  $ T \geq \frac{2}{  \hat{\eta}_1 \mu_F  }$, there exists an $\eta \leq \frac{\hat{\eta}_1 }{\beta R}$, where $\hat{\eta}_1 \defeq \frac{1}{6 L_F \nbigP{3 + 128 \kappa_F/\beta}}  $ with $\beta \geq 1$, such that
	\begin{align*}
	&(a) \, \Ex{F\nbigP{\bar{w}^T} - F(w^*)} \leq  \mathcal{O}\bigP{\Ex{F\nbigP{\bar{w}^T} - F(w^*)} } \defeq \nonumber \\ 
	&\qquad \mathcal{O} \BigP{\Delta_{0} \mu_F e^{- \hat{\eta}_1 \mu_F T/2 }}  + \tilde{\mathcal{O} } \biggP{\frac{ (N/S-1)\sigma_{F,1}^2}{\mu_F   TN }}   + \tilde{\mathcal{O}} \biggP{\frac{( R\sigma_{F,1}^2 + \delta^2  \lambda^2) \kappa_F}{ R \nbigP{T \beta  \mu_F}^2}} +\mathcal{O} \biggP{\frac{  \lambda^2 \delta^2}{\mu_F}} \\
	&(b) \, \frac{1}{N} \sum_{i=1}^{N} \mathbb{E} \BigS{\norm[\big]{\tilde{ \theta}_{i}^T (w_T) - w^{*}}^2 } 
	\leq  \frac{1}{\mu_F}  \mathcal{O} \bigP{\Ex{F\nbigP{\bar{w}_T} - F^*}}   + \mathcal{O} \biggP{\frac{\sigma_{F,1}^2 }{\lambda^2} + \delta^2},
	\end{align*}
	where  $\Delta_{0} \defeq \norm{w_0 - w^*}^2$, $\kappa_F \defeq{\frac{L_F}{\mu_F}}$, $\mean{w}_T \defeq \SumNoLim{t=0}{T-1} \alpha_{t} w_t / A_T $ with $\alpha_{t} \defeq (1 -  \eta \mu_F/2)^{-(t+1)}$ and $A_T \defeq {\sum_{t=0}^{T-1} \alpha_{t}}$,  and $\tilde{\mathcal{O}}(\cdot)$ hides both constants and polylogarithmic factors.
\end{theorem}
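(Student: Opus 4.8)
The plan is to carry out the classical local-SGD/\texttt{FedAvg} descent argument, but on the surrogate objective $F=\frac1N\sum_i F_i$ rather than on $f$. By Proposition~\ref{pro:1}, each $F_i$ is $L_F$-smooth and $\mu_F$-strongly convex with $L_F=\lambda$, so a global round of \OurAlg is exactly a round of local gradient descent on $\{F_i\}$ with server step $\beta$ and client sampling, the only twist being that the exact local gradient $\nabla F_i(w_{i,r}^t)=\lambda\bigl(w_{i,r}^t-\hat\theta_i(w_{i,r}^t)\bigr)$ of \eqref{E:F_grad} is replaced by the inexact surrogate $\lambda\bigl(w_{i,r}^t-\tilde\theta_i(w_{i,r}^t)\bigr)$, whose deviation has expected squared norm at most $\lambda^2\delta^2$ (and bias at most $\lambda\delta$ in norm) by Lemma~\ref{lem:1} — the mini-batch noise $\gamma_f^2/|\mathcal D|$ being already folded into $\delta^2$. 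First I would track $D_t\defeq\Ex{\norm{w_t-w^*}^2}$: expand $\norm{w_{t+1}-w^*}^2$ from the server update of Alg.~\ref{Alg:1}, take conditional expectations over the fresh mini-batches and the uniform subset $\mathcal S^t$, and split the inner product against $w_t-w^*$ into (i) a descent term $\asymp-\bar\eta\,\Ex{F(w_t)-F^*}$ (via $\mu_F$-strong convexity and $L_F$-smoothness of $F$, with effective step $\bar\eta\defeq\beta R\eta$); (ii) a client-drift term involving $\sum_r\Ex{\norm{w_{i,r}^t-w_t}^2}$; (iii) a client-sampling term carrying the factor $(N/S-1)$; and (iv) an inexactness term, where a Cauchy--Schwarz split of the $\lambda\delta$-bias against $w_t-w^*$ gives $\tfrac{\mu_F\bar\eta}{8}\norm{w_t-w^*}^2+\mathcal O\bigl(\tfrac{\bar\eta\lambda^2\delta^2}{\mu_F}\bigr)$, the first half being swallowed by the contraction and the second ultimately surviving as the non-vanishing floor $\mathcal O(\lambda^2\delta^2/\mu_F)$ in~(a).

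Second, I would bound the drift in (ii). Since there is no communication within a global round, $w_{i,r}^t-w_t=-\eta\sum_{s<r}\lambda\bigl(w_{i,s}^t-\tilde\theta_i(w_{i,s}^t)\bigr)$, and an induction on $r$ using $L_F$-smoothness, the step cap $\eta\le\hat\eta_1/(\beta R)$, and Lemma~\ref{lem:1} yields $\tfrac1{NR}\sum_{i,r}\Ex{\norm{w_{i,r}^t-w_t}^2}\lesssim(\eta R)^2\bigl(\tfrac1N\sum_i\norm{\nabla F_i(w_t)}^2+\tfrac{\lambda^2\delta^2}{R}\bigr)$; crucially $(\eta R)^2$ scales with the \emph{local} step, i.e.\ as $\bar\eta^2/\beta^2$. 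Next, Lemma~\ref{lem:hete}(a) together with $L_F$-smoothness bounds $\tfrac1N\sum_i\norm{\nabla F_i(w_t)}^2$ by $\mathcal O\bigl(L_F(F(w_t)-F^*)+\sigma_{F,1}^2\bigr)$, and the same substitution disposes of the heterogeneity inside (iii) as well; this is the step that removes the need for any uniformly-bounded-gradient assumption. Collecting everything gives a one-round recursion of the schematic form
\begin{align*}
D_{t+1}\ \le\ (1-q)\,D_t\ -\ c_1\bar\eta\,\Ex{F(w_t)-F^*}\ +\ c_2\bar\eta^{2}\,\tfrac{(N/S-1)\sigma_{F,1}^2}{N}\ +\ c_3\bar\eta\,(\eta R)^{2}\bigl(\sigma_{F,1}^2+\tfrac{\lambda^2\delta^2}{R}\bigr)\kappa_F\ +\ c_4\bar\eta\,\tfrac{\lambda^2\delta^2}{\mu_F},
\end{align*}
with $q\asymp\bar\eta\mu_F$. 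The constant $3+128\kappa_F/\beta$ baked into $\hat\eta_1$ is exactly what makes all the spurious $\Ex{F(w_t)-F^*}$ contributions created by (ii)--(iv) — each carrying an extra $\bar\eta(\eta R)^2L_F^3/\mu_F$-type factor that the step cap forces below $\bar\eta/\kappa_F$ — strictly dominated by $c_1\bar\eta\,\Ex{F(w_t)-F^*}$, so the recursion closes.

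Third, I would invoke the standard weighted-telescoping lemma with the geometrically growing weights $\alpha_t$ of the statement: multiply the recursion by $\alpha_t$, sum over $t=0,\dots,T-1$, divide by $A_T$, and use convexity of $F$ to get $\Ex{F(\bar w^{T})-F^*}\le\sum_t\tfrac{\alpha_t}{A_T}\Ex{F(w_t)-F^*}$. Because $A_T\gtrsim e^{\hat\eta_1\mu_F T/2}$ at the maximal step, dividing the accumulated noise by $c_1\bar\eta A_T$ turns the initial term into $\mathcal O(\Delta_0\mu_F e^{-\hat\eta_1\mu_F T/2})$ and strips one power of $\bar\eta$ from each noise term; then choosing $\bar\eta=\Theta\bigl(\log(\cdot)/(\mu_F T)\bigr)$ — admissible since $T\ge2/(\hat\eta_1\mu_F)$ — makes the $c_2$ term $\tilde{\mathcal O}\bigl((N/S-1)\sigma_{F,1}^2/(\mu_F TN)\bigr)$, the $c_3$ (drift) term $\tilde{\mathcal O}\bigl((R\sigma_{F,1}^2+\lambda^2\delta^2)\kappa_F/(R(T\beta\mu_F)^2)\bigr)$ — the $1/T^2$ ``quadratic-speedup'' term, faster precisely because drift carries $(\eta R)^2=\bar\eta^2/\beta^2$ — and the $c_4$ term the floor $\mathcal O(\lambda^2\delta^2/\mu_F)$; this is exactly~(a). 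For~(b), write $\tilde\theta_i^T(w_T)-w^*=\bigl(\tilde\theta_i^T(w_T)-\hat\theta_i(w_T)\bigr)+\bigl(\hat\theta_i(w_T)-\hat\theta_i(w^*)\bigr)+\bigl(\hat\theta_i(w^*)-w^*\bigr)$: the first bracket is $\le\delta^2$ in expectation by Lemma~\ref{lem:1}; the second is $\le\norm{w_T-w^*}^2$ since $\text{prox}_{f_i/\lambda}$ is nonexpansive, and averaging over $i$ and using strong convexity of $F$ with~(a) bounds it by $\tfrac1{\mu_F}\mathcal O\bigl(\Ex{F(\bar w_{T})-F^*}\bigr)$; the third equals $-\tfrac1\lambda\nabla F_i(w^*)$ by \eqref{E:F_grad}, whose average squared norm is $\sigma_{F,1}^2/\lambda^2$ by definition. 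Summing gives~(b).

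The main obstacle is the joint, tight control of the client-drift and client-sampling errors: one must establish the drift bound with the correct $(\eta R)^2$ scaling, eliminate the heterogeneity it contains via Lemma~\ref{lem:hete}(a) so that only $F(w_t)-F^*$ and $\sigma_{F,1}^2$ remain, and then verify that the specific $\hat\eta_1$ — in particular the $128\kappa_F/\beta$ factor and the server step $\beta$ — makes every induced $F(w_t)-F^*$ term absorbable, so that the dominant error decays at the improved $1/T^2$ rate rather than the usual $1/T$. Threading the $\delta$-inexactness through as a purely linear (bias-driven) term, so that it produces only a floor and not a rate loss, and bookkeeping the interplay of $\beta$, $R$, $\eta$, $\kappa_F$ across the weighted telescoping is the delicate part.
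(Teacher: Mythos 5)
Your proposal is correct and follows essentially the same route as the paper's proof: the same one-step recursion on $\norm{w_t-w^*}^2$ with the Peter--Paul absorption of the $\lambda\delta$-inexactness into the $\mathcal{O}(\lambda^2\delta^2/\mu_F)$ floor, the same $(N/S-1)/(N-1)$ sampling-variance lemma, the same $(\eta R)^2=\tilde{\eta}^2/\beta^2$ drift bound combined with Lemma~\ref{lem:hete}(a), the same absorption of spurious $F(w_t)-F^*$ terms via $\hat{\eta}_1$, and the same weighted telescoping with geometric weights followed by the two-case step-size tuning. The only (immaterial) deviation is in part~(b), where you decompose through $\hat{\theta}_i(w^*)$ and invoke nonexpansiveness of the proximal operator, whereas the paper decomposes through $\hat{\theta}_i(w_T)-w_T=-\tfrac{1}{\lambda}\nabla F_i(w_T)$ and expands around $\nabla F_i(w^*)$; both yield the stated bound.
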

\begin{corollary}\label{coro:1}
	When there is no client sampling (i.e., $S=N$),  we can choose either (i) $\beta = \Theta\nbigP{\sqrt{N}/T}$ if  $\sqrt{N}\geq T$ (i.e., massive clients) or (ii) $\beta = \Theta\nbigP{{N} \sqrt{R}}$ otherwise, to obtain either  linear speedup $\mathcal{O}\bigP{ {1}/{(TRN)}}$ or quadratic speedup $\mathcal{O}\bigP{ {1}/\nbigP{TRN}^2}$ w.r.t computation rounds, respectively.
\end{corollary}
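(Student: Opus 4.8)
The plan is to derive Corollary~\ref{coro:1} as an immediate specialization of Theorem~\ref{Th:1}(a): fix $S = N$, substitute the two proposed schedules for $\beta$, push the proximal-approximation level $\delta$ below the target accuracy via Lemma~\ref{lem:1}, verify the side conditions of Theorem~\ref{Th:1}, and then read off the order of the dominant surviving term.

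First I would set $S = N$ in the bound of Theorem~\ref{Th:1}(a), which makes the client-sampling term $\tilde{\mathcal{O}}\bigl((N/S-1)\sigma_{F,1}^2/(\mu_F T N)\bigr)$ vanish identically. What remains is the optimization-bias term $\mathcal{O}(\Delta_0 \mu_F e^{-\hat{\eta}_1 \mu_F T/2})$, the client-drift term $\tilde{\mathcal{O}}\bigl((R\sigma_{F,1}^2 + \delta^2 \lambda^2)\kappa_F / (R (T\beta\mu_F)^2)\bigr)$, and the proximal-approximation term $\mathcal{O}(\lambda^2\delta^2/\mu_F)$. Using Lemma~\ref{lem:1} I would then choose the mini-batch size $\abs{\mathcal{D}}$ large and the inner accuracy $\nu$ small so that $\delta^2\lambda^2 \le R\sigma_{F,1}^2$ (collapsing the drift term to $\tilde{\mathcal{O}}(\sigma_{F,1}^2\kappa_F/(T\beta\mu_F)^2)$) and simultaneously $\lambda^2\delta^2/\mu_F$ is no larger than the claimed rate; since $\delta^2 = \mathcal{O}(\gamma_f^2/\abs{\mathcal{D}} + \nu)$ this is always achievable by enlarging $\abs{\mathcal{D}}$ and the inner iteration budget $K$, neither of which changes the counts $T$, $R$, $N$ that the speedup statement is about.

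At this point the bound reads $\mathcal{O}(\Delta_0\mu_F e^{-\hat{\eta}_1\mu_F T/2}) + \tilde{\mathcal{O}}\bigl(\sigma_{F,1}^2\kappa_F/(T\beta\mu_F)^2\bigr)$, and it remains to insert $\beta$ and check the hypotheses. For case~(i) (massive clients, $\sqrt{N} \ge T$) take $\beta = \Theta(\sqrt{N}/T)$: then $\beta \ge 1$, and in this regime $\kappa_F/\beta$ is small so $\hat{\eta}_1 = \Theta(1/L_F)$, whence the warm-up condition $T \ge 2/(\hat{\eta}_1\mu_F) = \Theta(\kappa_F)$ is a mild lower bound on $T$ and a valid step size $\eta = \hat{\eta}_1/(\beta R)$ exists; the exponent $\hat{\eta}_1\mu_F T/2 = \Theta(T/\kappa_F)$ drives the bias term below the polynomial target once $T = \tilde{\Omega}(\kappa_F)$, and substituting $\beta$ into the drift term leaves the claimed linear-speedup order $\mathcal{O}(1/(TRN))$. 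Case~(ii) is handled the same way with the larger schedule $\beta = \Theta(N\sqrt{R})$ (trivially $\ge 1$): the same substitution into $\tilde{\mathcal{O}}(\sigma_{F,1}^2\kappa_F/(T\beta\mu_F)^2)$ now yields the $\mathcal{O}(1/(TRN)^2)$ rate.

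I expect the main obstacle to be bookkeeping the coupling among $\beta$, $\hat{\eta}_1$, and the admissible range $\eta \le \hat{\eta}_1/(\beta R)$: because $\hat{\eta}_1$ itself contains the term $128\kappa_F/\beta$, one must confirm that increasing $\beta$ genuinely keeps $\hat{\eta}_1 = \Theta(1/L_F)$ and keeps $T \ge 2/(\hat{\eta}_1\mu_F)$ satisfiable, and that the constants and polylogarithmic factors hidden by $\tilde{\mathcal{O}}$ do not reintroduce $\beta$-dependence that would spoil the balancing. A secondary point worth being explicit about is just how small $\delta$ must be in the quadratic-speedup regime: one needs $\delta^2 = \tilde{\mathcal{O}}\bigl(\mu_F/(\lambda^2 (TRN)^2)\bigr)$, i.e.\ $\abs{\mathcal{D}} = \tilde{\Omega}((TRN)^2)$ or a correspondingly small $\nu$, so the statement should be read as inflating only the per-round sample and inner-loop cost, not the round complexity to which the $\mathcal{O}(1/(TRN))$ and $\mathcal{O}(1/(TRN)^2)$ rates refer.
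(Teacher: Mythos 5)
The paper contains no standalone proof of Corollary~\ref{coro:1}; it is meant to follow by specializing Theorem~\ref{Th:1}(a), which is exactly the route you take, and your reduction steps are sound (setting $S=N$ kills the sampling term, and Lemma~\ref{lem:1} lets you drive $\delta$ down by enlarging $\abs{\mathcal{D}}$ and shrinking $\nu$ without touching $T$, $R$, $N$). The gap is that you never actually carry out the substitution you announce, and the arithmetic does not deliver the rates you claim. After your simplification the drift term is $\tilde{\mathcal{O}}\bigl(\sigma_{F,1}^2\kappa_F/(T\beta\mu_F)^2\bigr)$. For case (i), $\beta=\Theta(\sqrt{N}/T)$ gives $(T\beta)^2=\Theta(N)$, so this term is $\tilde{\mathcal{O}}(1/N)$, not $\mathcal{O}(1/(TRN))$. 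For case (ii), $\beta=\Theta(N\sqrt{R})$ gives $(T\beta)^2=\Theta(T^2N^2R)$, so the term is $\tilde{\mathcal{O}}(1/(T^2N^2R))=\tilde{\mathcal{O}}(R/(TRN)^2)$, off by a factor of $R$ from the claimed quadratic speedup. Notably, the one piece of the drift term that \emph{is} exactly $\mathcal{O}(1/(TRN)^2)$ under schedule (ii) is $\delta^2\lambda^2\kappa_F/\bigl(R(T\beta\mu_F)^2\bigr)=\delta^2\lambda^2\kappa_F/\bigl(\mu_F^2(TRN)^2\bigr)$ --- precisely the piece you chose to make subdominant by imposing $\delta^2\lambda^2\le R\sigma_{F,1}^2$. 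So the proof as written does not establish the stated rates: you need either to identify which error component the ``speedup'' is measured on and show the remaining components are dominated (e.g.\ arrange $R\sigma_{F,1}^2\lesssim\delta^2\lambda^2$ rather than the reverse), or to account explicitly for the residual factors of $TR$ (case (i)) and $R$ (case (ii)) that plain substitution leaves behind.

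A secondary imprecision: you assert $\hat{\eta}_1=\Theta(1/L_F)$ in case (i) because ``$\kappa_F/\beta$ is small,'' but $\sqrt{N}\ge T$ only guarantees $\beta\ge 1$, not $\beta\gtrsim\kappa_F$; when $\beta=\Theta(1)$ one gets $\hat{\eta}_1=\Theta(1/(L_F\kappa_F))$ and the warm-up condition inflates to $T\gtrsim\kappa_F^2$. This does not break the argument, since the first term still decays exponentially in $T$, but it should be stated correctly rather than assumed.
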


\begin{remark}
	Theorem~\ref{Th:1}~(a) shows the convergence of the global model w.r.t four error terms, where the expectation is w.r.t the randomness of mini-batch and client samplings.  While the first term shows that a carefully chosen constant step size can reduce the initial error  $\norm{w_0 - w^*}^2$ linearly, the last  term means that \OurAlg  converges towards a ${\frac{  \lambda^2 \delta^2}{\mu_F}}$-neighbourhood of $w^*$, due to the approximation error $\delta$ at each local round. The second error term is due to the client sampling, which obviously is  0 when $S=N$.   If we choose $S$ such that $S/N$ corresponds to a fixed ratio, e.g., 0.5, then we can obtain  a linear speedup  $\mathcal{O}\bigP{ {1}/{(TN)}}$ w.r.t communication rounds for client sampling error. The third error term is due to client drift with multiple local updates. According to Corollary~\ref{coro:1}, we are able to obtain the quadratic speedup, while most of existing FL convergence analysis of strongly convex loss functions can only achieve linear speedup 
	\cite{khaled_tighter_2020, karimireddy_scaffold_2020, deng_adaptive_2020}. Theorem~\ref{Th:1} (b) shows the convergence of personalized models in average to a ball of center $w^*$ and radius  $\mathcal{O} \bigP{\frac{  \lambda^2 \delta^2}{\mu_F} + \frac{\sigma_{F,1}^2 }{\lambda^2} + \delta^2}$, which shows that $\lambda$ can be controlled to trade off reducing the errors between  $\delta^2$ and $ \sigma_{F,1}^2$. 
	%Theorem~\ref{Th:1} (b) shows the average distance between personalized models and the global model converges to a bounded value $\bigP{\frac{  \lambda^2 \delta^2}{\mu_F} + \frac{\sigma_{F,1}^2 }{\lambda^2} + \delta^2}$.  
\end{remark}

\begin{theorem}[Nonconvex and smooth \OurAlg's convergence] \label{Th:2}
	Let Assumptions~\ref{Asm:0}(b), \ref{Asm:1}, and~\ref{Asm:2} hold.  If  $\eta \leq \frac{\hat{\eta}_2 }{\beta R}$, where $\hat{\eta}_2 \defeq \frac{1}{ 75 L_F \lambda^2 }$ with  $\lambda \geq \sqrt{8L^2 + 1}$  and $\beta\geq 1$, then we have
	\begin{align*}
	&(a)\,\Ex{\norm{\nabla F(w_{t^*})}^2} \leq \mathcal{O}\BigP{ \Ex{\norm{\nabla F(w_{t^*})}^2}} \defeq\nonumber \\
	&\qquad \qquad \mathcal{O} \biggP{ \frac{ \Delta_F}{\hat{\eta}_2 T}  + \frac{ \bigP{\Delta_F  L_F { \sigma_{F,2}^2  \nbigP{N/S-1} }}^{\frac{1}{2}} }{\sqrt{T N}} + \frac{\nbigP{\Delta_F} ^{\frac{2}{3}}  \bigP{{R\sigma_{F,2}^2 + \lambda^2 \delta^2}}^{\frac{1}{3}} }{\beta^{\frac{4}{3}} R^{\frac{1}{3}} T^{\frac{2}{3}}} + \lambda^2 \delta^2}\nonumber \\
	&(b)\,\frac{1}{N}  \SumLim{i = 1}{N} \Ex{\norm{\tilde{\theta}_{i}^{t^*}(w_{t^*}) - w_{t^*}}^2 } \leq  \mathcal{O}\BigP{ \Ex{\norm{\nabla F(w_{t^*})}^2}} + \mathcal{O} \biggP{\frac{\sigma_{F,2}^2  }{\lambda^2 } + \delta^2},
	\end{align*}
	where $\Delta_F \defeq F(w_0) - F^*$, and  $t^* \in  \{0, \ldots, T-1\}$ is sampled uniformly. 
\end{theorem}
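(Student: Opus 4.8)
The plan is to run the standard nonconvex ``descent-lemma plus telescoping'' argument, but on the Moreau envelope $F$ rather than on $f$, using Proposition~\ref{pro:1} (which, since $\lambda \ge \sqrt{8L^2+1} > 2L$, gives $\nabla F$ is $L_F$-smooth with $L_F=\lambda$ and $\nabla F_i(w)=\lambda(w-\hat\theta_i(w))$ via \eqref{E:F_grad}), while carefully tracking three error sources: the $\delta$-inexactness of the inner prox solve \eqref{E:prox}, the client drift from $R$ local steps, and the variance of sampling $S$ of $N$ clients. First I would apply $L_F$-smoothness to the server update $w_{t+1}=(1-\beta)w_t+\beta\bar w_R^t$, $\bar w_R^t = \tfrac1S\sum_{i\in\mathcal S^t}w_{i,R}^t$, to get $F(w_{t+1})\le F(w_t)+\langle\nabla F(w_t),w_{t+1}-w_t\rangle+\tfrac{L_F}{2}\norm{w_{t+1}-w_t}^2$. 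Taking expectation over the uniform client sampling, $\mathbb E[w_{t+1}-w_t]=-\eta\beta\tfrac1N\sum_{i=1}^N\sum_{r=0}^{R-1}\lambda(w_{i,r}^t-\tilde\theta_i(w_{i,r}^t))$, and by \eqref{E:F_grad} each summand equals $\nabla F_i(w_{i,r}^t)+\lambda(\hat\theta_i(w_{i,r}^t)-\tilde\theta_i(w_{i,r}^t))$; splitting $\nabla F_i(w_{i,r}^t)=\nabla F_i(w_t)+(\nabla F_i(w_{i,r}^t)-\nabla F_i(w_t))$ extracts the descent term $-\tfrac{\eta\beta R}{2}\norm{\nabla F(w_t)}^2$, with the remainder bounded by $L_F$-smoothness in terms of the drift and by Young's inequality in terms of $\delta$.

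Second, I would bound the client drift $\tfrac1N\sum_i\mathbb E\norm{w_{i,r}^t-w_t}^2$. Unrolling $w_{i,r+1}^t-w_t=(w_{i,r}^t-w_t)-\eta(\nabla F_i(w_{i,r}^t)+\lambda(\hat\theta_i-\tilde\theta_i))$, using $\norm{\nabla F_i(w_{i,r}^t)}\le\norm{\nabla F_i(w_t)}+L_F\norm{w_{i,r}^t-w_t}$, and invoking the step-size restriction $\eta\le\hat\eta_2/(\beta R)$ (which makes $\eta R L_F\le 1/(75\lambda^2\beta)$ a small constant), a discrete Gr\"onwall/geometric-sum argument gives $\mathbb E\norm{w_{i,r}^t-w_t}^2\lesssim\eta^2R^2(\norm{\nabla F_i(w_t)}^2+\lambda^2\delta^2)$, where $\mathbb E\norm{\hat\theta_i-\tilde\theta_i}^2\le\delta^2$ is used from Lemma~\ref{lem:1}. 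Averaging over $i$ and applying Lemma~\ref{lem:hete}(b) to replace $\tfrac1N\sum_i\norm{\nabla F_i(w_t)}^2\le 2\norm{\nabla F(w_t)}^2+2\big(\tfrac{8L^2}{\lambda^2-8L^2}\norm{\nabla F(w_t)}^2+2\sigma_{F,2}^2\big)$ yields $\tfrac1N\sum_i\mathbb E\norm{w_{i,r}^t-w_t}^2\lesssim\eta^2R^2(\norm{\nabla F(w_t)}^2+\sigma_{F,2}^2+\lambda^2\delta^2)$; here the hypothesis $\lambda\ge\sqrt{8L^2+1}$ is exactly what keeps the $\lambda^2-8L^2$ denominator bounded away from zero and lets the $\tfrac{8L^2}{\lambda^2-8L^2}$ factor be absorbed into the $\lambda^2$ in $\hat\eta_2$.

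Third, for $\tfrac{L_F}{2}\mathbb E\norm{w_{t+1}-w_t}^2=\tfrac{L_F\beta^2}{2}\mathbb E\norm{\bar w_R^t-w_t}^2$ I would split into the squared mean plus the sampling variance, the latter being $\tfrac{N-S}{S(N-1)}\asymp\tfrac{N/S-1}{N}$ times $\tfrac1N\sum_i\norm{w_{i,R}^t-\bar w^t}^2\le\tfrac1N\sum_i\norm{w_{i,R}^t-w_t}^2$, which the drift bound controls. Collecting all terms, rearranging, and choosing $\eta$ so the coefficient of $\norm{\nabla F(w_t)}^2$ stays uniformly positive (forcing $\eta\lesssim 1/(L_F\lambda^2\beta R)$, hence the $\hat\eta_2$ and the constant $75$) gives a one-step inequality $\eta\beta R\,\mathbb E\norm{\nabla F(w_t)}^2\lesssim\mathbb E[F(w_t)]-\mathbb E[F(w_{t+1})]+\eta^2\beta^2R^2L_F\big(\text{drift/variance residuals}\big)$. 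Telescoping over $t=0,\dots,T-1$, dividing by $\eta\beta RT$, and noting that the average over $t$ equals $\mathbb E\norm{\nabla F(w_{t^*})}^2$ since $t^*$ is uniform, I then optimize $\eta$ over $(0,\hat\eta_2/(\beta R)]$: the boundary choice, the value balancing the client-drift residual, and the value balancing the sampling-variance residual produce respectively the $\tfrac{\Delta_F}{\hat\eta_2 T}$, the order-$2/3$ term, and the order-$1/2$ term in (a), leaving the irreducible $\lambda^2\delta^2$. Part (b) is immediate: by \eqref{E:F_grad}, $\norm{\tilde\theta_i(w)-w}\le\tfrac1\lambda\norm{\nabla F_i(w)}+\norm{\hat\theta_i(w)-\tilde\theta_i(w)}$; squaring, taking expectations with $\mathbb E\norm{\hat\theta_i-\tilde\theta_i}^2\le\delta^2$, averaging over $i$ at $w=w_{t^*}$, and once more using Lemma~\ref{lem:hete}(b) to bound $\tfrac1N\sum_i\norm{\nabla F_i(w_{t^*})}^2\le\mathcal O(\norm{\nabla F(w_{t^*})}^2+\sigma_{F,2}^2)$ gives the stated ball.

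The main obstacle I expect is the bookkeeping that keeps the $\delta$-approximation error from contaminating the descent: each inexact local step injects an error $\lambda(\hat\theta_i-\tilde\theta_i)$ that is only controlled in squared expectation by $\delta^2$ (Lemma~\ref{lem:1}), and this error enters the inner-product term linearly and the drift and sampling-variance terms quadratically, so one must propagate it through the unrolling and the telescoping so the final residual is exactly $\lambda^2\delta^2$ and not something with a worse $\lambda$, $R$, or $\beta$ dependence. Equally delicate is the constant-calibration: the $75$ in $\hat\eta_2=1/(75L_F\lambda^2)$ and the threshold $\lambda\ge\sqrt{8L^2+1}$ must be chosen so that every ``$\lesssim$'' above becomes a genuine inequality once the $\lambda^2-8L^2$ denominators of Lemma~\ref{lem:hete}(b) and the multiple Young-inequality splittings are made explicit, and so that the drift feedback into $\norm{\nabla F(w_t)}^2$ is strictly sub-unit, leaving a net negative coefficient to telescope.
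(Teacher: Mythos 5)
Your proposal follows essentially the same route as the paper's proof: a descent-lemma step on the $L_F$-smooth Moreau envelope $F$, a recursive (Gr\"onwall-type) bound on the client drift $\|w_{i,r}^t-w_t\|^2$ feeding the $\delta$-inexactness through Lemma~\ref{lem:1}, the client-sampling variance computation, Lemma~\ref{lem:hete}(b) to absorb $\frac{1}{N}\sum_i\|\nabla F_i(w_t)\|^2$ using $\lambda^2-8L^2\geq 1$, and the standard two-case step-size tuning after telescoping; part (b) is likewise handled identically via $\hat{\theta}_i(w)-w=-\nabla F_i(w)/\lambda$. The minor differences (splitting $\|\bar w_R^t-w_t\|^2$ into mean plus sampling variance rather than decomposing $\|g_t\|^2$ directly, and a slightly looser $R$-dependence in the $\delta^2$ part of the drift bound) are cosmetic and still yield the stated rates.
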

\begin{corollary}\label{coro:2}
	When there is no client sampling,  we can choose $\beta = \Theta \nbigP{N^{1/2}R^{1/4}}$ and $\Theta \nbigP{T^{{1}/{3}}} = \Theta \nbigP{\nbigP{NR}^{{2}/{3}}}$ to obtain a sublinear speed-up of $\mathcal{O}\bigP{ {1}/{\nbigP{TRN}^{{2}/{3}}}}$. 
\end{corollary}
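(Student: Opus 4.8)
\textbf{Proof plan for Corollary~\ref{coro:2}.} The statement is obtained by specializing the bound of Theorem~\ref{Th:2}(a) to the no-client-sampling regime $S=N$ and then choosing $\beta$, $T$, and the inner approximation level $\delta$ so that every surviving error term collapses to the common rate $\mathcal{O}\bigP{1/(TRN)^{2/3}}$; this mirrors the bookkeeping behind Corollary~\ref{coro:1}. First I would set $S=N$, which annihilates the client-sampling term $\frac{\bigP{\Delta_F L_F\sigma_{F,2}^2(N/S-1)}^{1/2}}{\sqrt{TN}}$ since $N/S-1=0$. Next, by Lemma~\ref{lem:1} I would enlarge the mini-batch size $\abs{\mathcal{D}}$ and shrink the inner accuracy $\nu$ so that $\lambda^2\delta^2=\mathcal{O}\bigP{1/(TRN)^{2/3}}$; this both makes the stand-alone $\lambda^2\delta^2$ term acceptable and renders $\lambda^2\delta^2$ negligible next to $R\sigma_{F,2}^2$ inside $\bigP{R\sigma_{F,2}^2+\lambda^2\delta^2}^{1/3}$. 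After these reductions only the initial-condition term $\frac{\Delta_F}{\hat{\eta}_2 T}$ and the client-drift term survive.

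For the initial-condition term, I would note that $\hat{\eta}_2=\frac{1}{75 L_F\lambda^2}$ with $L_F=\lambda$ (Proposition~\ref{pro:1}) is a constant independent of $T,N,R$, so this term is $\Theta(1/T)$; taking $T$ with $\Theta(T^{1/3})=\Theta((NR)^{2/3})$, i.e.\ $T=\Theta((NR)^2)$, and using $TRN=\Theta((NR)^3)$, gives $1/T=\Theta\bigP{1/(TRN)^{2/3}}$. For the client-drift term $\frac{(\Delta_F)^{2/3}\bigP{R\sigma_{F,2}^2+\lambda^2\delta^2}^{1/3}}{\beta^{4/3}R^{1/3}T^{2/3}}$, I would cancel the $R^{1/3}$ factors to obtain $\mathcal{O}\bigP{(\Delta_F)^{2/3}\sigma_{F,2}^{2/3}/(\beta^{4/3}T^{2/3})}$, then substitute $\beta=\Theta(N^{1/2}R^{1/4})$ (so $\beta^{4/3}=\Theta(N^{2/3}R^{1/3})$) together with $T^{2/3}=\Theta((NR)^{4/3})$, and verify this is again of order $\mathcal{O}\bigP{1/(TRN)^{2/3}}$. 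I would also check the side conditions of Theorem~\ref{Th:2}: $\beta=\Theta(N^{1/2}R^{1/4})\ge 1$, an admissible step size $\eta\le\hat{\eta}_2/(\beta R)$ exists, and $\lambda\ge\sqrt{8L^2+1}$ holds for the fixed choice of $\lambda$. Summing the two surviving $\mathcal{O}\bigP{1/(TRN)^{2/3}}$ contributions proves part (a); part (b) then follows by inserting the part-(a) bound into Theorem~\ref{Th:2}(b), leaving the irreducible personalization radius $\mathcal{O}\bigP{\sigma_{F,2}^2/\lambda^2+\delta^2}$.

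The main obstacle is not any single computation but the joint tuning: $\beta$ must grow in both $N$ and $R$ to damp the client-drift error, while $T$ must be scaled so that the dimension-free $\Theta(1/T)$ initial-condition error lands on the very same rate, and both choices must respect the step-size ceiling $\eta\le\hat{\eta}_2/(\beta R)$ and the smoothness constraint $\lambda\ge\sqrt{8L^2+1}$. It is precisely the pair $\beta=\Theta(N^{1/2}R^{1/4})$, $T=\Theta((NR)^2)$ that equalizes the initial-condition and client-drift errors at the advertised sublinear speed-up of order $2/3$; everything else is routine algebra.
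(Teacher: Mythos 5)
Your overall strategy---set $S=N$, drive $\lambda^2\delta^2$ down via $\abs{\mathcal{D}}$ and $\nu$, then balance the initial-condition and client-drift terms through $T=\Theta\nbigP{(NR)^2}$ and the prescribed $\beta$---is exactly the substitution the corollary is built on, and your treatment of the first term is correct: $\hat{\eta}_2$ is independent of $T,N,R$, so $\Delta_F/(\hat{\eta}_2 T)=\Theta(1/T)=\Theta\bigP{1/(TRN)^{2/3}}$ once $T=\Theta\nbigP{(NR)^2}$.

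The gap is in your verification of the client-drift term. After you cancel the $R^{1/3}$ factors you are left with $\Theta\bigP{\sigma_{F,2}^{2/3}/(\beta^{4/3}T^{2/3})}$, and substituting $\beta^{4/3}=\Theta\nbigP{N^{2/3}R^{1/3}}$ and $T^{2/3}=\Theta\nbigP{(NR)^{4/3}}$ gives $\Theta\bigP{\sigma_{F,2}^{2/3}/(N^2R^{5/3})}$, which is a factor $R^{1/3}$ \emph{larger} than the target $1/(TRN)^{2/3}=1/(N^2R^2)$. The check you claim to perform therefore fails. The choice $\beta=\Theta\nbigP{N^{1/2}R^{1/4}}$ is engineered so that the \emph{entire} denominator satisfies $\beta^{4/3}R^{1/3}T^{2/3}=\Theta\bigP{(NR)^{2/3}(NR)^{4/3}}=\Theta\bigP{(TRN)^{2/3}}$; the advertised rate is then read off by absorbing the whole numerator $(\Delta_F)^{2/3}\bigP{R\sigma_{F,2}^2+\lambda^2\delta^2}^{1/3}$ into the hidden constant of $\mathcal{O}(\cdot)$, without performing your cancellation. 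If you instead insist on cancelling $(R\sigma_{F,2}^2)^{1/3}/R^{1/3}=\sigma_{F,2}^{2/3}$ so that only $R$-free quantities are hidden, then matching the drift term to $1/(TRN)^{2/3}$ forces $\beta^{4/3}=\Theta\nbigP{(NR)^{2/3}}$, i.e.\ $\beta=\Theta\nbigP{N^{1/2}R^{1/2}}$, not $\Theta\nbigP{N^{1/2}R^{1/4}}$. Either reconcile your bookkeeping with the statement (keep the $R^{1/3}$ in the numerator and observe that the denominator alone already equals $(TRN)^{2/3}$), or note that a genuinely $R$-free hidden constant requires the larger $\beta$; as written, the two halves of your argument are inconsistent with each other.
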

\begin{remark}
	Theorem~\ref{Th:2} shows a similar convergence structure to that of Theorem~\ref{Th:1}, but with a sublinear rate for nonconvex case. According to Corollary~\ref{coro:2}, we are able to obtain the sublinear speedup $\mathcal{O}\bigP{ {1}/{\nbigP{TRN}^{{2}/{3}}}}$, while most of existing convergence analysis for nonconvex FL can only achieve a sublinear speed-up of $\mathcal{O}\bigP{ {1}/{\sqrt{TRN}}}$ \cite{karimireddy_scaffold_2020, reddi_adaptive_2020, deng_adaptive_2020}. 
\end{remark}

\section{Experimental Results and Discussion}
In this section, we validate the performance of \OurAlg when the data distributions are heterogeneous and non-i.i.d. We first observe the effect of hyperparameters $R$, $K$, $|\mathcal{D}|$, $\lambda$, and $\beta$ on the convergence of \OurAlg. We then compare \OurAlg with FedAvg and Per-FedAvg in both $\mu$-strongly convex and nonconvex settings.

\subsection{Experimental Settings}
We consider a classification problem using both  real  (MNIST) and  synthetic datasets. MNIST \cite{lecun_gradient-based_1998} is a handwritten digit dataset containing 10 labels and 70,000 instances. Due to the limitation on MNIST's data size, we distribute the complete dataset to $ N = 20$ clients. To model a heterogeneous setting in terms of local data sizes and classes,  each client is allocated a different local data size in the range of $[1165, 3834]$, and only has 2 of the 10 labels. For synthetic data, we adopt the data generation and distribution procedure from \cite{li_federated_2019}, using two parameters $\bar{\alpha}$ and $\bar{\beta}$ to control how much the local model and the dataset of each client differ, respectively. Specifically, the dataset serves a 10-class classifier using 60-dimensional real-valued data. We generate a synthetic dataset with $\bar{\alpha} = 0.5$ and $\bar{\beta} =0.5$.  Each client's data size is in the range of $[250, 25810]$. Finally, we distribute the data to $ N = 100$ clients according to the power law in \cite{li_federated_2019}. 

We fix the subset of clients %$|\mathcal{D}_{i, r}^t| = 20$ for all datasets and set 
$S = 5$ for MNIST, and $S = 10$ for Synthetic.  We compare the algorithms using both cases of the same and fine-tuned learning rates, batch sizes, and number of local  and global iterations. For $\mu$-strongly convex setting, we consider a $l_2$-regularized multinomial logistic regression model (MLR) with the softmax activation and cross-entropy loss functions. 
%The $\mu$-strongly convex loss function $f_i(\theta)$ is defined as follows:
%\begin{align*}
%f_i(\theta) = -\frac{1}{M}\biggS{\sum_{m=1}^{M}\sum_{c=1}^{C}1\{y^{(m)} = c\} log\frac{exp(\theta^{(c)T}x^m)}{\sum_{j=1}^{C}exp(\theta^{(j)T}x^m)}} + \frac{\mu}{2} |\theta|^2
%\end{align*}
For nonconvex case, a two-layer deep neural network (DNN) is implemented with hidden layer of size $100$ for MNIST and $20$ for Synthetic using ReLU activation and a softmax layer at the end. For \OurAlg, we use gradient descent to obtain $\delta$-approximate $\tilde{\theta}_i (w_{i, r}^t)$ and the personalized model is evaluated on the personalized parameter $\tilde\theta_i$ while the global model is evaluated on $w$. For the comparison with Per-FedAvg, we use its personalized model which is the local model after taking an SGD step from the global model.

All datasets are split randomly with 75\% and 25\% for training and testing, respectively. %For Per-FedAvg, a one-step update is used during validation on the test set.  so the test data and train data over the nodes have the same distribution.
%. For each comparison, we fix the randomly selected devices, the stragglers, and mini-batch orders across all runs.
%As a note during the evaluation, we use both the global model ($w^t_i$) and the personalized model ($\theta^t_i$) to indicate their difference, especially the performance gain when using the personalized model.
All experiments were conducted using PyTorch \cite{paszke_pytorch_2019} version 1.4.0. 
The code and datasets are available online\footnote{\href{https://github.com/CharlieDinh/pFedMe}{https://github.com/CharlieDinh/pFedMe}}.

%\subsection{Effect of hyperparameters on Personalized Federated Learning's convergence}
\subsection{Effect of hyperparameters}
To understand how different hyperparameters such as $R$, $K$, $|\mathcal{D}|$, $\lambda$, and $\beta$  affect the convergence of \OurAlg in both $\mu$-strongly convex and nonconvex settings, we conduct various experiments on MNIST dataset with $\eta = 0.005$ and $S = 5$. 
\begin{figure}[t!]
	\centering
	\begin{subfigure}{0.245\linewidth} % width of left subfigure
		\includegraphics[width=1\linewidth]{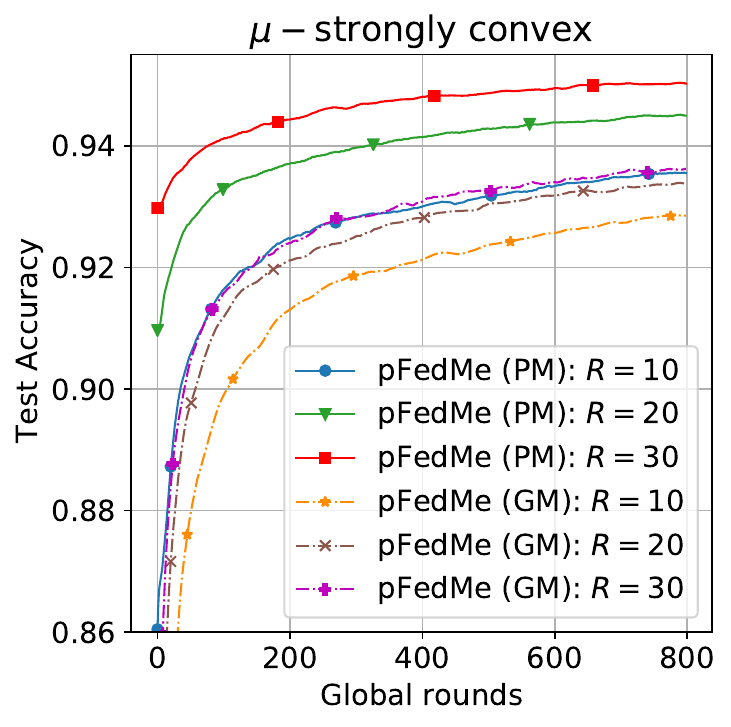}
		%\caption{Average testing accurancy }
		\label{F:Mnist_accurancy}
	\end{subfigure}
	\begin{subfigure}{0.245\linewidth} % width of left subfigure
		\includegraphics[width=1\linewidth]{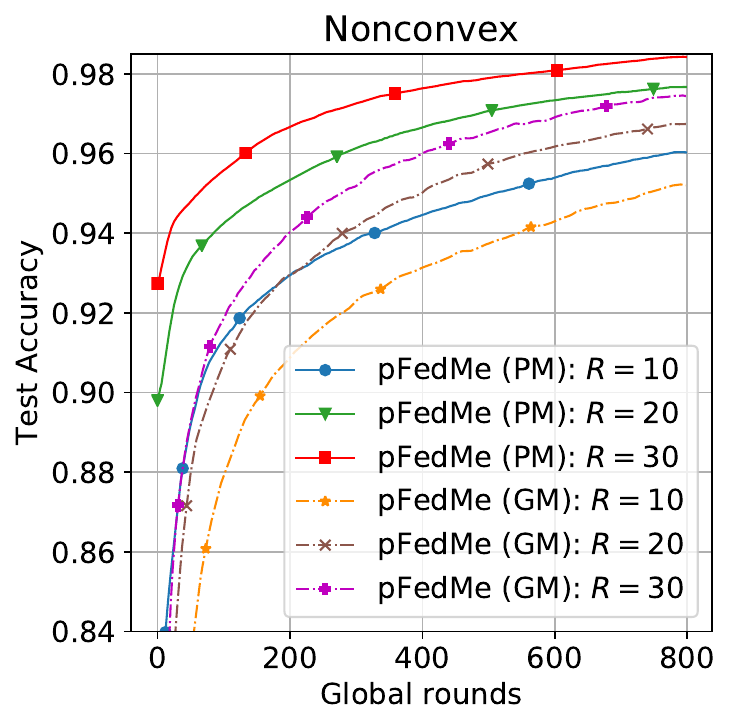}
		%\caption{Average testing accurancy }
		\label{F:Mnist_accurancy1}
	\end{subfigure}
	% \hspace{1em} % here you can insert horizontal or vertical space
	\begin{subfigure}{0.245\linewidth} % width of right subfigure
		\includegraphics[width=1\linewidth]{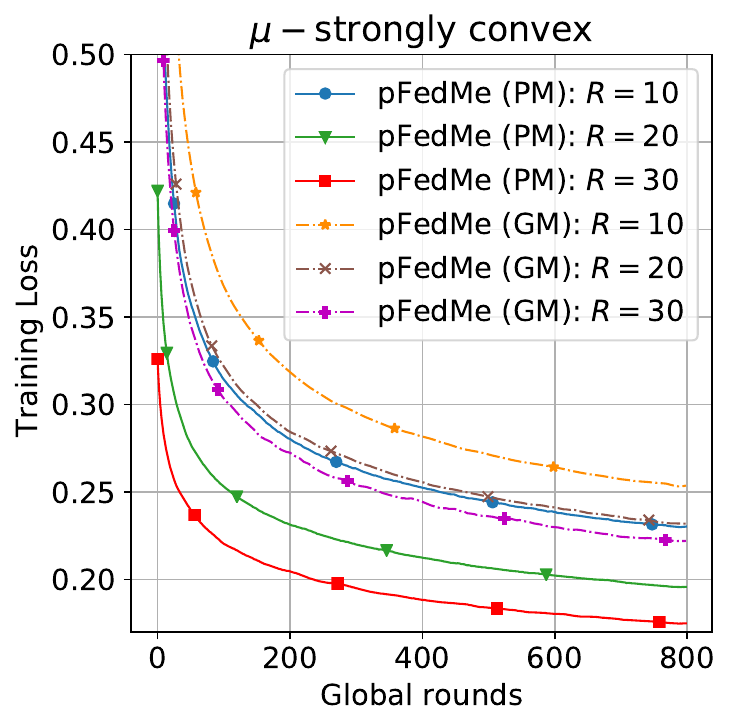}
		%\caption{$\mu$-strongly convex seting (MNIST)}
		\label{F:Mnist_loss1}
	\end{subfigure}
	\begin{subfigure}{0.245\linewidth} % width of right subfigure
		\includegraphics[width=1\linewidth]{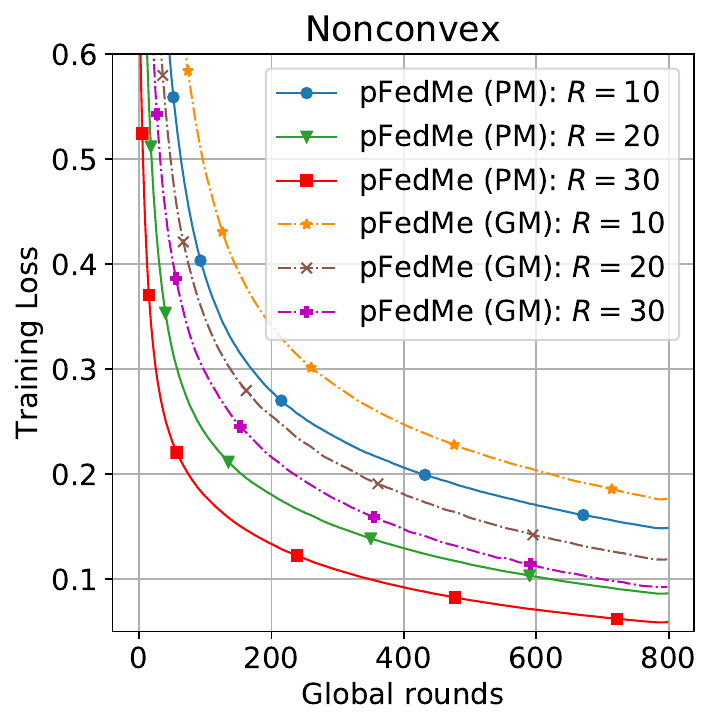}
		%\caption{nonconvex  seting (MNIST)}
		\label{F:Mnist_loss2}
	\end{subfigure}
	\caption{Effect of $R$ on the convergence of \OurAlg in $\mu$-strongly convex and nonconvex settings on MNIST ($|\mathcal{D}| = 20$, $\lambda = 15$, $K = 5$, $\beta =1$). }
	\label{F:Mnist_R}
\end{figure}
\begin{figure}[t!]
	\centering
	\begin{subfigure}{0.245\linewidth} % width of left subfigure
		\includegraphics[width=1\linewidth]{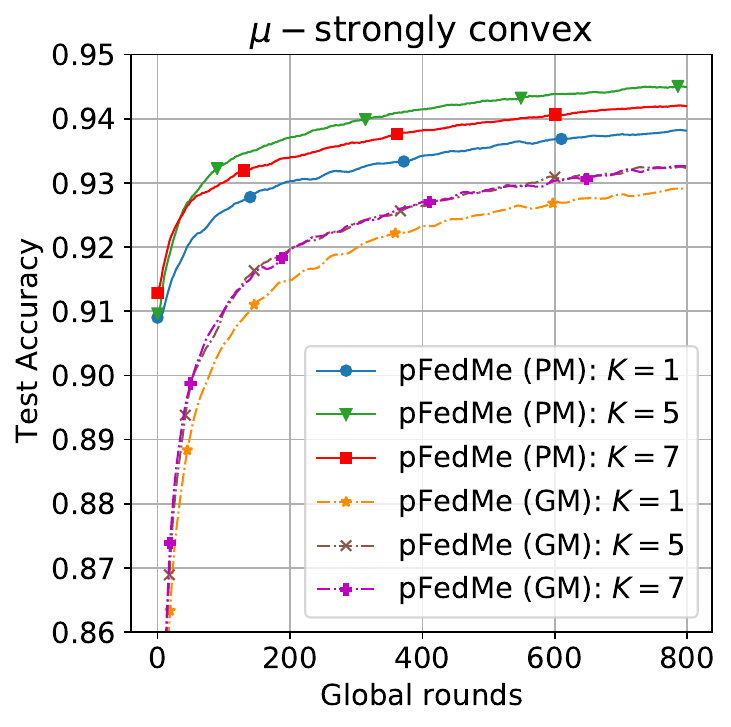}
		%\caption{Average testing accurancy }
		\label{F:Mnist_accurancy}
	\end{subfigure}
	\begin{subfigure}{0.245\linewidth} % width of left subfigure
		\includegraphics[width=1\linewidth]{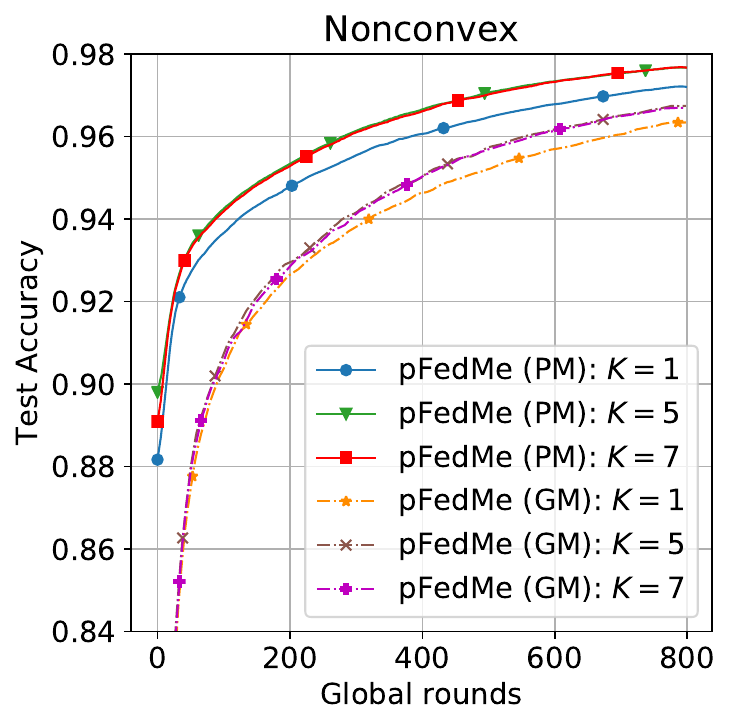}
		%\caption{Average testing accurancy }
		\label{F:Mnist_accurancy1}
	\end{subfigure}
	% \hspace{1em} % here you can insert horizontal or vertical space
	\begin{subfigure}{0.245\linewidth} % width of right subfigure
		\includegraphics[width=1\linewidth]{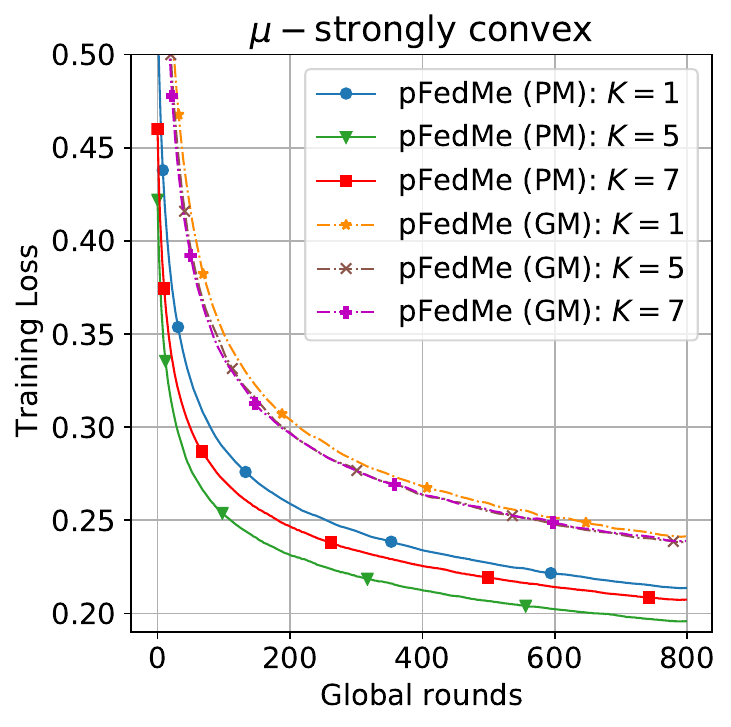}
		%\caption{$\mu$-strongly convex seting (MNIST)}
		\label{F:Mnist_loss1}
	\end{subfigure}
	\begin{subfigure}{0.245\linewidth} % width of right subfigure
		\includegraphics[width=1\linewidth]{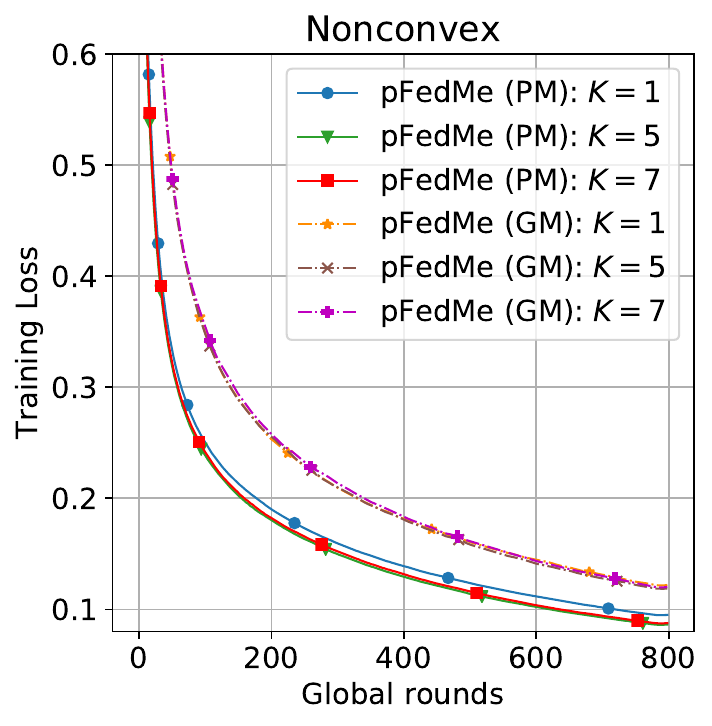}
		%\caption{nonconvex  seting (MNIST)}
		\label{F:Mnist_loss2}
	\end{subfigure}
	\caption{Effect of $K$ on the convergence of \OurAlg in $\mu$-strongly convex and nonconvex settings on MNIST ($|\mathcal{D}| = 20$, $\lambda = 15$, $R = 20$, $\beta =1$).} 
	\label{F:Mnist_K}
\end{figure}
\begin{figure}[t!]
	\centering
	\begin{subfigure}{0.245\linewidth} % width of left subfigure
		\includegraphics[width=1\linewidth]{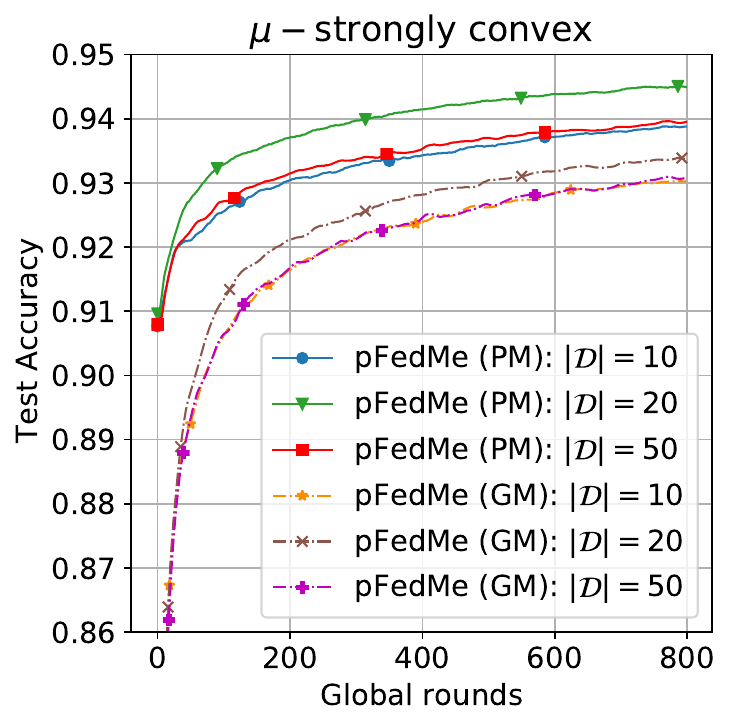}
		%\caption{Average testing accurancy }
		\label{F:Mnist_accurancy}
	\end{subfigure}
	\begin{subfigure}{0.245\linewidth} % width of left subfigure
		\includegraphics[width=1\linewidth]{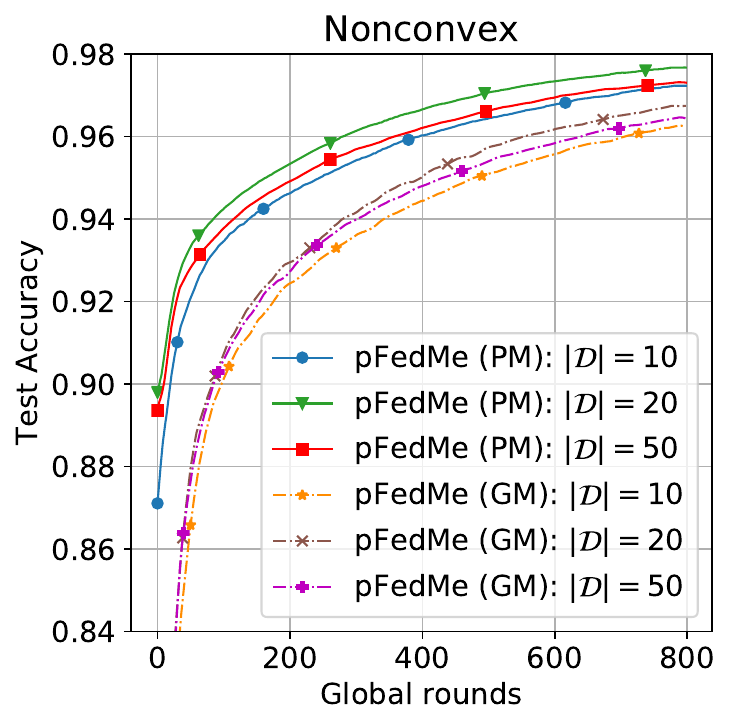}
		%\caption{Average testing accurancy }
		\label{F:Mnist_accurancy1}
	\end{subfigure}
	% \hspace{1em} % here you can insert horizontal or vertical space
	\begin{subfigure}{0.245\linewidth} % width of right subfigure
		\includegraphics[width=1\linewidth]{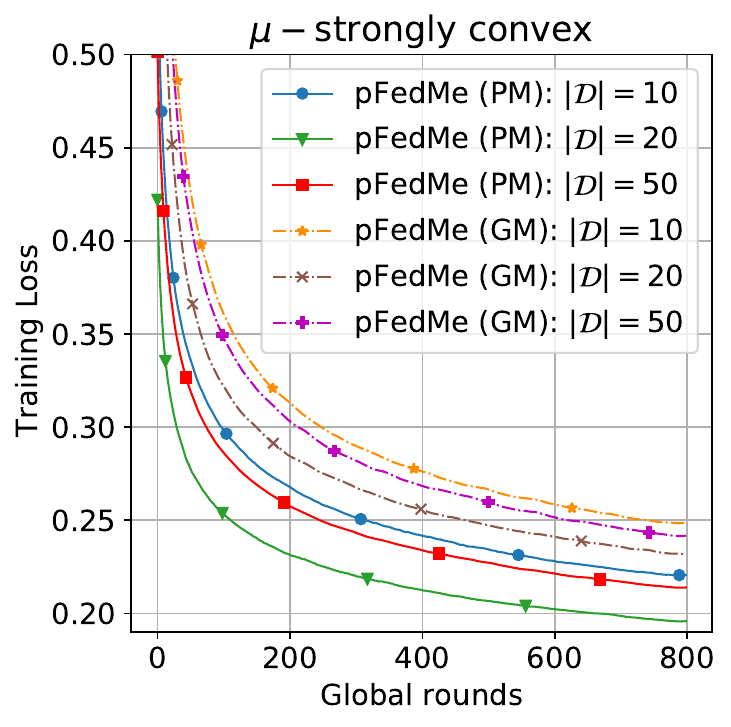}
		%\caption{$\mu$-strongly convex seting (MNIST)}
		\label{F:Mnist_loss1}
	\end{subfigure}
	\begin{subfigure}{0.245\linewidth} % width of right subfigure
		\includegraphics[width=1\linewidth]{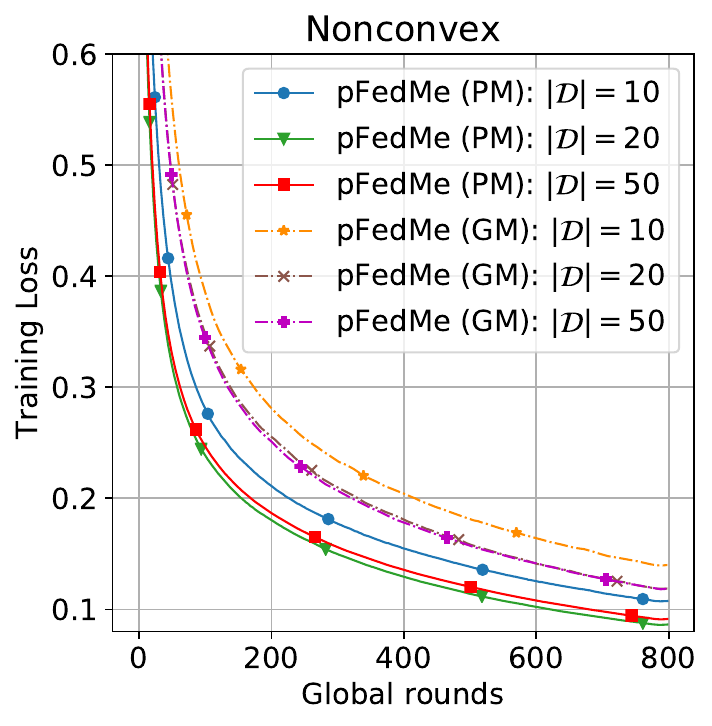}
		%\caption{nonconvex  seting (MNIST)}
		\label{F:Mnist_loss2}
	\end{subfigure}
	\caption{Effect of $|\mathcal{D}|$ on the convergence of \OurAlg in $\mu$-strongly  convex and nonconvex settings on MNIST ($\lambda = 15$, $R = 20$, $K = 5$, $\beta =1$).} 
	\label{F:Mnist_D}
\end{figure}
\begin{figure}[t!]
	\centering
	\begin{subfigure}{0.245\linewidth} % width of left subfigure
		\includegraphics[width=1\linewidth]{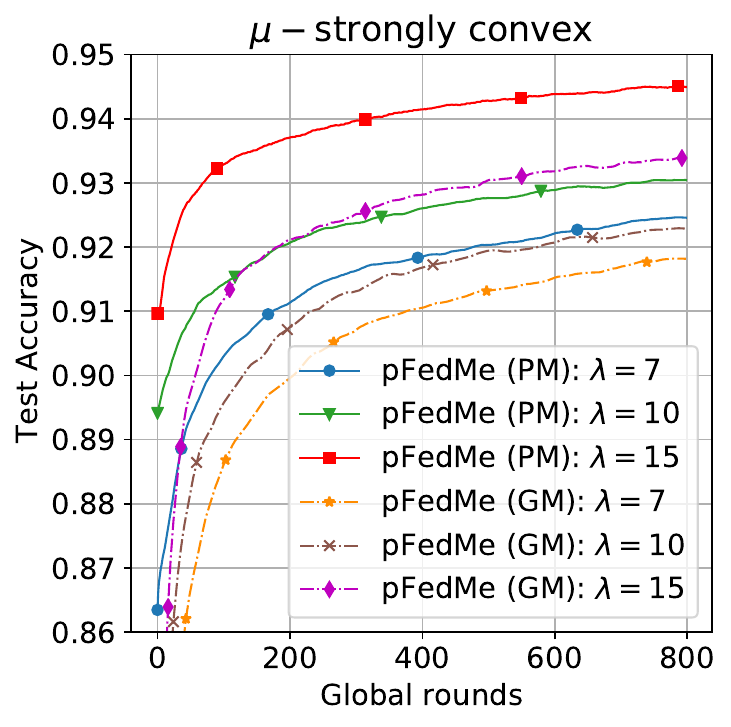}
		%\caption{Average testing accurancy }
		\label{F:Mnist_accurancy}
	\end{subfigure}
	\begin{subfigure}{0.245\linewidth} % width of left subfigure
		\includegraphics[width=1\linewidth]{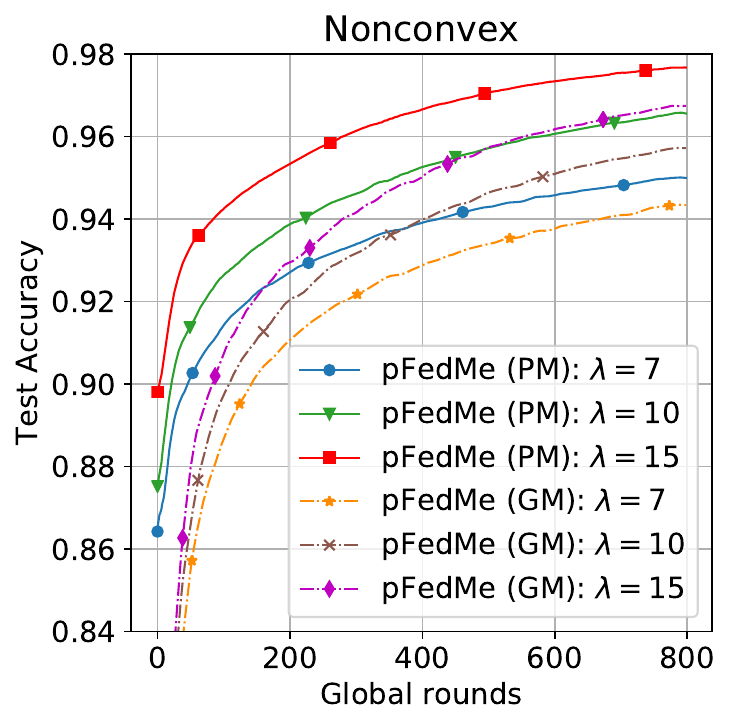}
		%\caption{Average testing accurancy }
		\label{F:Mnist_accurancy1}
	\end{subfigure}
	% \hspace{1em} % here you can insert horizontal or vertical space
	\begin{subfigure}{0.245\linewidth} % width of right subfigure
		\includegraphics[width=1\linewidth]{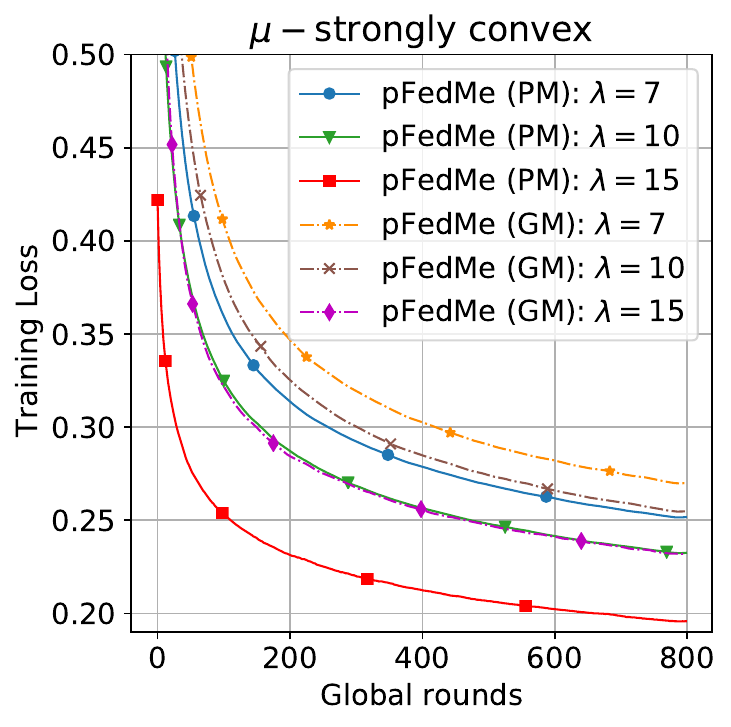}
		%\caption{$\mu$-strongly convex seting (MNIST)}
		\label{F:Mnist_loss1}
	\end{subfigure}
	\begin{subfigure}{0.245\linewidth} % width of right subfigure
		\includegraphics[width=1\linewidth]{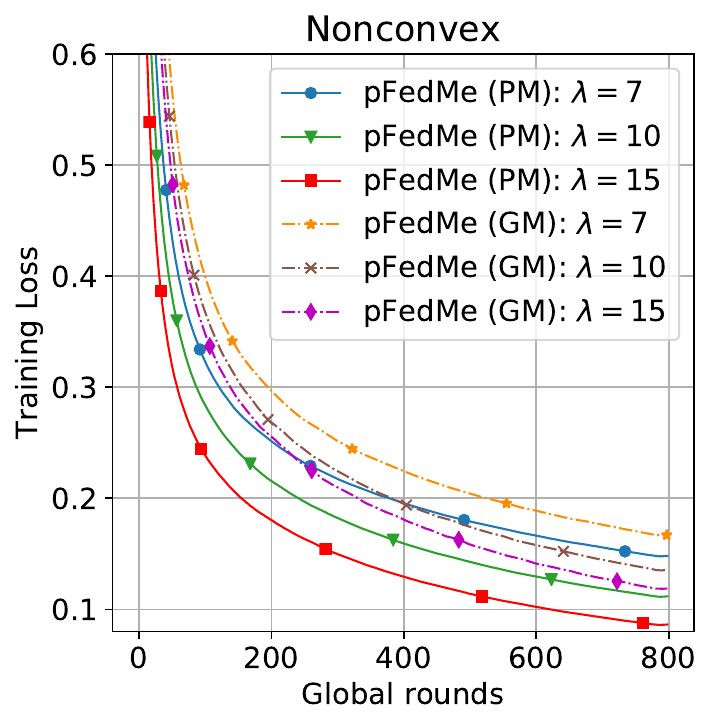}
		%%\caption{nonconvex  seting (MNIST)}
		\label{F:Mnist_loss2}
	\end{subfigure}
	\caption{Effect of $\lambda$ on the convergence of \OurAlg in $\mu$-strongly convex and nonconvex settings on MNIST ($|\mathcal{D}| = 20$, $R = 20$, $K = 5$, $\beta =1$). }
	\label{F:Mnist_L}
\end{figure}
\begin{figure}[t!]
	\centering
	\begin{subfigure}{0.245\linewidth} % width of left subfigure
		\includegraphics[width=1\linewidth]{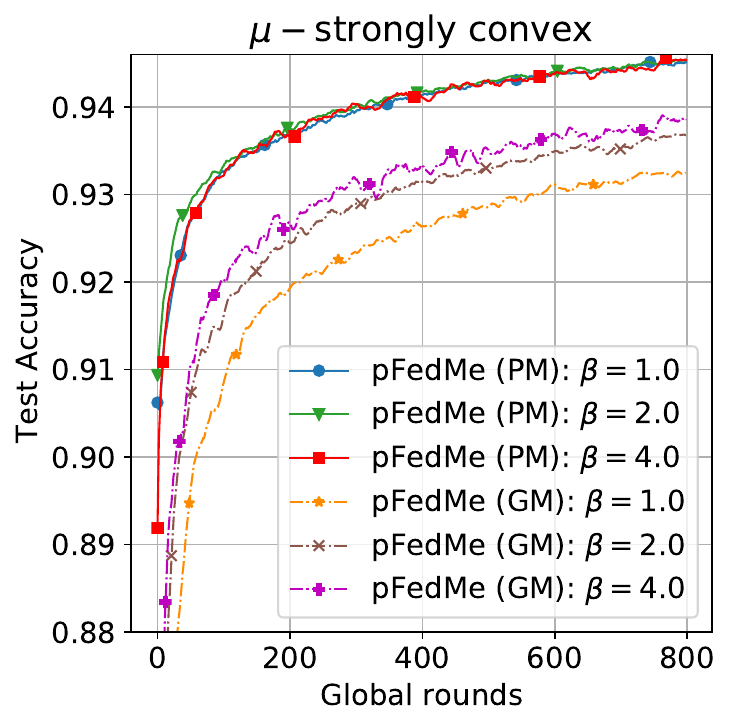}
		%\caption{Average testing accurancy }
		\label{F:Mnist_accurancy}
	\end{subfigure}
	\begin{subfigure}{0.245\linewidth} % width of left subfigure
		\includegraphics[width=1\linewidth]{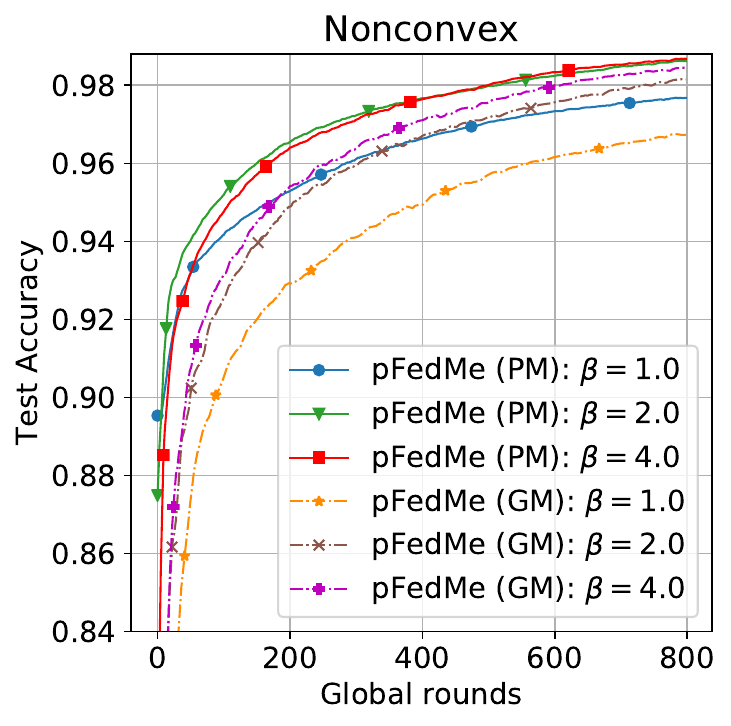}
		%\caption{Average testing accurancy }
		\label{F:Mnist_accurancy1}
	\end{subfigure}
	% \hspace{1em} % here you can insert horizontal or vertical space
	\begin{subfigure}{0.245\linewidth} % width of right subfigure
		\includegraphics[width=1\linewidth]{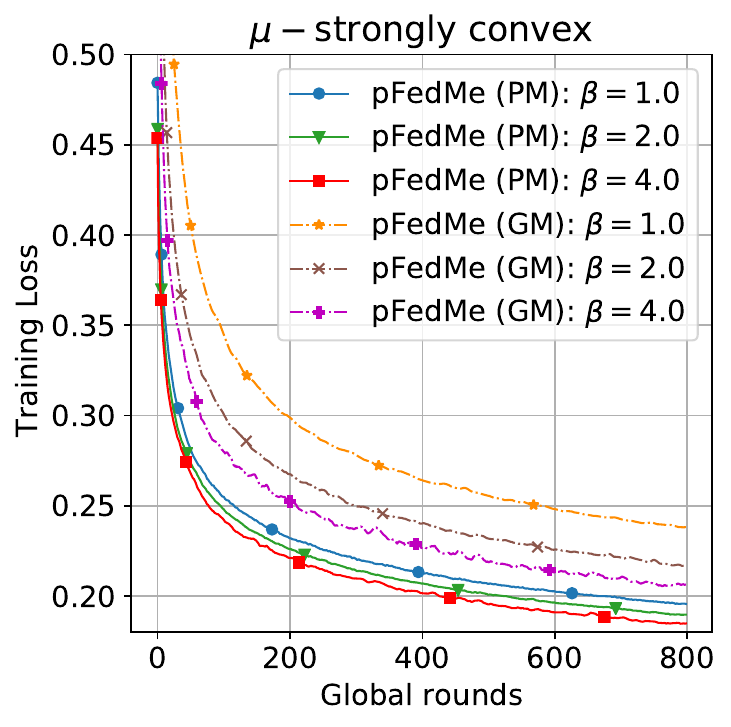}
		\label{F:Mnist_loss1}
	\end{subfigure}
	\begin{subfigure}{0.245\linewidth} % width of right subfigure
		\includegraphics[width=1\linewidth]{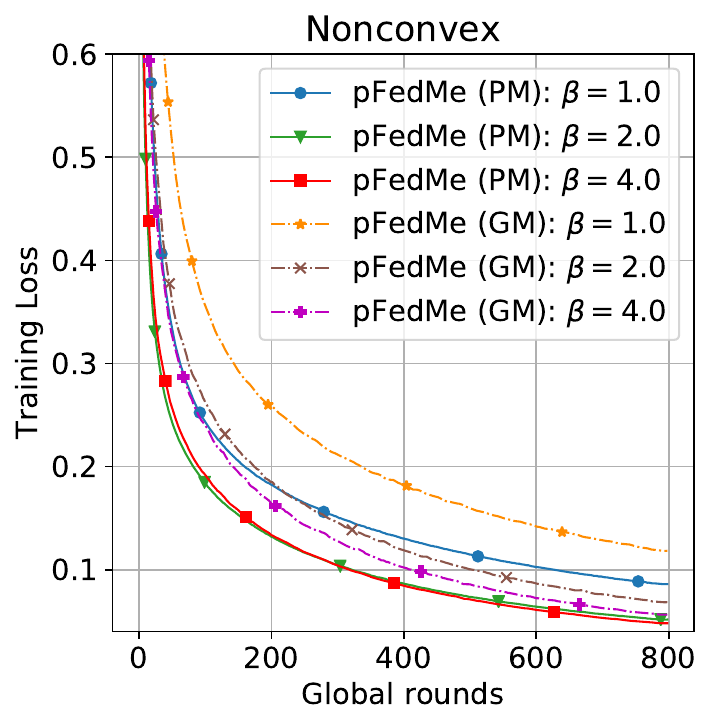}
		\label{F:Mnist_loss2}
	\end{subfigure}
	\caption{Effect of $\beta$ on the convergence of \OurAlg in $\mu$-strongly convex and nonconvex settings on MNIST ($|\mathcal{D}| = 20$, $\lambda = 15$, $R = 20$, $K = 5$).}
	\label{F:Mnist_Beta}
\end{figure}

\textbf{Effects of local computation rounds $R$}:
When the communication is relatively costly, the server tends to allow users to have more local computations, which can lead to less  global model updates and thus faster convergence. Therefore, we monitor the behavior of \OurAlg using a number of values of $R$, which results in Fig.~\ref{F:Mnist_R}. The results show that larger values of $R$ have a benefit on the convergence of both the personalized and the global models.  There is, nevertheless, a trade-off between the computations and communications: while larger $R$ requires more computations at local users, smaller $R$ needs more global communication rounds to converge. To balance this trade-off, we fix $R = 20$ and evaluate the effect of other hyperparameters accordingly.

\textbf{Effects of computation complexity $K$}: As $K$ allows for approximately finding the personalized model $\theta$, $K$ is also considered as a hyper-parameter of \OurAlg. In Fig.~\ref{F:Mnist_K}, only the value of $K$ is changed during the experiments. We observe that \OurAlg requires a small value of $K$ (around 3 to 5 steps) to approximately compute the personalized model. Larger values of $K$, such as 7, do not show the improvement on the convergence of the personalized model nor the global model. Similar to $R$, larger $K$ also requires more user's computation, which has negative effects on user energy consumption. Therefore, the value of $K = 5$ is chosen for the remaining experiments.

\textbf{Effects of Mini-Batch size $|\mathcal{D}|$}: As mentioned in the Lemma \ref{lem:1}, $| \mathcal {D} | $ is one of the parameters which can be controlled to adjust the value of $\delta$. In Fig.~\ref{F:Mnist_D}, when the size of the mini-batch is increased, \OurAlg has the higher convergence rate. However, very large $| \mathcal {D}| $ will not only slow the convergence of \OurAlg but also requires higher computations at the local users. During the experiments, the value of $ | \mathcal {D}| $ is configured as a constant value equal to 20.

%One of the most important factors 
\textbf{Effects of regularization $\lambda$}: Fig.~\ref{F:Mnist_L} shows the convergence rate of \OurAlg with different values of $\lambda$. In all settings, larger $\lambda$ allows for faster convergence; however, we also observe that the significantly large $\lambda$ will hurt the performance of \OurAlg by making \OurAlg diverge. Therefore, $\lambda$ should be tuned carefully depending on the dataset. We fix $\lambda= 15$ for all scenarios with MNIST.%This result fits perfectly with our convergence analysis. %on Theo %since proximal regularization parameter $\lambda$  $\delta$ can be adjusted by controlling the (i)  sampling error using mini-batch size $\abs{\mathcal{D}^{i}}$, (ii) accuracy level $\nu$, and (iii) proximal regularization parameter $\lambda$. . %More interestingly, \OurAlg gains significant convergence improvement in both personalized model and global model (with sufficient $\lambda$, and $K$) compared to other algorithms even in the convex setting or the nonconvex setting.

\textbf{Effects of $\beta$}:

Fig.~\ref{F:Mnist_Beta} illustrates how $\beta$ ($\beta \geq 1$) affects both the personalized and global models. It is noted that when $\beta = 1$, it is similar to model averaging of FedAvg. According to the figure, it is beneficial to shift the value of $\beta$ to be larger as it allows \OurAlg to converge faster, especially the global model. However, turning $\beta$ carefully is also significant to prevent the divergence and instability of \OurAlg. For example, when $\beta$ moves to the large value, to stabilize the global model as well as the personalized model, the smaller value of $\eta$ needs to be considered. Alternatively, $\beta$  and $\eta$ should be adjusted in inverse proportion to reach the stability of \OurAlg.

\subsection{Performance Comparison}

In order to highlight the empirical performance of \OurAlg,  we perform several comparisons between \OurAlg, FedAvg, and Per-FedAvg. We first use the same parameters for all algorithms as an initial comparison. As algorithms behave differently when hyperparameters are changed, we conduct a grid search on a wide range of hyperparameters to figure out the combination of fine-tuned parameters that achieves the highest test accuracy w.r.t. each algorithm. We use both personalized model (PM) and the global model (GM) of \OurAlg for comparisons. 
% -------- MNIST-----------
\begin{figure}[t!]
	\centering
	\begin{subfigure}{0.245\linewidth} % width of left subfigure
		\includegraphics[width=1\linewidth]{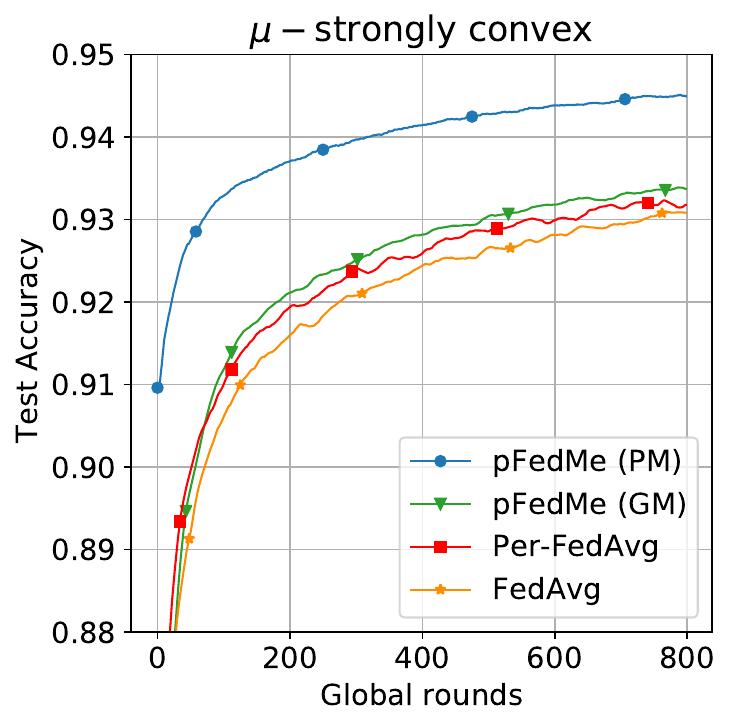}
		%\caption{Average testing accurancy }
		\label{F:Mnist_accurancy}
	\end{subfigure}
	\begin{subfigure}{0.245\linewidth} % width of left subfigure
		\includegraphics[width=1\linewidth]{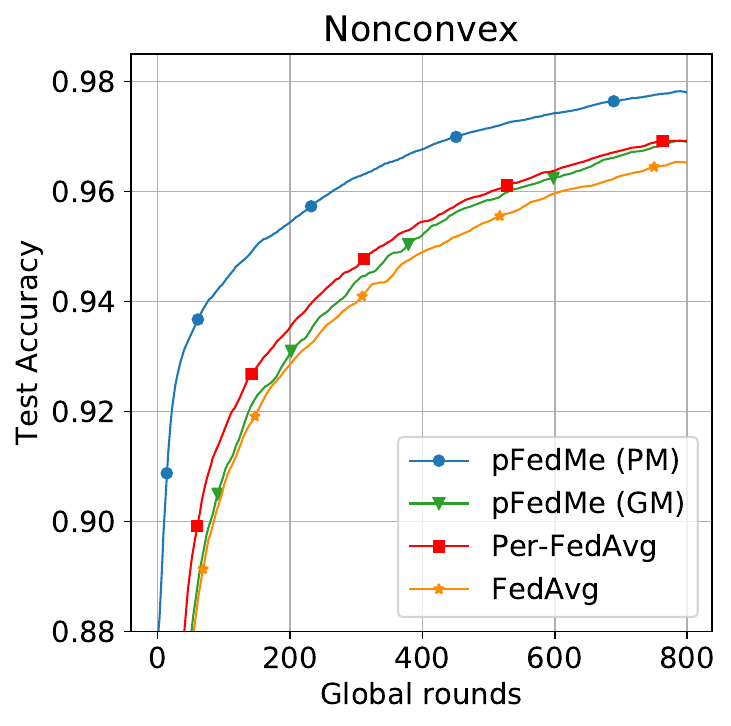}
		%\caption{Average testing accurancy }
		\label{F:Mnist_accurancy1}
	\end{subfigure}
	% \hspace{1em} % here you can insert horizontal or vertical space
	\begin{subfigure}{0.245\linewidth} % width of right subfigure
		\includegraphics[width=1\linewidth]{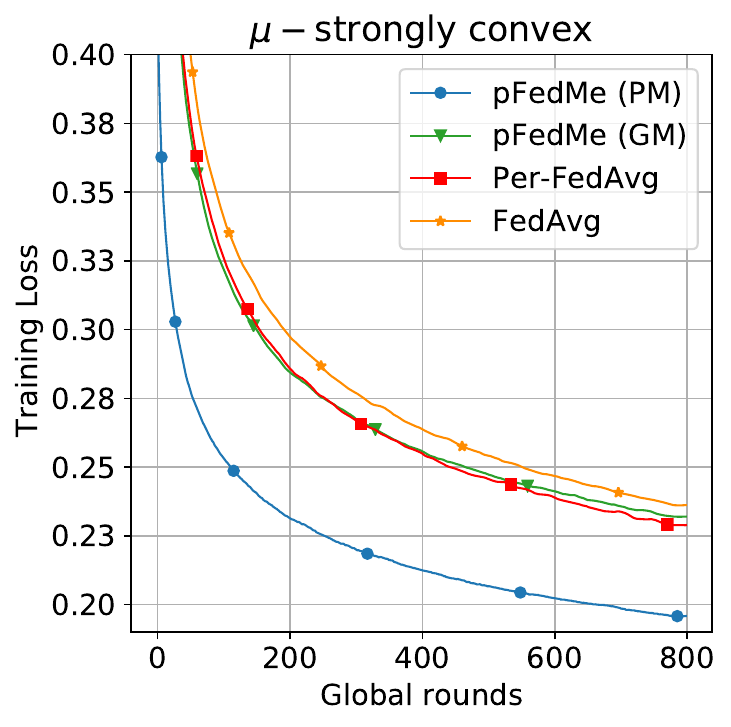}
		%\caption{$\mu$-strongly convex  seting}
		\label{F:Mnist_loss1}
	\end{subfigure}
	\begin{subfigure}{0.245\linewidth} % width of right subfigure
		\includegraphics[width=1\linewidth]{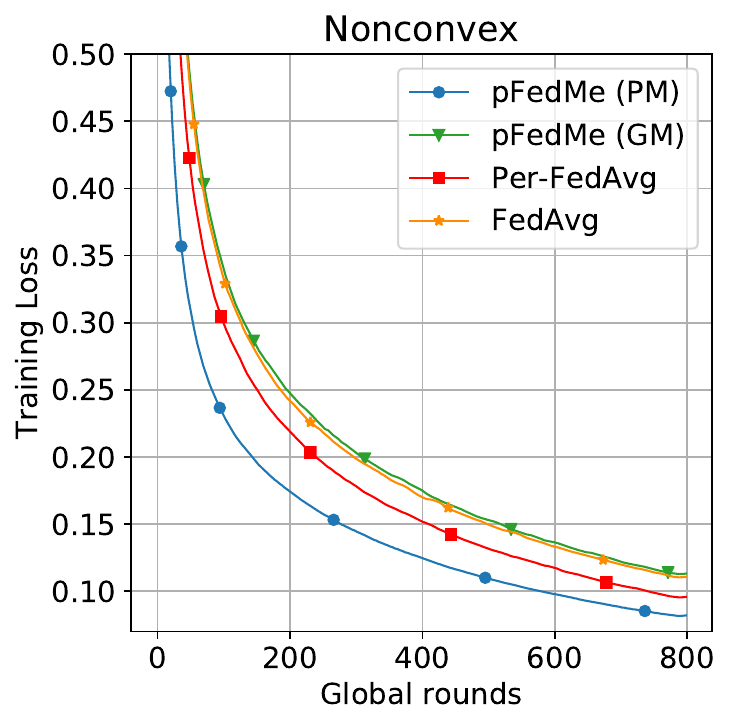}
		%%\caption{Non-convex  seting}
		\label{F:Mnist_loss2}
	\end{subfigure}
	\caption{Performance comparison of \OurAlg, FedAvg, and Per-FedAvg in $\mu$-strongly convex and nonconvex settings using MNIST  ($\eta=0.005$, $|\mathcal{D}| = 20$, $S = 5$, $\beta =1$ for all experiments).}
	\label{F:Mnist}
\end{figure}

\begin{figure}[t!]
	\centering
	\begin{subfigure}{0.245\linewidth} % width of left subfigure
		\includegraphics[width=1\linewidth]{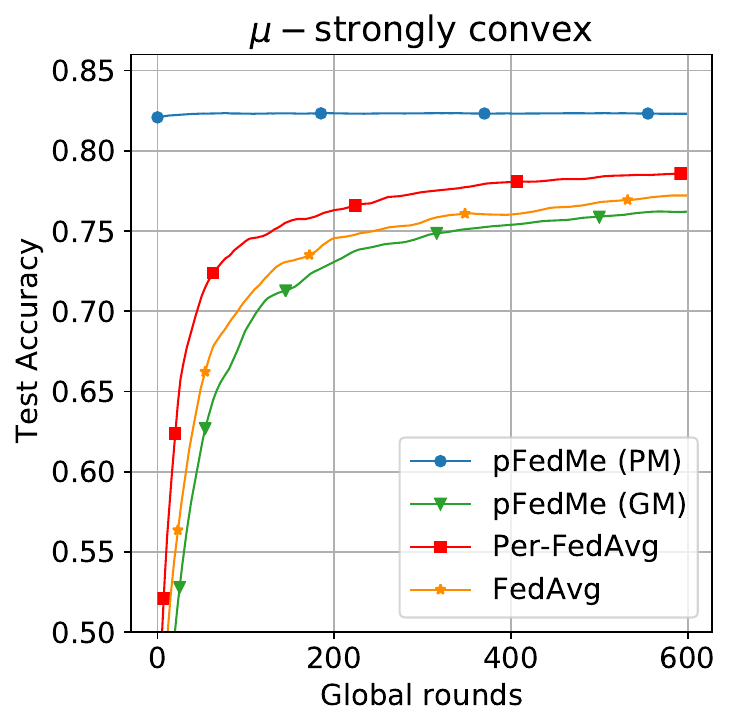}
		%\caption{Average testing accurancy }
		\label{F:Mnist_accurancy}
	\end{subfigure}
	\begin{subfigure}{0.245\linewidth} % width of left subfigure
		\includegraphics[width=1\linewidth]{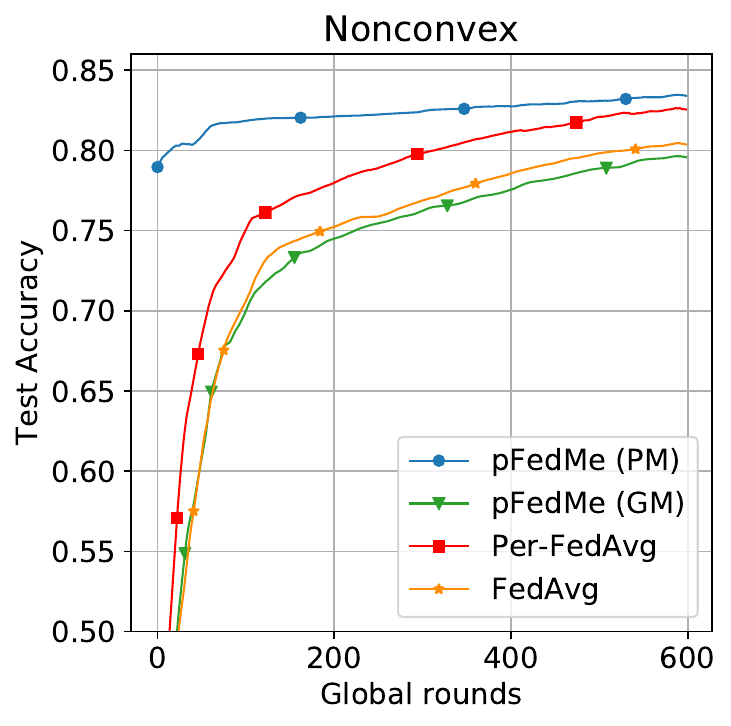}
		%\caption{Average testing accurancy }
		\label{F:Mnist_accurancy1}
	\end{subfigure}
	% \hspace{1em} % here you can insert horizontal or vertical space
	\begin{subfigure}{0.245\linewidth} % width of right subfigure
		\includegraphics[width=1\linewidth]{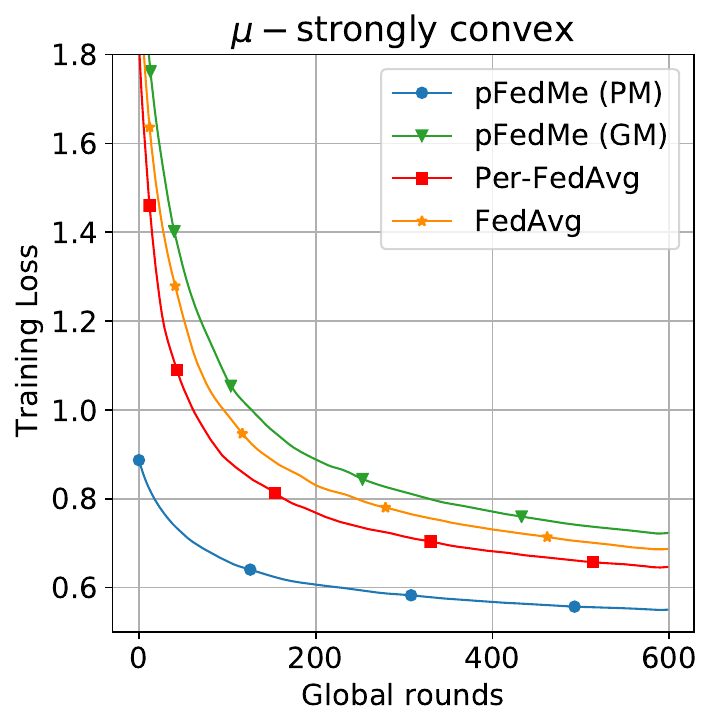}
		%\caption{$\mu$-strongly convex seting}
		\label{F:Mnist_loss1}
	\end{subfigure}
	\begin{subfigure}{0.245\linewidth} % width of right subfigure
		\includegraphics[width=1\linewidth]{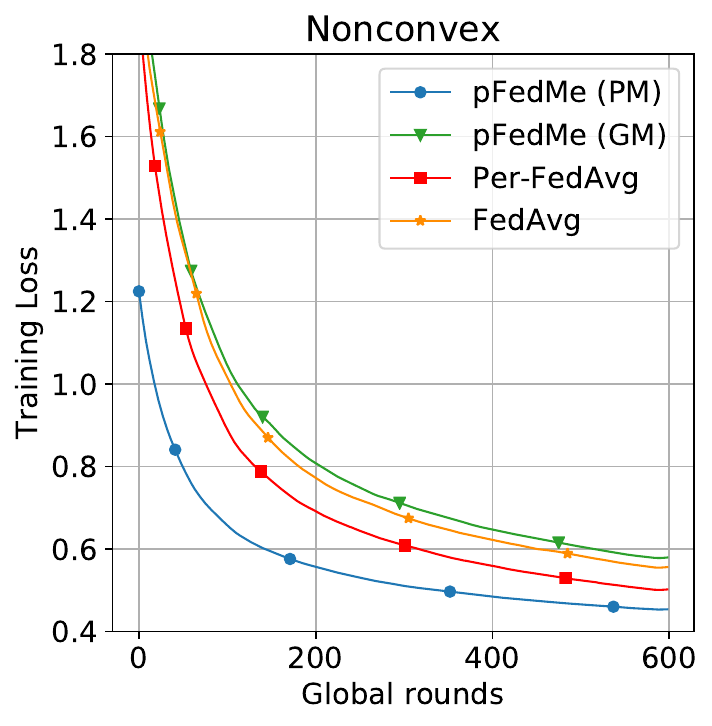}
		%%\caption{Non-convex  seting}
		\label{F:Mnist_loss2}
	\end{subfigure}
	\caption{Performance comparison of \OurAlg, FedAvg, and Per-FedAvg in $\mu$-strongly convex and nonconvex settings using Synthetic ($\eta=0.005$, $|\mathcal{D}| = 20$, $S = 10$, $\beta =1$ for all experiments).} %\blue{The accuracy improvements for the personalized model in the strongly convex setting are 5.4\%, 3.6\%, and 4.8\% compared to the global model, Per-FedAvg, and FedAvg respectively. The corresponding figures for the nonconvex setting are 3.7\%, 0.9\%, and 2.7\%.} }%\blue{ In the strongly convex case, the personalized model of \OurAlg achieves the highest accuracy at 95.30\%, following by the global model at 94,29\%, Per-Avg at 93.95\%, and FedAvg at 93,71. In the nonconvex case, the personalized model of \OurAlg achieves the highest accuracy at 98.41\%, following by Per-FedAvg at 97,64\%,  the global model at 97.68\%, and FedAvg at 97,24\%.} .}
	\label{F:Synthetic}
\end{figure}

{%\small
	\begin{table}[t!]
		\centering
		\caption{Comparison {using fine-tuned hyperparameters}. We fix $|\mathcal{D}| = 20$, $R = 20$, $K =5$, and $T = 800$ for MNIST, and $T = 600$ for Synthetic, $\beta = 2$ for \OurAlg ($\hat{\alpha}$ and $\hat{\beta}$ are learning rates of Per-FedAvg).}
		\label{Tab:parameters1}
		\tabcolsep=0.11cm
		\begin{tabular}{ll|lll|lll}
			\cmidrule(lr){1-8}
			%\hline
			\multirow{2}{*}{Algorithm} &
			\multirow{2}{*}{Model} &
			%\multicolumn{6}{c}{Dataset} \\
			%\cmidrule(lr){3-8} & &
			\multicolumn{3}{c}{MNIST} & \multicolumn{3}{|c}{Synthetic} \\
			\cmidrule(lr){3-5}\cmidrule(lr){6-8}
			%\cline{3-8}
			&  & $\lambda$ & $\eta \, (\hat{\alpha}, \hat{\beta})$    & $\text{  Accuracy (\%)}$ &$\lambda$ & $\eta \, (\hat{\alpha}, \hat{\beta})$      & $\text{  Accuracy (\%)}$ \\
			%     &  & $\lambda$ & $\eta \, (\hat{\alpha}, \hat{\beta} \text{ \small for Per-FedAvg})$    & $\text{ \small Accuracy (\%)}$ &$\lambda$ & $\eta \, (\hat{\alpha}, \hat{\beta} \text{ \small for Per-FedAvg})$    & $\text{ \small Accuracy (\%)}$ \\
			\cmidrule(lr){1-8}
			FedAvg & MLR & & $0.02$  & $93.96 \pm 0.02 $& & $0.02$  & $77.62 \pm 0.11 $\\
			Per-FedAvg & MLR& & $0.03, 0.003$   & $94.37 \pm 0.04$& & $0.02, 0.002$ & $81.49 \pm 0.09 $\\
			\OurAlg-GM & MLR& 15& $0.01$& $ 94.18 \pm 0.06$ & 20 & $0.01$ & $78.65  \pm 0.25$ \\
			\OurAlg-PM & MLR& 15& $0.01$& $\textbf{95.62}\pm 0.04$ & 20 & $0.01$ & $\textbf{83.20}\pm 0.06 $ \\
			\cmidrule(lr){1-8}
			FedAvg & DNN && $0.02$  &$98.79   \pm 0.03$&& $0.03$  &$83.64 \pm 0.22 $ \\
			Per-FedAvg & DNN& & $0.02, 0.001$   & $98.90  \pm 0.02$& & $0.01, 0.001$ & $85.01   \pm 0.10$ \\ %85.09
			\OurAlg-GM & DNN& 30&$0.01$ & $99.16  \pm 0.03   $& 30 & $0.01$ & $84.17 \pm 0.35 $\\
			\OurAlg-PM & DNN&30 & $0.01$  & $\textbf{99.46}  \pm 0.01$& 30& $0.01$ & $\textbf{86.36} \pm 0.15$\\
			\cmidrule(lr){1-8}
		\end{tabular}
	\end{table}
}

The comparisons for MNIST dataset are shown in Fig.~\ref{F:Mnist} (the same hyperparameters) and Table.~\ref{Tab:parameters1} (fine-tuned hyperparameters). Fig.~\ref{F:Mnist} shows that the \OurAlg's personalized models in strongly convex setting are 1.1\%, 1.3\%, and 1.5\% more accurate than its global model, Per-FedAvg, and FedAvg, respectively. The corresponding figures for  nonconvex setting are 0.9\%, 0.9\%, and 1.3\%. Table.~\ref{Tab:parameters1} shows that when using fine-tuned hyperparameters, the \OurAlg's personalized model is the best performer in all settings.

For Synthetic dataset, the comparisons for the utilizing the same parameters and the fine-tuned parameter are presented in Fig.~\ref{F:Synthetic} and Table.~\ref{Tab:parameters1}, respectively. In  Fig.~\ref{F:Synthetic}, even though the global model of \OurAlg is less well-performed than others concerning testing accuracy and training loss, \OurAlg's personalized model still shows its advantages as achieving the highest testing accuracy and smallest training loss. Fig.~\ref{F:Synthetic} shows that \OurAlg's personalized model is  6.1\%, 3.8\%, and 5.2\% more accurate than its global model, Per-FedAvg, and FedAvg, respectively. The corresponding figures for the nonconvex setting are 3.9\%, 0.7\%, and 3.1\%. In addition, with fine-tuned hyperparameters in Table.~\ref{Tab:parameters1}, the personalized model of \OurAlg beats others in all settings while the global model of \OurAlg only performs better than FedAvg. % witness a similar trend in terms of the personalized model. %Only the personal model does not perform as good as Per-Avg and FedAvg.

{From the experimental results, when the data among clients are non-i.i.d, both \OurAlg and Per-Avg gain higher testing accuracy than FedAvg as they allow the global model to be personalized for a specific client. However, by optimizing the personalized model approximately with multiple gradient updates and avoiding computing the Hessian matrix, the personalized model of \OurAlg is more advantageous than Per-FedAvg in terms of the convergence rate and the computation complexity.}

\section{Conclusion}
In this paper, we propose \OurAlg as a  personalized FL algorithm that can adapt to the statistical diversity issue to  improve the FL performance. Our approach makes use of the Moreau envelope function which helps decompose the personalized model optimization from global model learning, which allows \OurAlg to update the global model similarly to FedAvg, yet in parallel to optimize the personalized model w.r.t each client's local data distribution. Theoretical results show that \OurAlg can achieve the state-of-the-art convergence speedup rate. Experimental results demonstrate that \OurAlg outperforms the vanilla FedAvg and the meta-learning based personalized FL algorithm Per-FedAvg  in both convex and non-convex settings, using both real and synthetic datasets. Finally, the degree to which personalization becomes provably useful is a topic of experimental research, as parameters will need to be adapted to each dataset and federated setting.

\appendix
\section{Proof of the Results} \label{proof}

In this section, we first provide some existing results useful for following proofs. We then present the proofs of Lemma~\ref{lem:1}, Lemma~\ref{lem:hete}, Theorem~\ref{Th:1}, and Theorem~\ref{Th:2}. 

\subsection{Review of useful existing results}
\begin{proposition}{\cite[Theorems 2.1.5 and  2.1.10]{nesterov_lectures_2018} }\label{prop:existing1}
	If a function $F_i(\cdot)$ is  $L_F$-smooth and $\mu_F$-strongly convex, $\forall w, w'$, we have the following useful inequalities, in respective order, 
	\begin{align*}
	%2 	L_F (F_i (w) - F_i (w^{*})) &\geq \norm{\nabla F_i (w)}^2, \forall w. \label{E:existing_smooth} \\
	%\langle \nabla F_i(w) - \nabla F_i(w'), w - w' \rangle\! &\geq \frac{1}{L_F}  \norm{\nabla F_i(w) - \nabla F_i(w')}^2, \forall w, w' \label{E:coercivity} \\
	\norm{\nabla F_i (w) - \nabla F_i (w')}^2 &\leq 2 	L_F  \nbigP{F_i (w) - F_i (w') - \innProd{\nabla F_i (w'), w - w' }} \\%\label{E:existing_smooth_convex} 
	\mu_F \norm{w - w'} &\leq \norm{\nabla F_i (w) - \nabla F_i (w')}. %\label{E:existing_strong_convex} 
	\end{align*}
	where $w^{*}$ is the solution to problem $\min_{w \in \mathbb{R}^d} F_i(w)$, i.e., $\nabla F_i (w^{*}) = 0$. 
\end{proposition}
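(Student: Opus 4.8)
The plan is to prove both inequalities from first principles, using only that $F_i$ is $L_F$-smooth and $\mu_F$-strongly convex (the latter in particular implies $F_i$ is convex); these are precisely Theorems~2.1.5 and~2.1.10 of \cite{nesterov_lectures_2018}, so one may also simply cite them, but the self-contained argument is short.

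For the first inequality I would introduce the shifted function $g(x) \defeq F_i(x) - \innProd{\nabla F_i(w'), x}$. Then $g$ is still $L_F$-smooth and convex, and $\nabla g(w') = 0$, so $w'$ is a global minimizer of $g$. Applying the descent lemma (the standard quadratic upper bound $g(y) \leq g(x) + \innProd{\nabla g(x), y - x} + \tfrac{L_F}{2}\norm{y - x}^2$, a consequence of $L_F$-smoothness) at the point $y = w - \tfrac{1}{L_F}\nabla g(w)$ gives $g\LeRiP{w - \tfrac{1}{L_F}\nabla g(w)} \leq g(w) - \tfrac{1}{2L_F}\norm{\nabla g(w)}^2$. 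Since $g(w') \leq g\LeRiP{w - \tfrac{1}{L_F}\nabla g(w)}$ by global optimality of $w'$, rearranging yields $\tfrac{1}{2L_F}\norm{\nabla g(w)}^2 \leq g(w) - g(w')$. Substituting back $\nabla g(w) = \nabla F_i(w) - \nabla F_i(w')$ and $g(w) - g(w') = F_i(w) - F_i(w') - \innProd{\nabla F_i(w'), w - w'}$ gives exactly the claimed bound.

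For the second inequality I would write the $\mu_F$-strong convexity inequality at the ordered pair $(w, w')$ and again at $(w', w)$, add the two, and cancel the function values to obtain $\innProd{\nabla F_i(w) - \nabla F_i(w'), w - w'} \geq \mu_F \norm{w - w'}^2$. Applying Cauchy--Schwarz to the left-hand side gives $\norm{\nabla F_i(w) - \nabla F_i(w')}\,\norm{w - w'} \geq \mu_F \norm{w - w'}^2$, and dividing by $\norm{w - w'}$ (the case $w = w'$ being trivial) yields $\norm{\nabla F_i(w) - \nabla F_i(w')} \geq \mu_F \norm{w - w'}$. There is essentially no obstacle here, since both statements are textbook facts; the only point needing a little care is the first part, namely justifying that $L_F$-smoothness alone gives the descent-lemma quadratic upper bound and that convexity (ensured by $\mu_F > 0$) is what makes the stationary point $w'$ a global minimizer of $g$. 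Everything else is direct substitution.
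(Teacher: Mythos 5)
Your proof is correct. The paper offers no proof of its own for this proposition---it is quoted directly from Nesterov's textbook (Theorems 2.1.5 and 2.1.10)---and your two arguments are exactly the standard derivations behind those results: the shifted function $g(x) = F_i(x) - \innProd{\nabla F_i(w'), x}$ combined with the descent lemma for the first inequality, and the symmetrized strong-convexity inequality followed by Cauchy--Schwarz for the second.
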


\begin{proposition} \label{prop:existing2} For any vector $x_i \in \mathbb{R}^d, \, i = 1, \ldots, M$, by Jensen's inequality, we have
	\begin{align*}
	\norm[\Big]{ \SumLim{i=1}{M} x_i   }^2 \leq M  \SumLim{i=1}{M} \norm{  x_i   }^2. 
	\end{align*}
\end{proposition}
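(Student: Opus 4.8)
The plan is to derive this inequality directly from the convexity of the squared Euclidean norm, which is exactly the content of Jensen's inequality that the statement already invokes. First I would recall that the map $\phi(x) \defeq \norm{x}^2$ is convex on $\mathbb{R}^d$, either because it is the composition of the convex norm with the increasing convex square function, or more elementarily because its Hessian $2I$ is positive semidefinite. Applying Jensen's inequality to the uniform average of the $M$ points then gives
\begin{align*}
\norm[\Big]{\frac{1}{M}\SumNoLim{i=1}{M} x_i}^2 \leq \frac{1}{M}\SumNoLim{i=1}{M}\norm{x_i}^2.
\end{align*}
Multiplying both sides by $M^2$ and using that the factor $1/M$ inside the norm on the left scales the squared norm by $1/M^2$ yields the claimed bound $\norm{\SumNoLim{i=1}{M} x_i}^2 \leq M \SumNoLim{i=1}{M}\norm{x_i}^2$.

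Alternatively, and avoiding any explicit appeal to Jensen, I would expand the squared norm as an inner product and apply Young's inequality termwise. Writing $\norm{\SumNoLim{i=1}{M}x_i}^2 = \SumNoLim{i=1}{M}\SumNoLim{j=1}{M}\innProd{x_i, x_j}$ and bounding each pair by $\innProd{x_i, x_j} \leq \frac{1}{2}\bigP{\norm{x_i}^2 + \norm{x_j}^2}$, the double sum is controlled by $\frac{1}{2}\bigP{M\SumNoLim{i=1}{M}\norm{x_i}^2 + M\SumNoLim{j=1}{M}\norm{x_j}^2}$, which equals $M\SumNoLim{i=1}{M}\norm{x_i}^2$, the desired conclusion. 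A third route, equivalent in spirit, applies Cauchy--Schwarz to the all-ones vector against $\bigP{\norm{x_1},\ldots,\norm{x_M}}$ after the triangle inequality.

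There is essentially no obstacle here: the statement is a standard consequence of convexity, and each of the three routes is a one- or two-line manipulation. The only point requiring minor care is the scaling step when passing from the averaged form of Jensen to the summed form, but this is purely algebraic and carries no subtlety. I would present the Jensen argument as the main proof since the proposition is explicitly framed in those terms.
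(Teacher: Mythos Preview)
Your proposal is correct, and the Jensen route you present is exactly the approach the paper intends: the paper offers no separate proof but simply attributes the inequality to Jensen's inequality in the statement itself, which is precisely your main argument with the details filled in. The alternative Young/Cauchy--Schwarz arguments are fine but unnecessary here.
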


\subsection{Proof of Lemma~\ref{lem:1}}
\begin{proof}%[Proof of Lemma~\ref{lem:1}]
	We first prove case (a).  Let  ${h}_i(\theta_i; w_{i, r}^t) \defeq {f}_i(\theta_i) + \frac{\lambda}{2} \norm{\theta_i - w_{i, r}^t}^2$. Then ${h}_i(\theta_i; w_{i, r}^t)$  is $(\lambda + \mu)$-strongly convex with its unique solution  $\hat{\theta}_{i}(w_{i, r}^t)$. Then, by Proposition~\ref{prop:existing1},  we have 
	\begin{align*} 
	{\norm{\tilde{\theta}_{i}(w_{i, r}^t) - \hat{\theta}_{i}(w_{i, r}^t)}^2}  &\leq  \frac{1}{(\lambda + \mu)^2}{\norm{\nabla h_i \bigP{\tilde{\theta}_{i}; w_{i, r}^t} }^2} \\
			\!& \leq \frac{2}{(\lambda + \mu)^2}  \BigP{\!{\norm{\nabla h_i \bigP{\tilde{\theta}_{i}; w_{i, r}^t} \!-\! \nabla \tilde{h}_i \bigP{\tilde{\theta}_{i}; w_{i, r}^t, \mathcal{D}_i } \!}^2 \!+\! \norm{\nabla \tilde{h}_i \bigP{\tilde{\theta}_{i}; w_{i, r}^t, \mathcal{D}_i} }^2}\!}\\
			&\leq \frac{2}{(\lambda + \mu)^2} \BigP{{\norm{\nabla \tilde{f}_i(\tilde{\theta}_{i}; \mathcal{D}_i)  - \nabla f_i(\tilde{\theta}_{i})}^2} +  \nu} \\
			&= \frac{2}{(\lambda + \mu)^2} \biggP{ \frac{1}{\abs{\mathcal{D}}^2}{\norm[\Big]{ \SumNoLim{\xi_i \in \mathcal{D}_i}{}\nabla \tilde{f}_i(\tilde{\theta}_{i}; \xi_i)  - \nabla f_i(\tilde{\theta}_{i})}^2} +  \nu}, 
	\end{align*}
	where the second inequality is by Proposition~\ref{prop:existing2}. Taking expectation to both sides, we have
	\begin{align*}
	 \Ex{\norm{\tilde{\theta}_{i}(w_{i, r}^t) - \hat{\theta}_{i}(w_{i, r}^t)}^2}  &\leq \frac{2}{(\lambda + \mu)^2} \biggP{ \frac{1}{\abs{\mathcal{D}}^2}{\SumNoLim{\xi_i \in \mathcal{D}_i}{} \mathbb{E}_{\xi_i}\BigS{\norm[\big]{ \nabla \tilde{f}_i(\tilde{\theta}_{i}; \xi_i)  - \nabla f_i(\tilde{\theta}_{i})}^2}} +  \nu} \\
	 &\leq \frac{2}{(\lambda + \mu)^2} \biggP{\frac{\gamma_f^2}{\abs{\mathcal{D}}} +  \nu}, 
	\end{align*}
	where the first inequality is due to $\Ex{\norm{\SumNoLim{i=1}{M} X_i - \Ex{X_i} }^2} = \SumNoLim{i=1}{M}  \Ex{\norm{X_i - \Ex{X_i} }}^2$ with $M$ independent random variables $X_i$ and the unbiased estimate $\Ex{\nabla \tilde{f}_i(\tilde{\theta}_{i}; \xi_i)}  = \nabla f_i(\tilde{\theta}_{i})$, and the last inequality is due to Assumption~\ref{Asm:1} . 
	
	The proof of case (b) follows similarly, considering that ${h}_i( \theta_i; w_{i, r}^t)$ is $(\lambda - L)$-strongly convex. 
\end{proof}

\subsection{Proof of Lemma~\ref{lem:hete}}
\begin{proof}%[Proof of Lemma~\ref{lem:hete}]
	We first prove case (a). 
	
%	\begin{align*}
%		\norm{\hat{\theta}_i (x) - \hat{\theta}_i (y)}^2 &\leq \innProd{x-y, \hat{\theta}_i (x) - \hat{\theta}_i (y)} \\
%		\Leftrightarrow\norm[\Big]{x-y -  \frac{1}{\lambda}\bigP{\nabla F_i (x) - \nabla F_i (y)}  }^2 &\leq \innProd[\Big]{x-y, x-y - \frac{1}{\lambda}\bigP{ \nabla F_i (x) - \nabla F_i (y)} } \\
%		\Leftrightarrow\norm[\Big]{\frac{1}{\lambda}\bigP{\nabla F_i (x) - \nabla F_i (y)}  }^2 &\leq \innProd[\Big]{x-y, \frac{1}{\lambda}\bigP{ \nabla F_i (x) - \nabla F_i (y)} } \\
%		\Leftrightarrow\norm[]{{\nabla F_i (x) - \nabla F_i (y)}  } &\leq \lambda \norm{x-y} \\
%	\end{align*}
	
		\begin{align*}
		\frac{1}{N} \SumLim{i=1}{N}   \norm[\big]{\nabla F_i(w) - \nabla F (w)}^2  &\leq \frac{1}{N} \SumLim{i=1}{N} \norm{\nabla F_i(w)}^2  \\
		&\leq \frac{1}{N} \SumLim{i=1}{N}2 \BigP{\norm{\nabla F_i(w) - \nabla F_i(w^*)}^2 + \norm{\nabla F_i(w^*)}^2}\\
		&\leq 4 L_F (F(w) - F(w^*)) + \frac{2}{N} \SumLim{i=1}{N} \norm{\nabla F_i(w^*)}^2, 
		\end{align*}	
	where the first inequality is by the fact that $\mathbb{E}\bigS{\norm{X}^{2}}=\mathbb{E}\bigS{\norm{X-\mathbb{E}[X]}^2}+\mathbb{E}[\norm{X}]^{2}$ for any vector of random variable $X$, and the second and third  inequalities are due to Propositions~\ref{prop:existing2} and ~\ref{prop:existing1}, respectively. 
	
	We next prove case (b):
		\begin{align*}
		&\!\!\norm{\nabla F_i(w) - \nabla F (w)}^2 \nonumber \\
		&= \norm[\Big]{ \lambda \bigP{w - \hat{\theta}_i(w) }  -  \frac{1}{N} \SumNoLim{j=1}{N} \lambda \bigP{w - \hat{\theta}_j(w)}}^2 \\
		&=\norm[\Big]{  \nabla f_i (\hat{\theta}_i(w))  -  \frac{1}{N} \SumNoLim{j=1}{N} \nabla f_j (\hat{\theta}_j(w)) }^2 \\
		&\leq 2 \norm[\Big]{ \nabla f_i (\hat{\theta}_i(w))  - \frac{1}{N} \SumLim{j=1}{N} \nabla f_j(\hat{\theta}_i(w)) }^2 + 2\norm[\Big]{  \frac{1}{N} \SumLim{j=1}{N} \nabla f_j(\hat{\theta}_i(w))   -  \nabla f_j (\hat{\theta}_j(w)) }^2, 
		\end{align*}
		where the second inequality is due to the first-order condition  $\nabla f_i (\hat{\theta}_i(w))  - \lambda \bigP{w - \hat{\theta}_i(w) } = 0$, and the last one is due to Proposition~\ref{prop:existing2}. 
		Taking the average over the number of clients, we have
		 \begin{align}
		\frac{1}{N} \SumLim{i=1}{N} \norm[\big]{\nabla F_i(w) - \nabla F (w)}^2 &\leq 2 \sigma_f^2 + \frac{2}{N^2} \SumLim{i=1}{N} \SumLim{j=1}{N} \norm[\big]{  \nabla f_j(\hat{\theta}_i(w))   -  \nabla f_j (\hat{\theta}_j(w)) }^2 \label{E:het_nonconvex1}\\
		&\leq 2 \sigma_f^2 + \frac{2 L^2}{N^2} \SumLim{i=1}{N} \SumLim{j=1}{N} \norm[\big]{  \hat{\theta}_i(w)   -  \hat{\theta}_j(w) }^2 \label{E:het_nonconvex2}\\
		&\leq 2 \sigma_f^2 + \frac{2 L^2}{N^2} \SumLim{i=1}{N} \SumLim{j=1}{N} 2 \BigP{\norm[\big]{  \hat{\theta}_i(w)   -  w }^2 + \norm[\big]{  \hat{\theta}_j(w)   -  w }^2} \label{E:het_nonconvex3}\\
		&\leq 2 \sigma_f^2 + \frac{2 L^2}{N^2} \SumLim{i=1}{N} \SumLim{j=1}{N} \frac{2}{\lambda^2} \BigP{\norm[\big]{  \nabla F_i (w) }^2 + \norm[\big]{  \nabla F_j (w) }^2} \label{E:het_nonconvex4}\\
		&= 2 \sigma_f^2 + \frac{8 L^2}{\lambda^2 }  \frac{1}{N}  \SumLim{i=1}{N} \norm[\big]{  \nabla F_i (w) }^2 \nonumber\\
		&= 2 \sigma_f^2 + \frac{8 L^2}{\lambda^2 }  \biggS{\frac{1}{N} \SumLim{i=1}{N}   \norm[\big]{\nabla F_i(w) - \nabla F (w)}^2 + \norm{\nabla F(w)}^2}\label{E:het_nonconvex6}
		\end{align}
	where \eqref{E:het_nonconvex1} is due to Assumption~\ref{Asm:2}  and Proposition~\ref{prop:existing2}, which is also used for \eqref{E:het_nonconvex3},   \eqref{E:het_nonconvex2} is due to $L$-smoothness of $f_i(\cdot)$,   \eqref{E:het_nonconvex4} is due to Proposition~\ref{pro:1},   \eqref{E:het_nonconvex6} is by the fact that $\mathbb{E}\bigS{\norm{X}^{2}}=\mathbb{E}\bigS{\norm{X-\mathbb{E}[X]}^2}+\mathbb{E}[\norm{X}]^{2}$ for any vector of random variable $X$. 	Finally, by re-arranging the terms of \eqref{E:het_nonconvex6}, we obtain
	\begin{align*}
		\frac{1}{N} \SumLim{i=1}{N} \norm[\big]{\nabla F_i(w) - \nabla F (w)}^2 &\leq \frac{2 \lambda^2}{\lambda^2 - 8 L^2} \sigma_f^2 + \frac{8 L^2}{\lambda^2 - 8 L^2} \norm[\big]{\nabla F(w)}^2. 
	\end{align*}
\end{proof}	

\subsection{Proof of Theorem~\ref{Th:1}}
We first define additional notations for the ease of analysis. We next provide supporting lemmas, and finally we will combine them to complete the proof of Theorem~\ref{Th:1}. 
\subsubsection{Additional notations}
We re-write the local update as follows
\begin{align*}
w_{i, r+1}^t &= w_{i, r}^t - \eta \, \underbrace{\lambda(w_{i, r}^t - \tilde{ \theta}_i (w_{i, r}^t))}_{\eqdef\, g_{i, r}^t }
\end{align*}
which implies
\begin{align*}
\eta \SumNoLim{r=0}{R-1}  g_{i, r}^t  = \SumNoLim{r=0}{R-1}\bigP{w_{i, r}^t - w_{i, r+1}^t} = w_{i, 0}^t - w_{i, R}^t = w_t - w_{i, R}^t, 
\end{align*}
where $g_{i, r}^t$ can be considered as the biased estimate of $\nabla F_i(w_{i, r}^t )$ since $\Ex{g_{i, r}^t} \neq \nabla F_i(w_{i, r}^t )$. 
We also re-write the global update as follows
\begin{align*}
w_{t+1} &=  (1 - \beta)  w_t + \frac{{\beta}}{S} \sum\nolimits_{i \in \mathcal{S}^t} w_{i, R}^t  \\
&= w_t -  \frac{{\beta}}{S} \sum\nolimits_{i \in \mathcal{S}^t} \nbigP{ w_t -  w_{i, R}^t}   \\
&= w_t - \underbrace{\eta \beta  R}_{\eqdef \, \tilde{\eta}} \, \underbrace{\frac{1 }{S R} \sum\nolimits_{i \in \mathcal{S}^t} \SumNoLim{r=0}{R-1}  g_{i, r}^t}_{\eqdef \,  g_t}, 
\end{align*}	
where $\tilde{\eta}$ and $g_t$ can be interpreted as the step size and approximate stochastic gradient, respectively, of the global update. 

\subsubsection{Supporting lemmas}
\setcounter{lemma}{2}
\begin{lemma}[One-step global update] \label{lem:1step_w} Let Assumption~\ref{Asm:0}(b) hold. We have
	\begin{align*}
	&\Ex{\norm{w_{t+1} - w^*}^2} \leq \BigP{1 -  \frac{\tilde{\eta} \mu_F}{2}}\Ex{\norm{w_t - w^*}^2} -  \tilde{\eta} \bigP{ 2 -  6 L_F  \tilde{\eta}}  \Ex{F(w_t) - F(w^*)} \\
	&+    \frac{\tilde{\eta} \nbigP{3\tilde{\eta} + 2 /\mu_F}}{NR} \sum_{i, r}^{N, R} \Ex{\norm[\big]{g_{i, r} - \nabla F_i(w_t)}^2 }
	+ 3 \tilde{\eta}^2 \Ex{\norm[\Big]{\frac{1}{S}\sum\nolimits_{i \in \mathcal{S}^t} \nabla F_i(w_t) - \nabla F (w_t)}^2 }, 
	\end{align*}	
	where $\sum_{i, r}^{N, R}$ is used as an alternative for  $\sum_{i=1}^{N} \sum_{r=0}^{R-1}$. 
\end{lemma}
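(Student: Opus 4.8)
The plan is to treat the server update as a single biased stochastic-gradient step $w_{t+1} = w_t - \tilde{\eta}\, g_t$ with $g_t = \frac{1}{SR}\sum_{i\in\mathcal{S}^t}\sum_{r=0}^{R-1} g_{i,r}^t$, and to run the classical one-step descent argument for strongly convex SGD, being careful to separate the two independent sources of randomness (the mini-batch draws baked into each $g_{i,r}^t$, and the client subsampling $\mathcal{S}^t$). Concretely, I would start from the identity $\norm{w_{t+1} - w^*}^2 = \norm{w_t - w^*}^2 - 2\tilde{\eta}\innProd{g_t, w_t - w^*} + \tilde{\eta}^2 \norm{g_t}^2$. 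Taking the expectation over $\mathcal{S}^t$ first (it is uniform and independent of the local updates) gives $\mathbb{E}_{\mathcal{S}^t}[g_t] = \bar{g}_t \defeq \frac{1}{NR}\sum_{i,r} g_{i,r}^t$, so the cross term becomes $-2\tilde{\eta}\,\frac{1}{NR}\sum_{i,r}\Ex{\innProd{g_{i,r}^t, w_t - w^*}}$; the main work is then bounding this cross term and $\Ex{\norm{g_t}^2}$ separately.

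For the cross term I would split $g_{i,r}^t = \nabla F_i(w_t) + \bigP{g_{i,r}^t - \nabla F_i(w_t)}$. Averaging the $\mu_F$-strong-convexity inequality of $F_i$ (which holds by Proposition~\ref{pro:1}) over $i$ gives $\frac{1}{N}\sum_i \innProd{\nabla F_i(w_t), w_t - w^*} \geq F(w_t) - F(w^*) + \frac{\mu_F}{2}\norm{w_t - w^*}^2$, contributing the $-2\tilde{\eta}\bigP{F(w_t)-F(w^*)}$ descent term and a $-\tilde{\eta}\mu_F\norm{w_t-w^*}^2$ contraction. For the remainder I would apply Young's inequality in the form $-\innProd{a,b} \leq \frac{1}{\mu_F}\norm{a}^2 + \frac{\mu_F}{4}\norm{b}^2$: the constant $\mu_F/2$ here is the crucial choice, since it spends only half of the contraction, leaving exactly the advertised factor $1 - \frac{\tilde{\eta}\mu_F}{2}$, and produces the term $\frac{2\tilde{\eta}}{\mu_F N R}\sum_{i,r}\Ex{\norm{g_{i,r}^t - \nabla F_i(w_t)}^2}$.

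For $\tilde{\eta}^2\Ex{\norm{g_t}^2}$ I would use the three-way split $g_t = A + B + \nabla F(w_t)$, where $A \defeq \frac{1}{SR}\sum_{i\in\mathcal{S}^t, r}\bigP{g_{i,r}^t - \nabla F_i(w_t)}$ and $B \defeq \frac{1}{S}\sum_{i\in\mathcal{S}^t}\nabla F_i(w_t) - \nabla F(w_t)$, together with Proposition~\ref{prop:existing2} at $M=3$ to get $\norm{g_t}^2 \leq 3\norm{A}^2 + 3\norm{B}^2 + 3\norm{\nabla F(w_t)}^2$. Jensen's inequality (Proposition~\ref{prop:existing2}) plus the identity $\mathbb{E}_{\mathcal{S}^t}\bigS{\frac{1}{S}\sum_{i\in\mathcal{S}^t}X_i} = \frac{1}{N}\sum_i X_i$ bounds $\Ex{\norm{A}^2} \leq \frac{1}{NR}\sum_{i,r}\Ex{\norm{g_{i,r}^t - \nabla F_i(w_t)}^2}$, which supplies the remaining $\frac{3\tilde{\eta}^2}{NR}$ contribution and completes the coefficient $\tilde{\eta}\bigP{3\tilde{\eta} + 2/\mu_F}/(NR)$; the term $3\tilde{\eta}^2\Ex{\norm{B}^2}$ is exactly the stated client-sampling term; and $L_F$-smoothness of $F$ with $\nabla F(w^*)=0$ (Proposition~\ref{prop:existing1} applied to $F$) gives $\norm{\nabla F(w_t)}^2 \leq 2L_F\bigP{F(w_t) - F(w^*)}$, contributing $6 L_F\tilde{\eta}^2\bigP{F(w_t)-F(w^*)}$. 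Collecting the three $F(w_t)-F(w^*)$ contributions yields $-2\tilde{\eta} + 6L_F\tilde{\eta}^2 = -\tilde{\eta}\bigP{2 - 6L_F\tilde{\eta}}$, and taking total expectation over all the mini-batches finishes the proof.

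The algebra is essentially routine; the two places that need care are the conditioning bookkeeping — one must take the expectation over $\mathcal{S}^t$ first, so that unbiasedness of $g_t$ can be used and so that $\frac{1}{S}\sum_{\mathcal{S}^t}$ can be turned into $\frac{1}{N}\sum_i$ on quantities that still depend on mini-batch noise, and only afterwards average over the mini-batches — and the selection of the Young constant $\mu_F/2$, since that is precisely what simultaneously pins down the $1 - \tilde{\eta}\mu_F/2$ contraction and the $2/\mu_F$ prefactor in the gradient-variance term.
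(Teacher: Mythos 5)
Your proposal matches the paper's proof essentially step for step: the same one-step expansion of $\Ex{\norm{w_{t+1}-w^*}^2}$, the same treatment of the cross term (condition on $\mathcal{S}^t$ first, split off $\nabla F_i(w_t)$, use $\mu_F$-strong convexity and a Peter--Paul/Young step that spends exactly half the contraction), and the same three-way Jensen decomposition of $\Ex{\norm{g_t}^2}$ with $\norm{\nabla F(w_t)}^2 \le 2L_F\bigl(F(w_t)-F(w^*)\bigr)$. The argument is correct; note only that the strong convexity of $F_i$ you invoke (and which the paper's own proof also uses) corresponds to Assumption 1(a), so the lemma's stated hypothesis ``Assumption 1(b)'' appears to be a typo in the paper rather than an issue with your proof.
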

\begin{proof}
	Denote the expectation conditioning on all randomness prior to round $t$ by $\mathbb{E}_t$. We have
	\begin{align}
	\Avg{t}{\norm{w_{t+1} - w^*}^2} &= \Avg{t}{\norm{w_t - \tilde{\eta} g_t - w^*}^2} \nonumber\\
	&= \norm{w_t - w^*}^2 - 2 \tilde{\eta} \,\Avg{t}{\langle g_t, w_t - w^* \rangle} + \tilde{\eta}^2 \Avg{t} {\norm{g_t}^2}.  \label{E:onestep}
	\end{align}	
We first take expectation of the second term of \eqref{E:onestep} w.r.t client sampling 
\begin{align}
- \mathbb{E}_{\mathcal{S}_{t}} \bigS{  \innProd[\big]{g_t, w_t - w^*} } &= -    \innProd{\mathbb{E}_{\mathcal{S}_{t}} [g_t], w_t - w^*}  \nonumber\\
&= - \frac{1}{N R} \sum_{i, r}^{N, R} \BigP{ \innProd[\big]{g_{i, r}^t - \nabla  F_i(w_t), w_t - w^*}  +  \innProd[\big]{\nabla F_i(w_t), w_t - w^*} }, \label{E:lem1}
\end{align}
where the second equality is obtained by having $\mathbb{E}_{\mathcal{S}_{t}} [g_t] = \mathbb{E}_{\mathcal{S}_{t}} \bigS{\frac{1}{SR} \sum_{i, r}^{\mathcal{S}^t\!\!, R} g_{i, r}^t} = \frac{1}{SR}   \sum_{i, r}^{N, R} g_{i, r}^t  \mathbb{E}_{\mathcal{S}_{t}} \bigS{ \mathbb{I}_{i \in S_{t}} }  = \frac{1}{NR}   \sum_{i, r}^{N, R} g_{i, r}^t $, where $\mathbb{I}_{A}$ is the indicator function of an event $A$ and thus $\mathbb{E}_{\mathcal{S}_{t}} \bigS{ \mathbb{I}_{i \in S_{t}} }=S/N$ due to uniform sampling.  We then bound two terms  of \eqref{E:lem1} as follows
\begin{align}
- \frac{1}{N}  \SumLim{i=1}{N} \langle \nabla F_i(w_t), w_t - w^* \rangle &\leq  { F(w^*) - F(w_t) - \frac{\mu_F}{2} \norm{w_t - w^*}^2} \label{E:term1}\\
- \frac{2}{N R} \sum_{i, r}^{N, R} \langle g_{i, r}^t - \nabla  F_i(w_t), w_t - w^* \rangle &\leq  \frac{1}{NR} \sum_{i, r}^{N, R}  \biggP{  \frac{2}{\mu_F}\norm{g_{i, r}^t - \nabla F_i(w_t)}^2 + \frac{\mu_F}{2} \norm{w_t - w^*}^2} \label{E:term2}
\end{align}
where the first and second inequalities are due to $\mu_F$-strongly convex  $F_i(\cdot)$ and the Peter Paul inequality, respectively. 

We next take expectation of the last term of \eqref{E:onestep} w.r.t client sampling 
\begin{align}
&\Avg{\mathcal{S}_{t}}{\norm{ g_t}^2} =  \mathbb{E}_{\mathcal{S}_{t}}\norm[\bigg]{ \frac{1}{SR} \sum_{i, r}^{\mathcal{S}^t\!\!, R} g_{i, r}^t }^2 \nonumber\\
&\leq    3 \mathbb{E}_{\mathcal{S}_{t}} \biggS{\norm[\bigg]{\frac{1}{SR} \sum_{i, r}^{\mathcal{S}^t\!\!, R} g_{i, r}^t - \nabla F_i(w_t)}^2 + \norm[\bigg]{\frac{1}{S} \sum_{i \in \mathcal{S}^t} \nabla F_i(w_t) - \nabla F (w_t)}^2 +  \norm*{ \nabla F(w_t)}^2 } \nonumber\\
&\leq \frac{3}{NR}   \sum_{i, r}^{N, R}  \norm[\big]{ g_{i, r}^t - \nabla F_i(w_t)}^2 + 3 \mathbb{E}_{\mathcal{S}_{t}} \norm[\bigg]{\frac{1}{S} \sum_{i \in \mathcal{S}^t} \nabla F_i(w_t) - \nabla F (w_t)}^2 + 6 L_F  \bigP{F (w_t) - F(w^*)}, \label{E:decompose_g}
\end{align}
where the first inequality is by Proposition~\ref{prop:existing2}, and the second inequality is by Proposition~\ref{prop:existing1} and 
\begin{align*}
 	\mathbb{E}_{\mathcal{S}_{t}} \biggS{\norm[\bigg]{\frac{1}{SR} \sum_{i, r}^{\mathcal{S}^t\!\!, R} g_{i, r}^t - \nabla F_i(w_t)}^2}  &\leq  \frac{1}{SR}  \mathbb{E}_{\mathcal{S}_{t}} \biggS{\sum_{i, r}^{\mathcal{S}^t\!\!, R}  \norm[\Big]{ g_{i, r}^t - \nabla F_i(w_t)}^2} \\ 
 	&= \frac{1}{SR}   \sum_{i, r}^{N, R}  \norm[\big]{ g_{i, r}^t - \nabla F_i(w_t)}^2  \mathbb{E}_{\mathcal{S}_{t}} \bigS{ \mathbb{I}_{i \in S_{t}} } \\
 	&= \frac{1}{NR}   \sum_{i, r}^{N, R}  \norm[\big]{ g_{i, r}^t - \nabla F_i(w_t)}^2. 
\end{align*}
By substituting \eqref{E:term1}, \eqref{E:term2}, and \eqref{E:decompose_g} into \eqref{E:onestep}, and take expectation with all history, we finish the proof. 
\end{proof}

\begin{lemma}[Bounded diversity of $F_i$ w.r.t client sampling] \label{lem:5}
	\begin{align*}
	\mathbb{E}_{\mathcal{S}_{t}}  \norm[\bigg]{\frac{1}{S}\sum\limits_{i \in \mathcal{S}^t} \nabla F_i(w_t) - \nabla F (w_t)}^2 \leq \frac{N/S-1}{N - 1}  \sum_{i=1}^{N} \frac{1 }{ N}   \norm{\nabla F_i(w_t) - \nabla F (w_t)}^2. 
	\end{align*}
\end{lemma}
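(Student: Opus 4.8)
The plan is to recognize this as the classical finite-population (sampling-without-replacement) variance identity, which in fact holds with equality. First I would set $z_i \defeq \nabla F_i(w_t) - \nabla F(w_t)$ and record the centering identity $\SumNoLim{i=1}{N} z_i = 0$, immediate from $\nabla F(w_t) = \frac{1}{N}\SumNoLim{i=1}{N}\nabla F_i(w_t)$. Since $\mathcal{S}^t$ is a uniformly random size-$S$ subset of $\{1,\dots,N\}$, the two combinatorial facts I need are $\Prob{i \in \mathcal{S}^t} = S/N$ and, for $i \neq j$, $\Prob{i,j \in \mathcal{S}^t} = \frac{S(S-1)}{N(N-1)}$; equivalently $\mathbb{E}[\mathbb{I}_{i\in\mathcal{S}^t}] = S/N$ and $\mathbb{E}[\mathbb{I}_{i\in\mathcal{S}^t}\mathbb{I}_{j\in\mathcal{S}^t}] = \frac{S(S-1)}{N(N-1)}$.

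Next I would expand the squared norm into a diagonal and an off-diagonal part,
\[
\mathbb{E}_{\mathcal{S}_{t}}\norm[\Big]{\tfrac{1}{S}\SumNoLim{i\in\mathcal{S}^t}{} z_i}^2 = \frac{1}{S^2}\,\mathbb{E}_{\mathcal{S}_{t}}\biggS{\SumNoLim{i\in\mathcal{S}^t}{}\norm{z_i}^2 + \SumNoLim{i\neq j,\, i,j\in\mathcal{S}^t}{}\innProd{z_i, z_j}},
\]
and evaluate each piece term by term using the inclusion probabilities above, obtaining $\frac{S}{N}\SumNoLim{i=1}{N}\norm{z_i}^2$ for the diagonal sum and $\frac{S(S-1)}{N(N-1)}\SumNoLim{i\neq j}{}\innProd{z_i,z_j}$ for the off-diagonal sum. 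I would then substitute $\SumNoLim{i\neq j}{}\innProd{z_i,z_j} = \norm{\SumNoLim{i=1}{N} z_i}^2 - \SumNoLim{i=1}{N}\norm{z_i}^2 = -\SumNoLim{i=1}{N}\norm{z_i}^2$, where the last step uses the centering identity.

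Collecting the scalar coefficient gives $\frac{1}{S^2}\BigP{\frac{S}{N} - \frac{S(S-1)}{N(N-1)}} = \frac{1}{SN}\cdot\frac{N-S}{N-1} = \frac{1}{N}\cdot\frac{N/S-1}{N-1}$, so the left-hand side equals $\frac{N/S-1}{N-1}\cdot\frac{1}{N}\SumNoLim{i=1}{N}\norm{z_i}^2$, which is exactly the claimed bound (with equality; the statement is written as an inequality only for later convenience). There is no genuine difficulty here — the only points requiring care are getting the pairwise inclusion probability correct for sampling without replacement and tracking the sign in the centering identity; the rest is bookkeeping.
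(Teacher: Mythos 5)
Your proposal is correct and follows essentially the same route as the paper's proof: both expand the squared norm via indicator variables for sampling without replacement, use the inclusion probabilities $S/N$ and $\tfrac{S(S-1)}{N(N-1)}$, and invoke the centering identity $\sum_{i=1}^{N}\bigl(\nabla F_i(w_t)-\nabla F(w_t)\bigr)=0$ to eliminate the cross terms, arriving at the same coefficient $\tfrac{N/S-1}{N(N-1)}$ with equality. The only cosmetic difference is that the paper substitutes $\sum_{i\neq j}\langle z_i,z_j\rangle=-\sum_i\lVert z_i\rVert^2$ implicitly via the identity $\sum_i\lVert z_i\rVert^2+\sum_{i\neq j}\langle z_i,z_j\rangle=0$, whereas you make that step explicit.
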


\begin{proof}
	We use similar proof arguments in \cite[Lemma 5]{li_convergence_2020} as follows
	\begin{align*} 
	&\mathbb{E}_{\mathcal{S}_{t}} \norm[\Big]{\frac{1}{S}\sum\nolimits_{i \in \mathcal{S}^t} \nabla F_i(w_t) - \nabla F (w_t)}^2 =\frac{1}{S^{2}} \mathbb{E}_{\mathcal{S}_{t}} \norm[\Big]{ \sum\nolimits_{i=1}^{N} \mathbb{I}_{i \in S_{t}} \bigP{\nabla F_i(w_t) - \nabla F (w_t)} }^{2} \\ 
	&=\frac{1}{S^{2}} \biggS{\sum_{i =1}^{N} \mathbb{E}_{\mathcal{S}_{t}} \bigS{ \mathbb{I}_{i \in S_{t}} } \norm[\big]{\nabla F_i(w_t) - \nabla F (w_t)} ^{2} \\
		&\qquad \quad+\sum_{i \neq j} \mathbb{E}_{\mathcal{S}_{t}} \bigS{ \mathbb{I}_{i \in S_{t}} \mathbb{I}_{j \in S_{t}}}\left\langle \nabla F_i(w_t) - \nabla F (w_t), \nabla F_j(w_t) - \nabla F (w_t) \right\rangle } \\ 
	&=\frac{1}{S N}\! \sum_{i=1}^{N} \norm[\big]{\nabla F_i(w_t) - \nabla F (w_t)}^{2}\!\!+\!\sum_{i \neq j}\! \frac{S-1}{S N(N-1)}\! \left\langle \nabla F_i(w_t) - \nabla F (w_t), \! \nabla F_j(w_t) - \nabla F (w_t) \right\rangle \\ 
	&=\frac{1 }{S N } \BigP{ 1 - \frac{S-1}{N-1}} \sum_{i=1}^{N} \norm[\big]{\nabla F_i(w_t) - \nabla F (w_t)}^{2} \\
	& = \frac{N/S-1}{N-1}  \sum_{i=1}^{N} \frac{1 }{ N}  \norm[\big]{\nabla F_i(w_t) - \nabla F (w_t)}^{2}, 
	\end{align*}
	where the third equality is due to $ \mathbb{E}_{\mathcal{S}_{t}} \bigS{ \mathbb{I}_{i \in S_{t}} } =\mathbb{P}\left(i \in S_{t}\right)=\frac{S}{N}$ and $\mathbb{E}_{\mathcal{S}_{t}} \bigS{ \mathbb{I}_{i \in S_{t}} \mathbb{I}_{j \in S_{t}}} = \mathbb{P}\left(i, j \in S_{t}\right)=\frac{S(S-1)}{N(N-1)}$ for all
	$i \neq j$, and the fourth equality is by  $\sum_{i =1}^{N} \norm[\big]{\nabla F_i(w_t) - \nabla F (w_t)}^{2}+\sum_{i \neq j}\left\langle \nabla F_i(w_t) - \nabla F (w_t), \nabla F_j(w_t) - \nabla F (w_t)\right\rangle=0$. 
\end{proof}

\begin{lemma}[Bounded client drift error]\label{lem:6} If $\tilde{\eta} \leq {\frac{\beta}{2 L_F } } \Leftrightarrow \eta \leq \frac{1}{2 R L_F }$, we have
	\begin{align*}
	\frac{1}{NR} \sum_{i, r}^{N, R}\Ex{\norm{g_{i, r}^t - \nabla F_i(w_t)}^2 }\leq   2 \lambda^2 \delta^2  + \frac{16 L_F^2 \tilde{\eta}^2 }{\beta^2}  \biggP{ 3 \frac{1 }{ N} \sum_{i=1}^{N}  \Ex{\norm{\nabla F_i(w_t)}^2} + \frac{2 \lambda^2 \delta^2}{R}}. 
	\end{align*}
\end{lemma}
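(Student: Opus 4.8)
The plan is to decompose the error $g_{i,r}^t-\nabla F_i(w_t)$ into a \emph{current-round approximation} term and a \emph{local-drift} term, dispose of the first with the $\delta$-approximation property, and reduce the second to a bound on how far the local iterates drift from the broadcast point $w_t$ within one communication round. Using the identities $g_{i,r}^t=\lambda\bigl(w_{i,r}^t-\tilde\theta_i(w_{i,r}^t)\bigr)$ and $\nabla F_i(w)=\lambda\bigl(w-\hat\theta_i(w)\bigr)$ from Proposition~\ref{pro:1}, write
\[
g_{i,r}^t-\nabla F_i(w_t)=\underbrace{\lambda\bigl(\hat\theta_i(w_{i,r}^t)-\tilde\theta_i(w_{i,r}^t)\bigr)}_{\text{approximation}}+\underbrace{\bigl(\nabla F_i(w_{i,r}^t)-\nabla F_i(w_t)\bigr)}_{\text{drift}} .
\]
Applying $\norm{a+b}^2\le 2\norm{a}^2+2\norm{b}^2$, the bound $\Ex{\norm{\tilde\theta_i(w_{i,r}^t)-\hat\theta_i(w_{i,r}^t)}^2}\le\delta^2$ from Lemma~\ref{lem:1}, and the $L_F$-smoothness of $F_i$ from Proposition~\ref{pro:1}, one gets
\[
\tfrac1{NR}\sum_{i,r}^{N,R}\Ex{\norm{g_{i,r}^t-\nabla F_i(w_t)}^2}\le 2\lambda^2\delta^2+2L_F^2\cdot\tfrac1{NR}\sum_{i,r}^{N,R}\Ex{\norm{w_{i,r}^t-w_t}^2},
\]
so it remains to control the average client drift.

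Next I would unroll the local recursion. From $w_{i,0}^t=w_t$ and $w_{i,r+1}^t=w_{i,r}^t-\eta g_{i,r}^t$ we have $w_{i,r}^t-w_t=-\eta\sum_{s=0}^{r-1}g_{i,s}^t$, and Jensen's inequality (Proposition~\ref{prop:existing2}) gives $\norm{w_{i,r}^t-w_t}^2\le\eta^2 r\sum_{s=0}^{r-1}\norm{g_{i,s}^t}^2$. Re-inserting the same splitting of $g_{i,s}^t$, now anchored at $w_t$ (namely $g_{i,s}^t=\nabla F_i(w_t)+(\nabla F_i(w_{i,s}^t)-\nabla F_i(w_t))+\lambda(\hat\theta_i(w_{i,s}^t)-\tilde\theta_i(w_{i,s}^t))$), and bounding the middle term by $L_F^2\norm{w_{i,s}^t-w_t}^2$ and the last by $\lambda^2\delta^2$ in expectation, produces a \emph{self-referential} inequality for $\Ex{\norm{w_{i,r}^t-w_t}^2}$ in terms of $\Ex{\norm{\nabla F_i(w_t)}^2}$, $\lambda^2\delta^2$, and $\sum_{s<r}\Ex{\norm{w_{i,s}^t-w_t}^2}$ (the last with coefficient of order $\eta^2 L_F^2 r$). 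Summing over $r=0,\dots,R-1$, setting $D_i\defeq\sum_{r=0}^{R-1}\Ex{\norm{w_{i,r}^t-w_t}^2}$, and collecting terms, the self term appears as $c\,\eta^2 L_F^2 R^2 D_i$ on the right-hand side; the hypothesis $\eta\le\frac1{2RL_F}$ (equivalently $\tilde\eta\le\frac\beta{2L_F}$) makes $c\,\eta^2 L_F^2 R^2\le c/4<1$, so it can be absorbed into the left. One is then left with $\tfrac1{NR}\sum_i D_i\lesssim\eta^2 R^2\cdot\tfrac1N\sum_i\Ex{\norm{\nabla F_i(w_t)}^2}+\eta^2 R\,\lambda^2\delta^2$; substituting $\eta^2 R^2=\tilde\eta^2/\beta^2$ and $\eta^2 R=\tilde\eta^2/(\beta^2 R)$ and plugging back into the display above yields the claimed bound after tracking constants into the stated $16$, $3$, $2$.

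I expect the drift recursion to be the main obstacle, for two reasons. First, the self-referential coefficient must be kept strictly below $1$ using only $\eta\le\frac1{2RL_F}$, which forces careful constant bookkeeping through the polynomial sums $\sum_r r^2$ and the double sum $\sum_r r\sum_{s<r}(\cdot)$, and, if the crude $\norm{a+b}^2\le 2\norm{a}^2+2\norm{b}^2$ is too lossy, a tuned Young/Peter--Paul split instead. Second, obtaining the sharp $\tfrac{\lambda^2\delta^2}{R}$ scaling for the approximation-error contribution to the drift — rather than the cruder $\lambda^2\delta^2$ that a naive Jensen bound on $\norm{\sum_{s<r}\lambda(\hat\theta_i-\tilde\theta_i)}^2$ would produce — relies on the fact that each local round draws a fresh, independent mini-batch, so the stochastic part of the prox errors behaves as a (conditionally) mean-zero, independent sequence whose partial sums grow only linearly in $r$; making this rigorous (and checking that any residual bias is harmless) is the delicate point. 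Everything else — expanding squared norms, applying smoothness, and summing arithmetic/geometric series — is routine.
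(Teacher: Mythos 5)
Your opening decomposition is exactly the paper's: writing $g_{i,r}^t-\nabla F_i(w_t)=\lambda(\hat\theta_i(w_{i,r}^t)-\tilde\theta_i(w_{i,r}^t))+(\nabla F_i(w_{i,r}^t)-\nabla F_i(w_t))$ and reducing everything to the average drift $\Ex{\norm{w_{i,r}^t-w_t}^2}$ is how the proof starts. The divergence, and the gap, is in how you control that drift. You unroll $w_{i,r}^t-w_t=-\eta\sum_{s<r}g_{i,s}^t$ and apply Jensen, $\norm{\sum_{s<r}x_s}^2\le r\sum_{s<r}\norm{x_s}^2$. For the constant (non-cancelling) pieces of $g_{i,s}^t$ this produces a contribution quadratic in $r$, so the $\delta^2$ injection accumulates to order $\eta^2R^2\lambda^2\delta^2=\tilde\eta^2\lambda^2\delta^2/\beta^2$ rather than the stated $\tilde\eta^2\lambda^2\delta^2/(\beta^2R)$ — you correctly flag that you are off by a factor of $R$ there. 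The trouble is that your proposed rescue does not work: you want to treat $\lambda(\hat\theta_i(w_{i,s}^t)-\tilde\theta_i(w_{i,s}^t))$ as a conditionally mean-zero independent sequence so that its partial sums grow linearly, but Lemma~\ref{lem:1} only supplies the second-moment bound $\Ex{\norm{\tilde\theta_i-\hat\theta_i}^2}\le\delta^2$; the approximate minimizer $\tilde\theta_i$ is \emph{not} an unbiased estimator of $\hat\theta_i$ (the optimization-accuracy component $\nu$ in \eqref{E:h_grad} is a systematic error, and even the mini-batch component enters nonlinearly through the argmin). So there is no martingale structure to exploit, and the "residual bias" you defer checking is in fact the whole term.

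The paper avoids this entirely by never invoking Jensen on the full sum. It runs a one-step recursion on $\Ex{\norm{w_{i,r}^t-w_t}^2}$ with a Peter--Paul split, $\norm{a+b}^2\le(1+\tfrac{1}{2R})\norm{a}^2+(1+2R)\norm{b}^2$ applied to $a=w_{i,r-1}^t-w_t$ and $b=-\eta\nabla F_i(w_t)$, absorbs the $L_F^2\Ex{\norm{w_{i,r-1}^t-w_t}^2}$ self-term into the contraction factor using $2\eta^2L_F^2\le\tfrac{1}{2R}$, and unrolls geometrically with $\sum_{r=0}^{R-1}(1+\tfrac1R)^r\le2R$. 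In that telescoped form each round injects only $4\eta^2\lambda^2\delta^2$ and the amplification is linear in $R$, which is precisely where the $\tfrac{2\lambda^2\delta^2}{R}$ comes from — deterministically, with no independence assumption. Your argument as written proves the lemma with $\lambda^2\delta^2$ in place of $\tfrac{2\lambda^2\delta^2}{R}$ inside the parenthesis (still $O(\lambda^2\delta^2)$ overall since the leading $2\lambda^2\delta^2$ dominates, so the downstream theorems would survive), but to get the stated inequality you should replace the Jensen step with the one-step weighted recursion rather than appeal to stochastic cancellation.
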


\begin{proof}

\begin{align}
\Ex{\norm{g_{i, r}^t -\nabla F_i(w_t)}^2} &\leq  2 \Ex{\norm{g_{i, r}^t -\nabla F_i(w_{i, r}^t)}^2 + \norm{\nabla F_i(w_{i, r}^t) -\nabla F_i(w_t)}^2} \nonumber\\
& \leq 2  \BigP{\lambda^2 \Ex{\norm[\big]{\tilde{\theta}_i (w_{i, r}^t) - \hat{\theta}_i(w_{i, r}^{t} )}^2} + L_F^2 \Ex{\norm*{w_{i, r}^t - w_t }^2}} \nonumber\\
& \leq 2  \BigP{\lambda^2 \delta^2 + L_F^2 \Ex{\norm*{w_{i, r}^t - w_t }^2}}, \label{E:drift0}
\end{align}
where the first and second inequalities are due to Propositions~\ref{prop:existing2} and \ref{prop:existing1}, respectively. We next bound the  drift of local update of client $i$ from global model  $\norm*{w_{i, r}^t - w_t }^2$ as follows
\begin{align}
&\Ex{\norm{w_{i, r}^t - w_t}^2} = \Ex{\norm{w_{i, r-1}^t - w_t - {\eta} g_{i, r-1}^t}^2} \nonumber\\
&\leq 2 \Ex{\norm{w_{i, r-1}^t - w_t - {\eta} \nabla F_i(w_t)}^2  +  \eta^2 \norm{g_{i, r-1}^t -\nabla F_i(w_t)}^2}\nonumber\\
&\leq 2\biggP{1 + \frac{1}{2R}} \Ex{\norm{w_{i, r-1}^t - w_t}^2} + 2(1+ 2R) {\eta}^2 \Ex{\norm{\nabla F_i(w_t)}^2} \nonumber \\
&\qquad + 4 \eta^2 \BigP{\lambda^2 \delta^2 + L_F^2 \Ex{\norm*{w_{i, r-1}^t - w_t }^2}} \nonumber\\
&= 2 \biggP{1 + \frac{1}{2R} + 2 \eta^2 L_F^2 }\Ex{\norm{w_{i, r-1}^t - w_t}^2} + 2(1 + 2R) {\eta}^2 \Ex{\norm{\nabla F_i(w_t)}^2} + 4 \eta^2 \lambda^2 \delta^2 \nonumber\\
&\leq 2 \biggP{1 + \frac{1}{R}} \Ex{\norm{w_{i, r-1}^t - w_t}^2 }+ 2(1 + 2 R) {\eta}^2 \Ex{\norm{\nabla F_i(w_t)}^2} + 4 \eta^2 \lambda^2 \delta^2 \label{E:drift1}\\
&\leq   \biggP{\frac{6\tilde{\eta}^2  }{\beta^2 R} \Ex{\norm{\nabla F_i(w_t)}^2} + \frac{4\tilde{\eta}^2  \lambda^2 \delta^2}{\beta^2 R^2} } \SumLim{r=0}{R-1}2 \biggP{1 + \frac{1}{R}}^r \label{E:drift2} \\
&\leq \frac{8 \tilde{\eta}^2}{\beta^2}  \biggP{ 3 \Ex{\norm{\nabla F_i(w_t)}^2} + \frac{2 \lambda^2 \delta^2}{R}},  \label{E:drift3}
\end{align}
where  \eqref{E:drift1} is by having $2 {\eta}^2 L_F^2  = 2 L_F^2 \frac{\tilde{\eta}^2}{\beta^2 R^2} \leq \frac{1}{2R^2} \leq \frac{1}{2R}$  when  $\tilde{\eta}^2 \leq \frac{\beta^2}{4 L_F^2} $, for all $R \geq 1$. \eqref{E:drift2} is due to  unrolling \eqref{E:drift1} recursively, and  $2  (1 + 2R) \eta^2 = 2 (1 + 2R)\frac{\tilde{\eta}^2}{\beta^2 R^2} \leq  \frac{6\tilde{\eta}^2}{\beta^2 R}$ because $\frac{1+2R}{R} \leq 3$ when $R \geq 1$. We have \eqref{E:drift3} because $
\SumNoLim{r=0}{R-1}\nbigP{1 + {1}/{R}}^r = \frac{\nbigP{1+{1}/{R}}^R-1}{{1}/{R}} \leq  \frac{e-1}{{1}/{R}} \leq 2 R,
$
by using the facts that $ \SumNoLim{i=0}{n-1} x^i = \frac{x^n - 1}{x - 1}$ and $(1 + \frac{x}{n})^n \leq e^x$ for any $x \in \mathbb{R}, n \in \mathbb{N}$. 
Substituting \eqref{E:drift3} to \eqref{E:drift0}, we obtain 
\begin{align}
\Ex{\norm{g_{i, r}^t - \nabla F_i(w_t)}^2} &\leq  2 \lambda^2 \delta^2  + \frac{16  \tilde{\eta}^2 L_F^2}{\beta^2}  \biggP{ 3 \Ex{\norm{\nabla F_i(w_t)}^2} + \frac{2 \lambda^2 \delta^2}{R} }. 
\end{align}

By taking average over $N$ and $R$, we finish the proof.

%Since 
%\begin{align*}
% \tilde{\eta} + 1 \leq \frac{2 \lambda (R+1) + 1}{2 \lambda (R+1)} \leq \frac{2 \lambda (R+1) + (R+1)}{2 \lambda R} = \frac{(R+1)(2 \lambda  + 1)}{2 \lambda R}
%\end{align*}
\end{proof}

\subsubsection{Completing the proof of Theorem~\ref{Th:1}}

\begin{proof}
	
Before proving the main theorem, we  derive the first  auxiliary result:
\begin{align}
\mathbb{E}\biggS{ \norm[\Big]{\frac{1}{S}\sum\nolimits_{i \in \mathcal{S}^t} \nabla F_i(w_t) - \nabla F (w_t)}^2} &\leq \frac{N/S-1}{N - 1}  \sum_{i=1}^{N} \frac{1 }{ N}   \Ex{\norm{\nabla F_i(w_t) - \nabla F (w_t)}^2} \label{E:lem5_ext1}\\
&\leq \frac{N/S-1}{N - 1}  \BigP{4 L_F  \Ex{F(w_t) - F(w^*)} + 2 \sigma_{F,1}^2  } \label{E:lem5_ext2},
\end{align}
where \eqref{E:lem5_ext1} is by Lemma~\ref{lem:5} and \eqref{E:lem5_ext2} is by Lemma~\ref{lem:hete} (a). 

The second auxiliary result is as follows
%\begin{align}
%&\mkern-36mu \frac{\tilde{\eta} \nbigP{3\tilde{\eta} + 2 /\mu_F}}{NR} \sum_{i, r}^{N, R}\Ex{\norm{g_{i, r}^t - \nabla F_i(w_t)}^2 }  \nonumber \\
%&\leq  \tilde{\eta} \, \underbrace{{16 \delta^2 { \lambda^2 }}/{\mu_F}}_{\defeq C_1 } +     \frac{256     \tilde{\eta}^3  }{ \mu_F \beta^2}  \sum_{i=1}^{N}  \frac{1 }{ N} \BigP{  \frac{\delta^2  \lambda^2}{R}  + \Ex{\norm*{\nabla F_i(w_t)}^{2}}} \label{E:lem6_ext1}\\
%&\leq  \tilde{\eta} \, C_1 +          \frac{256 \tilde{\eta}^3}{ \mu_F \beta^2}  \sum_{i=1}^{N}  \frac{1 }{ N} \biggP{\frac{\delta^2  \lambda^2}{R} + \Ex{\norm*{\nabla F_i(w_t) - \nabla F(w_t)}^{2}} + \Ex{\norm*{\nabla F(w_t)}^2 }} \label{E:lem6_ext2} \\
%&\leq  \tilde{\eta} \, C_1 +      \tilde{\eta}^3 \frac{256 }{ \mu_F}  \biggP{6 L_F (F(w_t) - F(w^*)) + 2 \sigma_{F,1}^2 + \frac{\delta^2  \lambda^2}{R}  } \label{E:lem6_ext3} \\
%&\leq  \tilde{\eta} \, C_1 +     \tilde{\eta}^3 1536 \underbrace{\bigP{{ L_F}/{ \mu_F}}}_{\defeq \kappa_F}  (F(w_t) - F(w^*)) +  \tilde{\eta}^3   \frac{ 512 \sigma_{F,1}^2}{ \mu_F}   +  \tilde{\eta}^3   \frac{ 256 \delta^2  \lambda^2}{ \mu_F} \label{E:lem6_ext4}
%\end{align}
\begin{align}
&\frac{\tilde{\eta} \nbigP{3\tilde{\eta} + 2 /\mu_F}}{NR} \sum_{i, r}^{N, R}\Ex{\norm{g_{i, r}^t - \nabla F_i(w_t)}^2 }  \nonumber \\
&\leq  \tilde{\eta} \, {\frac{16 \delta^2 { \lambda^2 }}{\mu_F}} +   \frac{\tilde{\eta}^3}{\beta^2 }  \frac{128   L_F^2  }{ \mu_F }  \sum_{i=1}^{N}  \frac{1 }{ N} \biggP{  3 \Ex{\norm*{\nabla F_i(w_t)}^{2}} + \frac{2 \delta^2  \lambda^2}{R} } \label{E:lem6_ext1}\\
&\leq  \tilde{\eta} \, \frac{16 \delta^2 { \lambda^2 }}{\mu_F} +    \frac{\tilde{\eta}^3}{\beta^2 }  \frac{128 L_F^2 }{ \mu_F }  \sum_{i=1}^{N}  \frac{1 }{ N} \biggP{ 6 \Ex{\norm*{\nabla F_i(w_t) - \nabla F_i(w^*)}^{2}} + 6 \Ex{\norm*{\nabla F_i(w^*)}^2 } + \frac{2 \delta^2  \lambda^2}{R}} \label{E:lem6_ext2} \\
&\leq  \tilde{\eta} \, \frac{16 \delta^2 { \lambda^2 }}{\mu_F} +     \frac{\tilde{\eta}^3}{\beta^2 }  \frac{128 L_F^2}{ \mu_F}  \biggP{12 L_F \Ex{F(w_t) - F(w^*) }+ \frac{2 (3R\sigma_{F,1}^2 + \delta^2  \lambda^2)}{R}  } \label{E:lem6_ext3} \\
&\leq  \tilde{\eta} \, \frac{16 \delta^2 { \lambda^2 }}{\mu_F} +       \frac{\tilde{\eta}^2}{\beta } { 768 \kappa_F L_F }  \Ex{F(w_t) - F(w^*)} +   \frac{\tilde{\eta}^3}{\beta^2 } \frac{ 256 \nbigP{3 R \sigma_{F,1}^2 + \delta^2  \lambda^2} \kappa_F}{ R},   \label{E:lem6_ext4}
\end{align}
where we have \eqref{E:lem6_ext1}  by using  Lemma~\ref{lem:6} and $ {3\tilde{\eta} + 2 /\mu_F} \leq 8/\mu_F$ when $\tilde{\eta} \leq 2/\mu_F$. \eqref{E:lem6_ext2} is by the fact that $\mathbb{E}\bigS{\norm{X}^{2}}=\mathbb{E}\bigS{\norm{X-\mathbb{E}[X]}^2}+\mathbb{E}[\norm{X}]^{2}$ for any vector of random variable $X$.  \eqref{E:lem6_ext3} is due to  Proposition~\ref{prop:existing1} (similar to the proof of Lemma~\ref{lem:hete} (a)). % and  $\norm{ \nabla F(w_t)}^2 \leq 2 L_F  \bigP{F (w_t) - F(w^*)} $ by $L_F$-smoothness of $F(\cdot)$. 
 \eqref{E:lem6_ext4} is due to $\tilde{\eta} \leq \frac{\beta}{2 L_F}$ and  $\kappa_F \defeq \frac{L_F}{\mu_F}$. 

By substituting \eqref{E:lem5_ext2} and \eqref{E:lem6_ext3} into Lemma~\ref{lem:1step_w}, we have 
\begin{align}
&\Ex{\norm{w_{t+1} - w^*}^2}  \leq \nonumber \\
&\BigP{1 -  \frac{\tilde{\eta} \mu_F}{2}}\Ex{\norm{w_t - w^*}^2} -  \tilde{\eta} \biggS{\overbrace{2  -  \tilde{\eta}\,  L_F \biggP{6  + 12  \frac{N/S-1}{N - 1} +   \frac{768 \kappa_F}{ \beta }} }^{ \geq 1 \text{ when } \tilde{\eta} \text{ satisfied \eqref{E:eta_cond}}} }  \Ex{F(w_t) - F(w^*)} \nonumber\\ 
&\qquad + \tilde{\eta} \underbrace{\frac{16 \delta^2 { \lambda^2 }}{\mu_F}}_{\eqdef C_1 } +   \tilde{\eta}^2  \underbrace{  \frac{6 \sigma_{F,1}^2(N/S-1)}{N-1}}_{\eqdef C_2}  +  \frac{\tilde{\eta}^3}{\beta^2} \underbrace{  \frac{256 \nbigP{3 R\sigma_{F,1}^2 + \delta^2  \lambda^2} \kappa_F}{ R  }  }_{\eqdef C_3} \nonumber\\
&\leq \textbf{\BigP{1 -  \frac{\tilde{\eta} \mu_F}{2}}} \Ex{\norm{w_t - w^*}^2} - \tilde{\eta} \, \Ex{F(w_t) - F(w^*)} + \tilde{\eta} C_1 +   \tilde{\eta}^2  C_2 +  \frac{\tilde{\eta}^3}{\beta^2} C_3,  \label{E:lem4_ext1}
\end{align}
where we have \eqref{E:lem4_ext1} by using the fact that $\frac{N/S-1}{N - 1} \leq 1$ for the following inequality
\begin{align*}
2  -  \tilde{\eta}\,  L_F \biggP{6  + 12  \frac{N/S-1}{N - 1} +   \frac{768 \kappa_F}{ \beta }} \geq 2  -  6 \tilde{\eta}\,  L_F \biggP{3 +   \frac{128 \kappa_F}{ \beta }} \geq 1
\end{align*}
with the condition
\begin{align}
\tilde{\eta} &\leq   \frac{1}{6 L_F \nbigP{3 + 128 \kappa_F/\beta}}  \eqdef  \hat{\eta}_1. \label{E:eta_cond} 
%	\tilde{\eta} &\leq   \frac{1}{L_F} \biggP{6  + 12  \frac{N/S-1}{N - 1} +   \frac{768 \kappa_F}{ \beta  }}^{-1} \eqdef \hat{\eta}_1. \label{E:eta_cond}
\end{align}
We note that $\hat{\eta}_1 \leq \min \BigC{\frac{\beta}{2 L_F}, \frac{2}{\mu_F}}$ with $\beta \geq 1$ and $L_F \geq \mu_F$.

Let $\Delta_{t} \defeq \norm{w_t - w^*}^2$. By re-arranging the terms and multiplying both sides of \eqref{E:lem4_ext1} with $\frac{\alpha_{t}}{\tilde{\eta} A_T}$, where $A_T \defeq \SumNoLim{t=0}{T-1}\alpha_{t}$, then we have 
\begin{align}
	 \sum_{t=0}^{T-1} \frac{\alpha_{t} \Ex{F(w_t)} }{A_T}- F(w^*) &\leq \sum_{t=0}^{T-1} \Ex{\bigP{1 -  \frac{ \tilde{\eta}\mu_F}{2}}\frac{\alpha_{t}\Delta_{t}}{\tilde{\eta} A_T} - \frac{\alpha_{t}\Delta_{t+1}}{\tilde{\eta} A_T}}  + \frac{\tilde{\eta}^2}{\beta^2} C_3 + \tilde{\eta} C_2 + C_1 \nonumber\\
	& \leq \sum_{t=0}^{T-1} \Ex{\frac{\alpha_{t-1}\Delta_{t} - \alpha_{t}\Delta_{t+1}}{\tilde{\eta} A_T}}  + \frac{\tilde{\eta}^2}{\beta^2} C_3 + \tilde{\eta} C_2 + C_1 \label{E:telescope1}\\
	& = \frac{1 }{\tilde{\eta} A_T} \Delta_{0}  -  \frac{ \alpha_{T-1} }{\tilde{\eta} A_T}   \Ex{\Delta_{T}} + \frac{\tilde{\eta}^2}{\beta^2} C_3 + \tilde{\eta} C_2 + C_1 \nonumber\\
	&\leq   \mu_F   e^{- \tilde{\eta} \mu_F T /2} \Delta_{0} -   \frac{\mu_F}{2}   \Ex{\Delta_{T}} + \frac{\tilde{\eta}^2}{\beta^2} C_3 + \tilde{\eta} C_2 + C_1 \label{E:telescope2}, 
%	\implies 	\Ex{F\BigP{\sum\nolimits_{t=0}^{T-1} \frac{  \alpha_{t} } {A_T} w_t}} - F(w^*) &-  2 \mu_F   \Ex{\Delta_{T}}  \leq 2 \mu_F  \Delta_{0} e^{- \tilde{\eta} \mu_F T }  + \tilde{\eta}  \sigma_F^2  +  C_1
\end{align}
where we have \eqref{E:telescope1}  because in order for telescoping, we choose $\BigP{1 -  \frac{\tilde{\eta} \mu_F}{2}}\alpha_{t} = \alpha_{t-1}$, and thus $\alpha_{t} = \BigP{1 -  \frac{\tilde{\eta} \mu_F}{2}}^{-(t+1)}$ by recursive update.  Regarding to \eqref{E:telescope2}, we have
\begin{align*}
A_T &= \SumNoLim{t=0}{T-1}\BigP{1 -  \frac{\tilde{\eta} \mu_F}{2}}^{-(t+1)} \\
&= \BigP{1 -  \frac{\tilde{\eta} \mu_F}{2}}^{-T} \SumNoLim{t=0}{T-1}\BigP{1 -  \frac{\tilde{\eta} \mu_F}{2}}^{t}\\ 
&= a_{T-1} \frac{1 - \BigP{1 -  \frac{\tilde{\eta} \mu_F}{2}}^{T}}{\tilde{\eta} \mu_F/2} 
%&\geq  \BigP{1 -  \frac{\tilde{\eta} \mu_F}{2}}^{-T} \frac{1 - \BigP{1 -  \frac{\tilde{\eta} \mu_F}{2}}^{T}}{\tilde{\eta} \mu_F/2} \\
%&\geq \BigP{1 -  \frac{\tilde{\eta} \mu_F}{2}}^{-T} \frac{1}{ \tilde{\eta} \mu_F} \\
%&=   \frac{a_{T-1}}{ \tilde{\eta} \mu_F}, 
\end{align*}
which implies
\begin{align*}
\frac{ a_{T-1}}{ \tilde{\eta} \mu_F}    \leq A_T  \leq \frac{2 a_{T-1}}{ \tilde{\eta} \mu_F}, 
\end{align*}
where the first inequality is due to the fact that $\BigP{1 -  \frac{\tilde{\eta} \mu_F}{2}}^T \leq \exp(- \tilde{\eta} \mu_F T /2) \leq \exp(-1) \leq 1/2$ by setting $\tilde{\eta} T \geq \frac{2}{ \mu_F}$ and the second inequality is due to $1 - \BigP{1 -  \frac{\tilde{\eta} \mu_F}{2}}^{T} \leq 1$; thus we have $ \frac{\alpha_{T-1}}{\tilde{\eta} A_T}  \geq   \frac{\mu_F}{2} $  and  $\frac{1}{\tilde{\eta} A_T} \leq  \mu_F \BigP{1 -  \frac{\tilde{\eta} \mu_F}{2}}^{T} \leq  \mu_F e^{-\tilde{\eta} \mu_F T/2 }. $
%\begin{align}
%\tilde{\eta} T \geq \frac{2}{ \mu_F} \label{E:T1}.
%\end{align}
%Thus we have $ \frac{\alpha_{T-1}}{\tilde{\eta} A_T}  \geq  2 \mu_F $  and thus $\frac{1}{\tilde{\eta} A_T} \leq  \mu_F \BigP{1 -  \frac{\tilde{\eta} \mu_F}{2}}^{T} \leq  \mu_F e^{-\tilde{\eta} \mu_F T/2 }. $

Due to the convexity of $F(\cdot)$,  \eqref{E:telescope2} implies 
\begin{align}
	\Ex{F\BigP{\sum\nolimits_{t=0}^{T-1} \frac{  \alpha_{t} } {A_T} w_t}} - F(w^*) +  \frac{\mu_F}{2} \Ex{\Delta_{T} } \leq   \mu_F  \Delta_{0} e^{- \tilde{\eta} \mu_F T /2}  + \frac{\tilde{\eta}^2}{\beta^2} C_3 + \tilde{\eta} C_2 + C_1 \label{E:eta_function}
\end{align}
which implies
\begin{align}
 \Ex{F\nbigP{\bar{w}_T} - F(w^*)} &\leq      \mu_F  \Delta_{0} e^{- \tilde{\eta} \mu_F T /2}  + \frac{\tilde{\eta}^2}{\beta^2} C_3 + \tilde{\eta}  C_2 + C_1.  \label{E:eta_function'}
\end{align}
	
Next, using the techniques in \cite{arjevani_tight_2018, stich_unified_2019, karimireddy_scaffold_2020}, we consider following cases:

\begin{itemize}
	\item If $ \hat{\eta}_1 \geq    \max\BigC{\frac{2 \ln \bigP{{ \mu_F^2 \Delta_{0}  T /2 C_2 }} }{\mu_F T}, \frac{2}{\mu_F T}} \eqdef \eta'$, then we choose $\tilde{\eta} = \eta'$; thus, having
	\begin{align*}
	\Ex{F\nbigP{\bar{w}_T} - F(w^*)} &\leq      \mu_F  \Delta_{0} e^{- \ln \bigP{{ \mu_F^2 \Delta_{0}  T /2 C_2 }} }  +  {\eta'}  C_2 + \frac{{\eta'}^2}{\beta^2} C_3 +  C_1  \nonumber\\
	&\leq \tilde{\mathcal{O} } \biggP{\frac{ C_2}{T \mu_F}} + \tilde{\mathcal{O}} \biggP{\frac{C_3}{  T^2 \beta^2 \mu_F^2}} + C_1. %\mathcal{O}  \biggP{\frac{ \lambda^2 \delta^2}{\mu_F}   } 
	\end{align*}
	\item If $ \frac{2}{\mu_F T} \leq \hat{\eta}_1 \leq  \frac{2 \ln \bigP{{\mu_F^2 \Delta_{0}  T /2 C_2 }} }{\mu_F T }$, then we choose $\tilde{\eta} = \hat{\eta}_1$; thus, having
	\begin{align*}
	\Ex{F\nbigP{\bar{w}_T} - F(w^*)} &\leq    \mu_F  \Delta_{0} e^{-  \hat{\eta}_1 \mu_F T/ 2 }  + \tilde{\mathcal{O} } \biggP{\frac{ C_2}{ T \mu_F}} + \tilde{\mathcal{O}} \biggP{\frac{ C_3}{ T^2 \beta^2\mu_F^2} } + C_1. %\mathcal{O}  \biggP{\frac{ \lambda^2 \delta^2}{\mu_F}   } 
	\end{align*}
\end{itemize}	
Combining two cases, we obtain
\begin{align*}
&\Ex{F\nbigP{\bar{w}_T} - F(w^*)} \leq  \mathcal{O}\bigP{\Ex{F\nbigP{\bar{w}_T} - F(w^*)} } \defeq \nonumber \\ 
&\mathcal{O} \BigP{\Delta_{0} \mu_F e^{- \hat{\eta}_1 \mu_F T/2 }}  + \tilde{\mathcal{O} } \biggP{\frac{ (N/S-1)\sigma_{F,1}^2}{\mu_F   TN }}   + \tilde{\mathcal{O}} \biggP{\frac{( R\sigma_{F,1}^2 + \delta^2  \lambda^2) \kappa_F}{ R \nbigP{T \beta  \mu_F}^2}} +\mathcal{O} \biggP{\frac{  \lambda^2 \delta^2}{\mu_F}}, 
\end{align*}
which finishes the proof of part (a).  We next prove part (b) as follows	
	\begin{align*}
		&\Ex{\norm[\big]{\tilde{ \theta}_{i}^T(w_T) - w^{*}}^2 } \\
		&\quad \leq 3 \, \Ex{\norm[\big]{\tilde{ \theta}_{i}^T(w_T) - \hat{\theta}_{i}^{T}(w_T)}^2 + \norm[\big]{\hat{\theta}_{i}^{T}(w_T) - w_T}^2 + \norm*{w_T - w^*}^2} \\
		&\quad \leq 3 \BigP{\delta^2 + \frac{1}{\lambda^2} \Ex{\norm[\big]{ \nabla F_i (w_T)}^2} + \Ex{\norm*{w_T - w^*}^2}}\\
		&\quad \leq 3 \BigP{\delta^2 + \frac{2}{\lambda^2} \Ex{ \norm[\big]{ \nabla F_i (w_T) - \nabla F_i (w^*)}^2  + \norm[\big]{\nabla F_i (w^*)}^2} + \Ex{\norm*{w_T - w^*}^2}} \\
		&\quad  \leq 3 \BigP{\delta^2 + 3 \Ex{\norm*{w_T - w^*}^2}  + \frac{2}{\lambda^2} \norm[\big]{\nabla F_i (w^*)}^2}, 
	\end{align*}
	where the last inequality is due to smoothness of $F_i$ with $L_F = \lambda$ according to Proposition~\ref{pro:1}. Take the average over $N$ clients, we have
	\begin{align*}
	\frac{1}{N} \sum_{i=1}^{N} \mathbb{E} \BigS{\norm[\big]{\tilde{ \theta}_{i}^T(w_T) - w^{*}}^2 } 
	&\leq 9 \, \Ex{\norm*{w_T - w^*}^2}   +  \frac{6 \sigma_{F,1}^2}{\lambda^2} + 3 \delta^2 \\
	&\leq \frac{1}{\mu_F}\mathcal{O}\bigP{\Ex{F\nbigP{\bar{w}_T} - F(w^*)} } + \mathcal{O}  \biggP{\frac{\sigma_{F,1}^2}{\lambda^2} + \delta^2 }, 
	\end{align*}
where the last inequality is by using \eqref{E:eta_function} and  \eqref{E:eta_function'}, we can easily obtain
\begin{align*}
\Ex{ \norm[\big]{w_T - w^*}^2}  &\leq \frac{2}{\mu_F}\biggP{ \mu_F\Delta_{0} e^{- \tilde{\eta} \mu_F T /2}  + \frac{\tilde{\eta}^2}{\beta^2} C_3 + \tilde{\eta}  C_2 + C_1 } \\
& = \frac{1}{\mu_F} \mathcal{O}\bigP{\Ex{F\nbigP{\bar{w}_T} - F(w^*)} }. 
%	\implies \Ex{ \norm[\big]{w_T - w^*}^2} &\leq   \tilde{\mathcal{O} }\biggP { \Delta_{0} e^{- \hat{\eta}_1 \mu_F TR }  +  {\frac{R C_2}{\mu_F^2   TR }} +  \frac{R^2 C_3}{ {\mu_F^3T^2R^2 }} } + \frac{C_1}{\mu_F} \\
%	&\leq   \mathcal{O} \bigP{ e^{- \hat{\eta}_1 \mu_F TR }} \Delta_{0} + \tilde{\mathcal{O} } \biggP{\frac{R (1-S/N)}{\mu_F^2   TRS }}\sigma_{F,1}^2  \nonumber \\
%	&\qquad + \tilde{\mathcal{O}} \biggP{\frac{R^2( \sigma_{F,1}^2 + \delta^2  \lambda^2)}{ \mu_F^5 \nbigP{TR }^2}} +\mathcal{O} \biggP{\frac{  \lambda^2 }{\mu_F^2}} \delta^2
\end{align*}
\end{proof}

\subsection{Theorem~\ref{Th:2}}
\begin{proof}%[Proof of Theorem~\ref{Th:2}]
	We first prove part (a). Due to the $L_F$-smoothness of $F(\cdot)$, we have
	\begin{align}
	& \Ex{F(w_{t+1}) - F(w_t)}  \nonumber \\
	&\leq  \Ex{\innProd[\big] {\nabla F(w_t), w_{t+1} - w_t }} + \frac{L_F}{2} \Ex{\norm{w_{t+1} - w_t}^2} \nonumber\\
	& =  - {\tilde{\eta}} \Ex{\innProd[\big]{\nabla F(w_t), g_t }} + \frac{\tilde{\eta}^2 L_F}{2}  \Ex{\norm{g_t}^2} \nonumber \\
	& = - {\tilde{\eta}} \Ex{\norm{\nabla F(w_t)}^2} - {\tilde{\eta}} \Ex{\innProd[\big]{\nabla F(w_t), g_t - \nabla F(w_t)}} + \frac{\tilde{\eta}^2 L_F}{2}  \Ex{\norm{g_t}^2} \nonumber \\
	& \leq  -  \tilde{\eta} \Ex{\norm{\nabla F(w_t)}^2} + \frac{{\tilde{\eta}}}{2} \Ex{\norm{\nabla F(w_t)}^2} +  \frac{{\tilde{\eta}}}{2} \mathbb{E}\, \norm[\bigg]{\frac{1}{NR} \sum_{i, r}^{N, R} g_{i, r}^t - \nabla F_i(w_t)}^2 + \frac{\tilde{\eta}^2 L_F}{2}  \Ex{\norm{g_t}^2} \label{E:thm2_nonconvex1}\\
	& \leq  -  \frac{{\tilde{\eta} }}{2} \Ex{\norm{\nabla F(w_t)}^2} + \frac{{ 3 L_F \tilde{\eta}^2}}{2} \mathbb{E} \, \norm[\Big]{\frac{1}{S} \sum\nolimits_{i \in \mathcal{S}^t} \nabla F_i(w_t) - \nabla F (w_t)}^2   \nonumber\\
			&\qquad + \frac{\tilde{\eta} \bigP{1 +   3 L_F \tilde{\eta}}}{2} \frac{1}{NR}   \sum_{i, r}^{N, R}  \Ex{\norm[\big]{ g_{i, r}^t - \nabla F_i(w_t)}^2} + \frac{3 \tilde{\eta}^2 L_F}{2}  \Ex{\norm{\nabla F(w_t)}^2} \label{E:thm2_nonconvex2}\\
	& \leq  -  \frac{{\tilde{\eta} \nbigP{1 - 3 L_F \tilde{\eta}}}}{2} \Ex{\norm{\nabla F(w_t)}^2} + \frac{{ 3 L_F \tilde{\eta}^2}}{2} \frac{N/S-1}{N - 1}  \sum_{i=1}^{N} \frac{1}{N}  \Ex{\norm[\big]{\nabla F_i(w_t) - \nabla F (w_t)}^{2}}   \nonumber \\
			&\qquad + \frac{\tilde{\eta} \bigP{1 +   3 L_F \tilde{\eta}}}{2} \biggS{ 2 \lambda^2 \delta^2  +  \frac{16\tilde{\eta}^2 L_F^2}{\beta^2}  \BigP{{\frac{2 \lambda^2 \delta^2}{R} +  3 \sum_{i=1}^{N} \frac{1}{N} \Ex{\norm{\nabla F_i (w_t)- 	\nabla F(w_t)}^2}  + 3\Ex{\norm{\nabla F(w_t)}^2}} } } \label{E:thm2_nonconvex3}\\
	& \leq  -  \frac{{\tilde{\eta} \nbigP{1 - 3 L_F \tilde{\eta}}}}{2} \Ex{\norm{\nabla F(w_t)}^2 }+ \frac{{ 3 L_F \tilde{\eta}^2}}{2} \frac{N/S-1}{N - 1}  \biggP{\sigma_{F,2}^2 + \frac{8 L^2}{\lambda^2 - 8 L^2} \Ex{\norm{\nabla F(w_t)}^2} } \nonumber \\
	&\qquad + \frac{\tilde{\eta} \bigP{1 +   3 L_F \tilde{\eta}}}{2} \biggS{ 2 \lambda^2  \delta^2 +     \frac{16\tilde{\eta}^2  L_F^2}{\beta^2}  \biggP{\frac{2 \lambda^2  \delta^2}{R}  + 3\sigma_{F,2}^2 + \frac{3 \lambda^2}{\lambda^2 - 8 L^2} \Ex{\norm{\nabla F(w_t)}^2} } } \label{E:thm2_nonconvex4}\\
	&= -  \frac{{\tilde{\eta} \nbigP{1 - 3 L_F \tilde{\eta}}}}{2} \Ex{\norm{\nabla F(w_t)}^2}  +\tilde{\eta}^2 L_F \biggP{\frac{12  L^2}{\lambda^2 - 8 L^2} \frac{N/S-1}{N - 1} +  \frac{ 24 \tilde{\eta} \bigP{1 +   3 L_F \tilde{\eta}}  \lambda^2 L_F }{\beta^2(\lambda^2 - 8 L^2)}}  \Ex{\norm{\nabla F(w_t)}^2}  \nonumber\\
	&\qquad +   \frac{\tilde{\eta}^3}{\beta^2}  \bigP{1 +   3 L_F \tilde{\eta}} \frac{8\nbigP{3R\sigma_{F,2}^2 + 2\delta^2 \lambda^2}}{R}  + \tilde{\eta}^2 \sigma_{F,2}^2 \biggP{\frac{{ 3 L_F }}{2} \frac{N/S-1}{N - 1}} + \tilde{\eta}  \bigP{1 +   3 L_F \tilde{\eta}} \lambda^2 \delta^2  \label{E:thm2_nonconvex5}\\
	&\leq -  \tilde{\eta} \biggS{\underbrace{1 - \tilde{\eta} L_F \biggP{\frac{3}{2} + \frac{12  L^2}{\lambda^2 - 8 L^2} \frac{N/S-1}{N - 1}  + \frac{36  \lambda^2  }{  \lambda^2 - 8 L^2} }}_{\geq 1/2 \text{ when } \tilde{\eta} \text{ satisfied \eqref{E:eta2}}}}\Ex{\norm{\nabla F(w_t)}^2} \nonumber \\
	&\qquad +  \frac{\tilde{\eta}^3 }{\beta^2} \bigP{1 +   3 L_F \tilde{\eta}}  {\frac{8 \nbigP{3R\sigma_{F,2}^2 + 2\delta^2 \lambda^2}}{R}} + \, \tilde{\eta}^2 {\frac{3 L_F \sigma_{F,2}^2}{2}  \frac{N/S-1 }{N-1} } + \tilde{\eta} \bigP{1 +   3 L_F \tilde{\eta}}  { \lambda^2 \delta^2  } \label{E:thm2_nonconvex6}\\
	&\leq  -  \frac{{\tilde{\eta} }}{2} \norm{\nabla F(w_t)}^2 + +  \frac{\tilde{\eta}^3 }{\beta^2}  \underbrace{\frac{16 \nbigP{3R\sigma_{F,2}^2 + 2\delta^2 \lambda^2}}{R}}_{\eqdef C_4} + \, \tilde{\eta}^2 \underbrace{\frac{3 L_F \sigma_{F,2}^2}{2}  \frac{N/S-1 }{N-1} }_{\eqdef  C_5} + \tilde{\eta} \underbrace{ 2 \lambda^2 \delta^2  }_{\eqdef C_{6}} \label{E:thm2_nonconvex7}  
	\end{align}
	where \eqref{E:thm2_nonconvex1} is due to Cauchy-Swartz and AM-GM inequalities, \eqref{E:thm2_nonconvex2} is by decomposing $\norm{g_t}^2$ into three terms according to \eqref{E:decompose_g}, and \eqref{E:thm2_nonconvex3} is by using Lemmas~\ref{lem:5} and \ref{lem:6}, and the fact that $\mathbb{E}\bigS{\norm{X}^{2}}=\mathbb{E}\bigS{\norm{X-\mathbb{E}[X]}^2}+\mathbb{E}[\norm{X}]^{2}$ for any vector of random variable $X$. We have \eqref{E:thm2_nonconvex4}  by Lemma~\ref{lem:hete}, and \eqref{E:thm2_nonconvex5} by re-arranging the terms, and \eqref{E:thm2_nonconvex6} by having $1 + 3L_F \tilde{\eta} \leq 1 + \frac{3\beta}{2}\leq 3 \beta $  when  $\tilde{\eta} \leq \frac{\beta}{2L_F}$ according to Lemma~\ref{lem:6} and $\beta \geq 1$. 
	 Finally, we have  \eqref{E:thm2_nonconvex7} by using the condition $ \lambda^2 - 8 L^2\geq 1$  and the fact that $\frac{N/S -1 }{N-1} \leq  1$  for the following
	\begin{align*}
	&L_F \biggP{\frac{3}{2} + \frac{12  L^2}{\lambda^2 - 8 L^2} \frac{N/S-1}{N - 1}  + \frac{36  \lambda^2  }{  \lambda^2 - 8 L^2} } \leq 
	\frac{L_F}{2}\BigP{3 + {24  L^2}  + {72  \lambda^2  } }\leq \frac{L_F}{2}\BigP{{75  \lambda^2  } }
	\end{align*}
to get
	\begin{align*}
	1 - \tilde{\eta} L_F \biggP{\frac{3}{2} + \frac{12  L^2}{\lambda^2 - 8 L^2} \frac{N/S-1}{N - 1}  + \frac{36  \lambda^2  }{  \lambda^2 - 8 L^2} } \geq 1- \frac{75 \tilde{\eta} L_F \lambda^2}{2} \geq \frac{1	}{2}
	\end{align*}
with the condition
	\begin{align}
		 \tilde{\eta} \leq  \frac{1}{ 75 L_F \lambda^2 } \eqdef \hat{\eta}_2,    \label{E:eta2} 
		 %\tilde{\eta} \leq  \frac{1}{2 L_F} \biggP{\frac{3}{2} + \frac{12  L^2}{\lambda^2 - 8 L^2} \frac{N/S-1}{N - 1}  + \frac{36  \lambda^2  }{  \lambda^2 - 8 L^2}}^{-1} \eqdef \hat{\eta}_2.   \label{E:eta2}
	\end{align}
which also implies $1 + 3L_F \tilde{\eta} \leq 1 + \frac{1}{25 \lambda^2}\leq 2 $. 

We note that $\hat{\eta}_2 \leq \frac{\beta}{2 L_F}$ with $\beta \geq 1$. By re-arranging the terms of \eqref{E:thm2_nonconvex7} and telescoping, we have
	\begin{align}
	\frac{1}{2T} \SumLim{t=0}{T-1} \Ex{\norm{\nabla F(w_t)}^2} &\leq \frac{\Ex{F(w^0) - F(w_T)}}{\tilde{\eta} T} +  \frac{\tilde{\eta}^2}{\beta^2}C_{4}  + \tilde{\eta} C_{5} +  C_{6}  \label{E:eta_function2}.
	\end{align}
	Defining $\Delta_F \defeq F(w^0) - F^*$, and following the techniques used by \cite{arjevani_tight_2018, stich_unified_2019, karimireddy_scaffold_2020}, we consider two cases:
	\begin{itemize}
		\item If $ \hat{\eta}_2^3  \geq  \frac{\beta^2 \Delta_F }{ T C_4} $ or $ \hat{\eta}_2^2 \geq  \frac{\Delta_F }{T C_5}$, then we choose $\tilde{\eta} = \min \BigC{\BigP{\frac{\beta^2 \Delta_F }{T C_4}}^{\frac{1}{3}},  \BigP{\frac{\Delta_F }{T C_5}}^{\frac{1}{2}}}$; thus, having
		\begin{align*}
		\frac{1}{2T} \SumLim{t=1}{T-1} \Ex{\norm{\nabla F(w_t)}^2} \leq   \frac{(\Delta_F  )^{2/3} {C_4}^{1/3} }{ (\beta^2 T)^{2/3}} + \frac{ \nbigP{ \Delta_F  C_5 }^{1/2}}{ \sqrt{T} } +  C_6.
		\end{align*}
		\item If $ \hat{\eta}_2^3 \leq  \frac{\beta^2 \Delta_F }{  T C_4} $ and $ \hat{\eta}_2^2 \leq  \frac{\Delta_F }{T C_5}$, then we choose $\tilde{\eta} = \hat{\eta}_2 $. We have
		\begin{align*}
		\frac{1}{2T} \SumLim{t=0}{T-1} \Ex{\norm{\nabla F(w_t)}^2 }\leq \frac{\Delta_F }{\hat{\eta}_2 T} + \frac{(\Delta_F  )^{2/3}  \nbigP{C_4}^{1/3} }{ (\beta^2 T)^{2/3}} + \frac{\bigP{\Delta_F C_5}^{1/2} }{\sqrt{T}} + C_6. 
		\end{align*}
	\end{itemize}	
Combining two cases, and with $t^*$ uniformly sampled from $\{0, \ldots, T-1,\}$ we have
\begin{align*}
%	\frac{1}{T} \SumLim{t=0}{T-1} \norm{\nabla F(w_t)}^2 &\leq \mathcal{O}\biggP{\frac{ 1}{\hat{\eta}_2 T}}\Delta_F  + \mathcal{O}\biggP{\frac{(\Delta_F )^{2/3}  \bigP{\nbigP{1 +    \beta } \nbigP{\sigma_{F,2}^2 + \lambda^2 \delta^2}}^{1/3} }{ (\beta^2 T)^{2/3}} } \nonumber \\
%	&+  \mathcal{O}\biggP{\frac{ \bigS{\Delta_F  L_F \bigP{ \sigma_{F,2}^2  \nbigP{N/S-1} + N \lambda^2 \delta^2}}^{{1}/{2}} }{\sqrt{T N}} } +  \mathcal{O} \nbigP{\lambda^2  } \delta^2,  \\
&\frac{1}{T} \SumLim{t=0}{T-1} \Ex{\norm{\nabla F(w_t)}^2} = \Ex{\norm{\nabla F(w_{t^*})}^2} \leq \mathcal{O}\BigP{ \Ex{\norm{\nabla F(w_{t^*})}^2}} \defeq\nonumber \\
&\mathcal{O} \biggP{ \frac{ \Delta_F}{\hat{\eta}_2 T} + \frac{\nbigP{\Delta_F} ^{\frac{2}{3}}  \bigP{{R\sigma_{F,2}^2 + \lambda^2 \delta^2}}^{\frac{1}{3}} }{\beta^{\frac{4}{3}} R^{\frac{1}{3}} T^{\frac{2}{3}}} + \frac{ \bigP{\Delta_F  L_F { \sigma_{F,2}^2  \nbigP{N/S-1} }}^{\frac{1}{2}} }{\sqrt{T N}} + \lambda^2 \delta^2}
\end{align*}
which proves the first part of Theorem~\ref{Th:2}. 

We next prove part (b) as follows
	\begin{align}
	\frac{1}{N}  \SumLim{i = 1}{N} \Ex{\norm{\tilde{\theta}_{i}^t(w_t) - w_t}^2} &\leq \frac{1}{N}  \SumLim{i = 1}{N} 2 \Ex{\norm{\tilde{\theta}_{i}^t(w_t) - \hat{\theta}_{i}^t}^2 +  \norm{\hat{\theta}_{i}^t(w_t) - w_t}^2} \nonumber\\
	&\leq 2 \delta^2 +  \frac{2}{N}   \SumLim{i=1}{N} \frac{  \Ex{\norm{\nabla F_i(w_t)}^2}}{\lambda^2} \nonumber\\
	&\leq  2 \delta^2 + \frac{2}{\lambda^2 - 8 L^2}  \Ex{\norm{\nabla F(w_t)}^2} + {\frac{2\sigma_{F,2}^2}{\lambda^2}}, \label{E:non1}
	\end{align}
	where the first  inequality is due to Proposition~\eqref{prop:existing2} and the  third inequality is by using the fact that $\mathbb{E}\bigS{\norm{X}^{2}}=\mathbb{E}\bigS{\norm{X-\mathbb{E}[X]}^2}+\mathbb{E}[\norm{X}]^{2}$ for any vector of random variable $X$, we have
		\begin{align*}
	\frac{1}{N} \SumLim{i=1}{N} \Ex{\norm{\nabla F_i(w_t)}^2} &=  \sum_{i=1}^{N} \frac{1 }{ N} \BigP{ \Ex{\norm{\nabla F_i (w_t)- 	\nabla F(w_t)}^2}  + \Ex{\norm{\nabla F(w_t)}^2}}  \\
	&\leq \sigma_{F,2}^2 + \frac{\lambda^2}{\lambda^2 - 8 L^2} \Ex{\norm{\nabla F(w_t)}^2 }. 
	\end{align*}
	
Summing \eqref{E:non1} from $t=0$ to $T$, we get
\begin{align*}
\frac{1}{T N}  \SumLim{i = 0}{T-1}  \SumLim{i = 1}{N} \Ex{\norm{\tilde{\theta}_{i}^t - w_t}^2} &\leq  \frac{2}{\lambda^2 - 8 L^2} \frac{1}{T }  \SumLim{i = 0}{T-1}  \Ex{\norm{\nabla F(w_t)}^2} + 2 \delta^2 + {\frac{2\sigma_{F,2}^2}{\lambda^2}}, 
\end{align*}
and with $t^*$ uniformly sampled from $\{0, \ldots, T-1\}$, we finish the proof. 
\end{proof}

\newpage

\section*{Broader Impact}

% 1. Highlights uncercainty
% 2. Focus on tractable, neglected, and significant impacts.

%Q1 — Applications — How could your research affect ML applications?
%Q2 — Implications — What are the societal implications of these applications?
%Q3 — Initiatives — What research or other initiatives could improve societal outcomes? 

%The foremost objective of Federated Learning (FL) is to allow user data privacy to be preserved while a strong and robust machine learning system can still be generated. The method proposed in this paper, \OurAlg, contributes to FL by tackling the consequences of data heterogeneity, ensuring that the global model is adapted to and perform well in the clients' context. As a result, it has shown some significant improvements from current personalized approaches in terms of convergence rate and local accuracy. In addition, the Moreau envelope function used at personalized level comes with a strong theoretical background, facilitating the development of convergence rate with respect to its hyperparameters.

There have been numerous applications of FL in practice. One notable commercial FL usage, which has proved successful in recent years, is in the next-character prediction task on mobile devices. However, we believe this technology promises many more breakthroughs in a number of fields in the near future with the help of personalized FL models. In health care, for example, common causes of a disease can be identified from many patients without the need to have access to their raw data. The development of capable personalized models helps build better predictors on patients' conditions, allowing for faster, more efficient diagnosis and treatment.

As much as FL promises, it also comes with a number of challenges. First, an important societal requirement when deploying such technique is that the server must explain which clients' data will be participated and which will not. The explainability and interpretability of a system are necessary for the sake of public understanding and making informed consent. Second, to successfully preserve privacy, FL has to overcome  malicious actors who possibly interfere in the training process during communication. The malicious behaviors include stealing personalized models from the server, perform adversarial attacks such as changing a personalized model on some examples while remaining a good performance on average, and attempt to alter the model. Finally, an effective and unbiased FL system must be aware that data and computational power among clients can be extremely uneven in practice and, therefore, must ensure that the contribution of each client to the global model is adjusted to its level of distribution. These challenges help necessitate future research in decentralized learning in general and personalized FL in particular.

\begin{ack}
 Tuan Dung Nguyen's work was supported by the Faculty of Engineering Scholarship at the University of Sydney.
%
%Do {\bf not} include this section in the anonymized submission, only in the final paper. You can use the \texttt{ack} environment provided in the style file to autmoatically hide this section in the anonymized submission.
\end{ack}

\small

\bibliographystyle{IEEEtran.bst}
\bibliography{bib_nips2020}

\end{document}